\newtheorem{theorem}{Theorem}
\newtheorem{lemma}[theorem]{Lemma}
\newtheorem{corollary[theorem]}{Corollary}
\newtheorem{remark}{Remark}
\newtheorem{assumption}{Assumption}
\newtheorem*{assumption*}{Assumption}
\let\oldremark\remark
\renewcommand{\remark}{\oldremark\normalfont}
\DeclareMathOperator*{\argmax}{argmax}
\newcommand{\red}[1]{\textcolor{red}{#1}}
\definecolor{kh}{HTML}{FFA500}
\definecolor{yz}{HTML}{6495ED}
\definecolor{yellow}{HTML}{F3E5AB}
\newcommand{\kibitz}[2]{\ifnum\Comments=1\textcolor{#1}{#2}\fi}
\title{Reinforcement Learning for Infinite-Horizon Average-Reward Linear MDPs via Approximation by Discounted-Reward MDPs}
\author{%
  Kihyuk Hong \\
  University of Michigan\\
  \texttt{kihyukh@umich.edu} \\
  \And
  Woojin Chae \\
  KAIST \\
  \texttt{woojeeny02@kaist.ac.kr} \\
  \And
  Yufan Zhang \\
  University of Michigan \\
  \texttt{yufanzh@umich.edu} \\
  \AND
  Dabeen Lee \\
  KAIST\\
  \texttt{dabeenl@kaist.ac.kr} \\
  \And
  Ambuj Tewari \\
  University of Michigan \\
  \texttt{tewaria@umich.edu} \\
}
\begin{document}

\maketitle

\begin{abstract}
We study the problem of infinite-horizon average-reward reinforcement learning with linear Markov decision processes (MDPs).
The associated Bellman operator of the problem not being a contraction makes the algorithm design challenging.
Previous approaches either suffer from computational inefficiency or require strong assumptions on dynamics, such as ergodicity, for achieving a regret bound of $\widetilde{\mathcal{O}}(\sqrt{T})$.
In this paper, we propose the first algorithm that achieves $\widetilde{\mathcal{O}}(\sqrt{T})$ regret with computational complexity polynomial in the problem parameters, without making strong assumptions on dynamics.
Our approach approximates the average-reward setting by a discounted MDP with a carefully chosen discounting factor, and then applies an optimistic value iteration.
We propose an algorithmic structure that plans for a nonstationary policy through optimistic value iteration and follows that policy until a specified information metric in the collected data doubles. 
Additionally, we introduce a value function clipping procedure for limiting the span of the value function for sample efficiency.
\end{abstract}

\section{Introduction}

Reinforcement learning (RL) in the infinite-horizon average-reward setting aims to learn a policy that maximizes the average reward in the long run.
This setting is relevant for applications where the interaction between the agent and environment does not terminate and continues indefinitely, such as in network routing \parencite{mammeri2019reinforcement} or inventory management \parencite{giannoccaro2002inventory}.
Designing algorithms in this setting is challenging because the associated Bellman operator is not a contraction.
This complicates the use of optimistic value iteration-based algorithms, which add bonus terms to the value function every iteration, a widely used approach in the finite-horizon episodic~\parencite{jin2020provably} and infinite-horizon discounted settings~\parencite{he2021nearly}.

The seminal work of \parencite{jaksch2010near} adopts a model-based approach in the tabular setting that constructs a confidence set on the transition model.
Their algorithm runs optimistic value iterations by choosing an optimistic model from the confidence set at each iteration.
Most work in the tabular setting follows this approach of constructing confidence sets for the model \parencite{bartlett2009regal,fruit2018efficient}.
Other work~\parencite{wei2020model} that does not use this approach either has suboptimal regret bound or assumes the ergodicity assumption.

Adapting the approach of constructing confidence set on the transition model to function approximation settings, such as linear MDP setting, with arbitrarily large state space is challenging because sample-efficient model estimation is elusive in general unless additional assumptions on the transition model are made.
Previous work on infinite-horizon average-reward RL with function approximation either imposes ergodicity assumptions~\parencite{wei2021learning} or uses computationally inefficient algorithms \parencite{wei2021learning,he2024sample} with time complexity exponential in problem parameters to achieve $\widetilde{\mathcal{O}}(\sqrt{T})$ regret.
The following question remains open:
\begin{quote}
\textit{Does there exist a polynomial-time algorithm for infinite-horizon average-reward linear MDPs that achieves $\widetilde{\mathcal{O}}(\sqrt{T})$ regret without requiring the ergodicity assumption?}
\end{quote}

In this paper, we answer the question in the affirmative.
We draw insights from a recent line of work~\parencite{wei2020model,zhang2023sharper} that employs the technique of approximating infinite-horizon average-reward RL by a discounted MDP in the tabular setting.
We apply the discounted setting approximation idea to the linear MDP setting and design an optimistic value iteration algorithm that achieves $\widetilde{\mathcal{O}}(\text{sp}(v^\ast) \sqrt{d^3 T})$ regret without making the ergodicity assumption.
A key component of our method is the use of value function clipping, coupled with a novel value iteration scheme to ensure efficient learning in this setting.

The rest of the paper is organized as follows.
In Section~\ref{section:preliminaries}, we formally define infinite-horizon average-reward setting and discounted setting.
In Section~\ref{section:tabular}, we introduce a simple value iteration based algorithm for the tabular setting that approximates the average-reward setting by the discounted setting.
In Section~\ref{section:linear}, we adapt the algorithm design to the linear MDP setting, and addresses a unique challenge in this setting by proposing a novel algorithm structure.

\begin{table*}[t]
\caption{Comparison of algorithms for infinite-horizon average-reward linear MDP}
\label{table:comparison}
\centering
\begin{tabular}{cccc}
 \toprule
 Algorithm & Regret $\widetilde{\mathcal{O}}(\cdot)$ & Assumption & Computation $\text{poly}(\cdot)$ \\
 \midrule
 FOPO \parencite{wei2021learning} & $\text{sp}(v^\ast) \sqrt{d^3 T}$ & Bellman optimality equation & \red{$T^d, A, d$} \\
 OLSVI.FH \parencite{wei2021learning} & \red{$\sqrt{\text{sp}(v^\ast) }(d T)^{\frac{3}{4}}$} & Bellman optimality equation & $T, A, d$ \\
 LOOP \parencite{he2024sample} & $\sqrt{\text{sp}(v^\ast)^3 d^3 T}$ & Bellman optimality equation & \red{$T^d, A, d$} \\
 MDP-EXP2 \parencite{wei2021learning} & $d \sqrt{t_{\text{mix}}^3 T}$ & \red{Uniform mixing} & $T, A, d$ \\
 \textbf{$\gamma$-LSCVI-UCB (Ours)} & $\text{sp}(v^\ast) \sqrt{d^3 T}$ & Bellman optimality equation & $T, S, A, d$ \\
 \midrule
 Lower Bound \parencite{wu2022nearly} & $\Omega(d \sqrt{\text{sp}(v^\ast) T})$ & & \\
 \bottomrule
\end{tabular}
\end{table*}

\subsection{Related Work}
A comparison of our work to previous work on infinite-horizon average-reward linear MDPs is shown in Table~\ref{table:comparison}.
Entries highlighted in red indicate suboptimality compared to our algorithm.
Our algorithm is the first to achieve $\widetilde{\mathcal{O}}(\sqrt{T})$ regret with computation complexity polynomial in the parameters $d, S, A, T$ without making the ergodicity assumption.
Note that the uniform mixing assumption required for MDP-EXP2~\parencite{wei2021learning} is a stronger assumption than the ergodicity assumption.
Our regret matches that of FOPO~\parencite{wei2021learning}, an algorithm that requires solving a computationally intractable optimization problem for finding optimistic value function.
A brute-force approach for solving their optimization problem requires computations polynomial in $T^d$, which is exponential in $d$.
In contrast, our algorithm's complexity is polynomial in $d$.
However, it depends polynomially on the size of the state space $S$, whereas the complexity of FOPO does not depend on $S$.
This implies our algorithm is an improvement over FOPO in the regime where $S \ll T^d$.
We leave the problem of getting rid of the dependence on $S$ for future work.
Additional comparisons in the tabular setting, as well as a broader discussion of related work, can be found in Appendix~\ref{section:additional-related-work}.

\paragraph{Approximation of Average-Reward Setting by Discounted Setting}
The technique of approximating the average-reward setting to the discounted setting is used by
\textcite{jin2021towards,wang2022near,zurek2023span,wang2023optimal} to solve the sample complexity problem of producing a nearly optimal policy given access to a simulator in the tabular setting.
\textcite{wei2020model} use the reduction for the online RL setting with tabular MDPs. They propose a Q-learning based algorithm, but has $\widetilde{\mathcal{O}}(T^{2/3})$ regret.
\textcite{zhang2023sharper} also use the reduction for the online RL setting with tabular MDPs. Their algorithm is also Q-learning based, but they improve the regret to $\widetilde{\mathcal{O}}(\sqrt{T})$ by introducing a novel method for estimating the span.

\paragraph{Infinite-Horizon Average-Reward Setting with Linear Mixture MDPs}
The linear mixture MDP setting is closely related to the linear MDP setting in that the Bellman operator admits a compact representation.
However, linear mixture MDP parameterizes the probability transition model such that the transition probability is linear in the low-dimensional feature representation of state-action-state triplets.
Such a structure allows a sample efficient estimation of the model, enabling the design of an optimism-based algorithm using a confidence set on the model, much like the model-based approach for the tabular setting.
\textcite{wu2022nearly} and \textcite{ayoub2020model} design an optimism-based algorithm using a confidence set under the assumption that MDP is communicating.
\textcite{chae2024learning} use the method of reducing the average-reward setting to the discounted setting, and achieve a nearly minimax optimal regret bound under a weaker assumption on the MDP.

\section{Preliminaries} \label{section:preliminaries}

\paragraph{Notations}
Let $\Vert \bm{x} \Vert_A = \sqrt{x^T A x}$ for $\bm{x} \in \mathbb{R}^d$ and a psd matrix $A \in \mathbb{R}^{d \times d}$.
Let $a \vee b = \max \{ a, b \}$ and $a \wedge b = \min\{a, b \}$.
Let $\Delta(\mathcal{X})$ be the set of probability measures on $\mathcal{X}$.
Let $[n] = \{1, \dots, n\}$ and $[m:n] = \{m, m  + 1, \dots, n\}$.
Let $\text{sp}(v) = \max_{s, s'} \vert v(s) - v(s') \vert$.
Let $[Pv](s, a) = \mathbb{E}_{s' \sim P(\cdot | s, a)}[v(s')]$.

\subsection{Infinite-Horizon Average-Reward MDPs}

We consider a Markov decision process (MDP)~\parencite{puterman2014markov}, $\mathcal{M} = (\mathcal{S}, \mathcal{A}, P, r)$ where $\mathcal{S}$ is the state space, $\mathcal{A}$ is the action space, $P : \mathcal{S} \times \mathcal{A} \rightarrow \Delta(\mathcal{S})$ is the probability transition kernel, $r : \mathcal{S} \times \mathcal{A} \rightarrow [0, 1]$ is the reward function.
We assume $\mathcal{S}$ is a measurable space with possibly infinite number of elements and $\mathcal{A}$ is a finite set.
We assume the reward is deterministic and the reward function $r$ is known to the learner.
The probability transition kernel $P$ is unknown to the learner. 

The interaction protocol between the learner and the MDP is as follows.
The learner interacts with the MDP for $T$ steps, starting from an arbitrary state $s_1 \in \mathcal{S}$ chosen by the environment.
At each step $t = 1, \dots, T$, the learner chooses an action $a_t \in \mathcal{A}$ and observes the reward $r(s_t, a_t)$ and the next state $s_{t + 1}$.
The next state $s_{t + 1}$ is drawn by the environment from $P(\cdot | s_t, a_t)$.

Consider a stationary policy $\pi : \mathcal{S} \rightarrow \Delta(\mathcal{A})$ where $\pi(a | s)$ specifies the probability of choosing action $a$ at state $s$.
The performance measure of our interest for the policy $\pi$ is the long-term average reward starting from an initial state $s$ defined as
$$
J^\pi(s) \coloneqq \liminf_{T \rightarrow \infty} \frac{1}{T} \mathbb{E}^\pi \left[
\sum_{t = 1}^T r(s_t, a_t) | s_1 = s
\right]
$$
where $\mathbb{E}^\pi[ \cdot ]$ is the expectation with respect to the probability distribution on the trajectory $(s_1, a_1, s_2, a_2, \dots)$ induced by the interaction between $P$ and $\pi$.
The performance of the learner is measured by the regret against the best stationary policy $\pi^\ast$ that maximizes $J^\pi(s_1)$.
Writing $J^\ast(s_1) \coloneqq J^{\pi^\ast}(s_1)$, the regret is$$
R_T \coloneqq \sum_{t = 1}^T (J^\ast(s_1) - r(s_t, a_t)).
$$
As discussed by \textcite{bartlett2009regal}, without an additional assumption on the structure of the MDP, if the agent enters a bad state from which reaching the optimally rewarding states is impossible, the agent may suffer a linear regret.
To avoid this pathological case, we follow \textcite{wei2021learning} and make the following structural assumption on the MDP.
\begin{assumption}[Bellman optimality equation] \label{assumption:bellman-optimality}
There exist $J^\ast \in \mathbb{R}$ and functions $v^\ast : \mathcal{S} \rightarrow \mathbb{R}$ and $q^\ast : \mathcal{S} \times \mathcal{A} \rightarrow \mathbb{R}$ such that for all $(s, a) \in \mathcal{S} \times \mathcal{A}$, we have
\begin{align*}
J^\ast + q^\ast(s, a) &= r(s, a) + [Pv^\ast](s, a) \\
v^\ast(s) &= \max_{a \in \mathcal{A}} q^\ast(s, a).
\end{align*}
\end{assumption}
As discussed by \textcite{wei2021learning}, the Bellman optimality equation assumption is a weaker assumption than the weakly communicating assumption, which in turn is weaker than ergodicity.
Weakly communicating assumption is another widely used assumption for the infinite-horizon average-reward setting that requires each pair of states in the set to be reachable from each other under some policy.
Ergodicity assumption requires the Markov chain induced by any policy to be ergodic, i.e., irreducible and aperiodic.
Uniform mixing assumption required for MDP-EXP2 proposed by \textcite{wei2021learning} is a stronger assumption than ergodicity that additionally assumes that the mixing time is uniformly bounded over all policies.

As shown by \textcite{wei2021learning}, under Assumption~\ref{assumption:bellman-optimality}, the policy $\pi^\ast$ that deterministically selects an action from $\argmax_a q^\ast(s, a)$ at each state $s \in \mathcal{S}$ is an optimal policy.
Moreover, $\pi^\ast$ always gives an optimal average reward $J^{\pi^\ast}(s_1) = J^\ast$ for all initial states $s_1 \in \mathcal{S}$.
Since the optimal average reward is independent of the initial state, we can simply write the regret as $R_T = \sum_{t = 1}^T (J^\ast - r(s_t, a_t))$.
Functions $v^\ast(s)$ and $q^\ast(s, a)$ are the relative advantage of starting with $s$ and $(s, a)$ respectively.
We can expect a problem with large $\text{sp}(v^\ast)$ to be more difficult since starting with a bad state can be more disadvantageous.
As is common in the literature \parencite{bartlett2009regal,wei2020model}, we assume $\text{sp}(v^\ast)$ is known to the learner.

\begin{remark}
Instead of assuming exact knowledge of $\text{sp}(v^\ast)$, we can assume that the learner knows an upper bound $H$.
In this case, using $H$ as an input to our algorithm instead of $\text{sp}(v^\ast)$, the regret bound of our algorithm will scale with $H$ rather than $\text{sp}(v^\ast)$.
Such a knowledge of an upper bound is a commonly made assumption~\parencite{bartlett2009regal}.
In practice, if one has a general sense of the diameter of the MDP, which is the expected number of steps needed to transition between any two states in the worst case, the diameter can serve as an upper bound $H$, since the diameter is guaranteed to be an upper bound of $\text{sp}(v^\ast)$ when reward is bounded by 1.
Relaxing the assumption of the knowledge of $\text{sp}(v^\ast)$ or its upper bound is only achieved recently in the tabular setting~\parencite{boone2024achieving}.
Extending this relaxation to the linear MDP setting remains an open question for future work.
\end{remark}

\subsection{Infinite-Horizon Discounted Setting}

The key idea of this paper, inspired by \textcite{zhang2023sharper}, is to approximate the infinite-horizon average-reward setting by the infinite-horizon discounted setting with a discount factor $\gamma \in [0, 1)$ tuned carefully.
Introducing the discounting factor allows for a computationally efficient algorithm design that exploits the contraction property of the Bellman operator for the infinite-horizon discounted setting.
When $\gamma$ is close to 1, we expect the optimal policy for the discounted setting to be nearly optimal for the average-reward setting, given the classical result \parencite{puterman2014markov} that says the average reward of a stationary policy is equal to the limit of the discounted cumulative reward as $\gamma$ goes to $1$.
Before stating a lemma that relates the infinite-horizon average-reward setting and the infinite-horizon discounted setting, we define the value function under the discounted setting.
For a policy $\pi$, define
\begin{align*}
V^\pi(s) &= \mathbb{E}^\pi\left[\sum_{t = 1}^\infty \gamma^{t - 1} r(s_t, a_t) | s_1 = s\right] \\
Q^\pi(s, a) &= \mathbb{E}^\pi \left[ \sum_{t = 1}^\infty \gamma^{t - 1} r(s_t, a_t) | s_1 = s, a_1 = a \right].
\end{align*}
We suppress the dependency of the value functions on the discounting factor $\gamma$.
We write the optimal value functions under the discounted setting as
$$
V^\ast(s) = \max_\pi V^\pi(s), \quad Q^\ast(s, a) = \max_\pi Q^\pi(s, a).
$$
The following lemma relates the infinite-horizon average-reward setting and the discounted setting.

\begin{lemma}[Lemma 2 in \textcite{wei2020model}] \label{lemma:discounted-approximation}
For any $\gamma \in [0, 1)$, the optimal value function $V^\ast$ for the infinite-horizon discounted setting with discounting factor $\gamma$ satisfies
\begin{enumerate}[label=(\roman*)]
\item $\text{sp}(V^\ast) \leq 2 \text{sp} (v^\ast)$ and
\item $\vert (1 - \gamma) V^\ast(s) - J^\ast \vert \leq ( 1 - \gamma) \text{sp}(v^\ast) ~~\text{for all}~ s \in \mathcal{S}$.
\end{enumerate}
\end{lemma}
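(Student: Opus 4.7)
The plan is to use Assumption~\ref{assumption:bellman-optimality} to express the one-step reward as a telescoping difference involving $v^\ast$, then sum up the discounted cumulative reward and compare with $J^\ast/(1-\gamma)$. Both parts will drop out of a single identity applied once with an inequality (upper bound on $V^\ast$) and once with equality (lower bound via $V^{\pi^\ast}$).

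First, I would observe that the Bellman optimality equation is invariant under shifting $v^\ast$ by a constant, so without loss of generality assume $\min_s v^\ast(s) = 0$, giving $v^\ast(s) \in [0, \text{sp}(v^\ast)]$ for every $s$. From $v^\ast(s) = \max_a q^\ast(s,a)$ and $q^\ast(s,a) = r(s,a) + [Pv^\ast](s,a) - J^\ast$, I get the pointwise inequality
\begin{equation*}
r(s,a) \leq J^\ast + v^\ast(s) - [Pv^\ast](s,a),
\end{equation*}
with equality whenever $a \in \argmax_{a'} q^\ast(s,a')$, i.e., for $a = \pi^\ast(s)$.

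Next, for an arbitrary stationary policy $\pi$, I would plug this bound into the series defining $V^\pi$ and take expectations:
\begin{equation*}
V^\pi(s) \leq \sum_{t=1}^\infty \gamma^{t-1}\bigl(J^\ast + \mathbb{E}^\pi[v^\ast(s_t)\mid s_1=s] - \mathbb{E}^\pi[v^\ast(s_{t+1})\mid s_1=s]\bigr).
\end{equation*}
The first term sums to $J^\ast/(1-\gamma)$. The remaining telescoping-type sum simplifies (by rewriting $\sum_t \gamma^{t-1} m_t - \sum_t \gamma^{t-1} m_{t+1}$ with $m_t = \mathbb{E}^\pi[v^\ast(s_t)\mid s_1=s]$) to $v^\ast(s) - (1-\gamma)\sum_{t=1}^\infty \gamma^{t-1} m_{t+1}$. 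Since $m_{t+1} \in [0, \text{sp}(v^\ast)]$, the correction term lies in $[-\text{sp}(v^\ast),0]$, yielding
\begin{equation*}
V^\pi(s) \leq \frac{J^\ast}{1-\gamma} + v^\ast(s) \leq \frac{J^\ast}{1-\gamma} + \text{sp}(v^\ast).
\end{equation*}
Taking a supremum over $\pi$ delivers the upper bound $(1-\gamma)V^\ast(s) \leq J^\ast + (1-\gamma)\text{sp}(v^\ast)$. For the lower bound, I apply the same identity with $\pi = \pi^\ast$ where the inequality becomes an equality, and use $V^\ast(s) \geq V^{\pi^\ast}(s)$ together with $v^\ast(s) \geq 0$ and $(1-\gamma)\sum \gamma^{t-1} m_{t+1} \leq \text{sp}(v^\ast)$ to conclude $(1-\gamma)V^\ast(s) \geq J^\ast - (1-\gamma)\text{sp}(v^\ast)$. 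This establishes (ii), and (i) is immediate by applying (ii) at two different states and using the triangle inequality to get $|V^\ast(s) - V^\ast(s')| \leq 2\text{sp}(v^\ast)$.

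The main obstacle I anticipate is bookkeeping with the telescoping sum and signs, in particular making sure that the shift-invariance of $v^\ast$ is used correctly so that both the leading $v^\ast(s)$ term and the residual $-(1-\gamma)\sum \gamma^{t-1} m_{t+1}$ term are simultaneously controlled by the same quantity $\text{sp}(v^\ast)$; the rest is a routine interchange of expectation and summation (justified by boundedness of rewards) together with the geometric series identity.
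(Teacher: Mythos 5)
The paper does not prove this lemma; it imports it verbatim as Lemma 2 of \textcite{wei2020model}. Your argument is correct and is essentially the standard derivation (and matches the one in that reference): normalize $v^\ast$ to be nonnegative, use $r(s,a) \leq J^\ast + v^\ast(s) - [Pv^\ast](s,a)$ with equality along $\pi^\ast$, telescope the discounted sum, and read off both bounds. The only point worth tightening is the step ``taking a supremum over $\pi$'': since $V^\ast$ is defined as a supremum over all policies, you should either invoke the standard fact that stationary policies suffice in the discounted setting, or simply note that your pointwise inequality and the tower-property identity $\mathbb{E}^\pi[[Pv^\ast](s_t,a_t)\mid s_1=s]=\mathbb{E}^\pi[v^\ast(s_{t+1})\mid s_1=s]$ hold for arbitrary (history-dependent) policies, so the upper bound applies to $V^\ast$ directly; either way the gap is cosmetic.
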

The lemma above suggests that the difference between the optimal average reward $J^\ast$ and the optimal discounted cumulative reward normalized by the factor $(1 - \gamma)$ is small as long as $\gamma$ is close to 1.
Hence, we can expect the policy optimal under the discounted setting will be nearly optimal for the average-reward setting, provided $\gamma$ is sufficiently close to 1.

\section{Warmup: Tabular Setting} \label{section:tabular}

In this section, we introduce an algorithm designed for the tabular setting, where the state space $\mathcal{S}$ and action space $\mathcal{A}$ are both finite, and no specific structure is assumed for the reward function or the transition probabilities.
The structure of the algorithm, along with the accompanying analysis, will lay the groundwork for extending these results to the linear MDP setting.
\begin{algorithm*}[t]
\KwInput{Discounting factor $\gamma \in [0, 1)$, span $H$, bonus factor $\beta$.}
\KwInit{$Q_1(s, a), V_1(s) \leftarrow \frac{1}{1 - \gamma}$; $N_0(s,a,s')\leftarrow 0, N_0(s,a)\leftarrow 1$, for all $(s,a,s')\in \mathcal{S}\times\mathcal{A}\times\mathcal{S}$.}
Receive initial state $s_1$. \\
\For{time step $t = 1, \dots, T$}{
    Take action $a_t = \argmax_a Q_t(s_t, a)$. Receive reward $r(s_t, a_t)$. Receive next state $s_{t + 1}$. \\
    $N_t(s_t, a_t, s_{t+1}) \leftarrow N_{t-1}(s_t,a_t,s_{t+1})+1$ \\
    $N_t(s_t, a_t) \leftarrow N_{t-1}(s_t,a_t)+1$. \\
    (Other entries of $N_t$ remain the same as $N_{t-1}$.)\\
    $\widehat{P}_t(s' | s, a) \leftarrow N_t(s, a, s')/ N_t(s, a), ~~\forall (s, a) \in \mathcal{S} \times \mathcal{A}$. \\
    $Q_{t + 1}(s, a) \leftarrow (r(s, a) + \gamma [\widehat{P}_t V_t](s, a) + \beta / \sqrt{N_t(s, a)}) \wedge Q_t(s, a), ~~\forall (s, a) \in \mathcal{S} \times \mathcal{A}$ \label{alg-line:q} \\
    $\widetilde{V}_{t + 1}(s) \leftarrow (\max_a Q_{t + 1}(s, a)) \wedge V_t(s), ~~\forall s \in \mathcal{S}$. \label{alg-line:v-tilde} \\
    $V_{t + 1}(s) \leftarrow \widetilde{V}_{t + 1}(s) \wedge (\min_{s'} \widetilde{V}_{t + 1}(s') + H), ~~\forall s \in \mathcal{S}$. \label{alg-line:projection}\\
}
\caption{$\gamma$-UCB-CVI for Tabular Setting}
\label{alg:ucbvi}
\end{algorithm*}

\subsection{Algorithm}

Our algorithm, called \textit{discounted upper confidence bound clipped value iteration} ($\gamma$-UCB-CVI), adapts UCBVI  \parencite{azar2017minimax}, which was originally designed for the finite-horizon episodic setting, to the infinite-horizon discounted setting.
At each time step, the algorithm performs an approximate Bellman backup with an added bonus term $\beta \sqrt{1 / N_t(s, a)}$ (Line~\ref{alg-line:q}) where $N_t(s, a)$ is the number of times the state-action pair $(s, a)$ is visited.
The bonus term is designed to guarantee optimism, ensuring that $Q_t \geq Q^\ast$ for all $t = 1, \dots, T$.
A key modification from UCBVI is the clipping step (Line~\ref{alg-line:projection}), which bounds span of the value function estimate $V_t$ by $H$, where the target span $H$ is an input to the algorithm.
Without clipping, the span of the value function $V_t$ can be as large as $\frac{1}{1 - \gamma}$, while with clipping, the span can only be as large as $H$.
As we will see in the analysis, this clipping step is crucial to achieving a sharp dependence on $\frac{1}{1 - \gamma}$ in the regret bound, which enables the $\widetilde{O}(\sqrt{T})$ regret through tuning $\gamma$.
Running the algorithm with the discounting factor set to $\gamma = 1 - 1 / \sqrt{T}$ and the target span set to $H = 2 \cdot \text{sp}(v^\ast)$ guarantees the following regret bound.

\begin{theorem} \label{thm:tabular}
Under Assumption~\ref{assumption:bellman-optimality}, there exists a constant $c>0$ such that, for any fixed $\delta\in(0,1)$, if  \mbox{Algorithm \ref{alg:ucbvi}} is run with $\gamma=1-\sqrt{1/T}$, $H=2 \cdot sp(v^\ast)$, and $\beta = cH \sqrt{S \log(SAT/\delta)}$, then with probability at least $1-\delta$, the total regret is bounded by
$$
R_T \leq \mathcal{O}\left(\text{sp}(v^\ast) \sqrt{S^2AT\log(SAT/\delta)}\right).
$$
\end{theorem}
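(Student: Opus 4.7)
The plan is to establish optimism of the discounted value iterates $V_t$ against the discounted optimal value function $V^\ast$ of the MDP with factor $\gamma$, and then reduce the average-reward regret to a telescoping sum. By Lemma~\ref{lemma:discounted-approximation}(ii) combined with optimism $V^\ast \le V_t$, we have $J^\ast \le (1-\gamma) V_t(s_t) + (1-\gamma)\text{sp}(v^\ast)$ for every $t$, yielding
\begin{align*}
R_T \le T(1-\gamma)\text{sp}(v^\ast) + \sum_{t=1}^T \bigl((1-\gamma) V_t(s_t) - r(s_t, a_t)\bigr).
\end{align*}
With the choice $\gamma = 1 - 1/\sqrt{T}$ the first term contributes $\text{sp}(v^\ast)\sqrt{T}$, so the whole analysis reduces to bounding the second sum.

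I would prove $Q_t \ge Q^\ast$ and $V_t \ge V^\ast$ jointly by induction on $t$. Because each $V_{t-1}$ has span at most $H$ after clipping, a Hoeffding bound together with a union bound over the discretized set of possible value functions shows that, with probability at least $1-\delta$, the error $|[(\widehat P_{t-1} - P) V_{t-1}](s,a)|$ is dominated by $\beta/\sqrt{N_{t-1}(s,a)}$ under the choice $\beta = cH\sqrt{S\log(SAT/\delta)}$. Combined with the update rule in Line~\ref{alg-line:q} and the inductive hypothesis, this propagates optimism from $Q_t$ to $Q_{t+1}$. The clipping step itself preserves optimism: Lemma~\ref{lemma:discounted-approximation}(i) gives $\text{sp}(V^\ast) \le 2\,\text{sp}(v^\ast) = H$, so whenever $\widetilde V_{t+1} \ge V^\ast$ pointwise, one has $\min_{s'}\widetilde V_{t+1}(s') + H \ge \min_{s'} V^\ast(s') + \text{sp}(V^\ast) \ge V^\ast(s)$, and therefore $V_{t+1}(s) \ge V^\ast(s)$.

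With optimism in hand, the update $V_t(s_t) \le Q_t(s_t, a_t) \le r(s_t, a_t) + \gamma [\widehat P_{t-1} V_{t-1}](s_t, a_t) + \beta/\sqrt{N_{t-1}(s_t, a_t)}$ combined with the same concentration gives $V_t(s_t) - r(s_t, a_t) \le \gamma [P V_{t-1}](s_t, a_t) + 2\beta / \sqrt{N_{t-1}(s_t, a_t)}$. Writing $[P V_{t-1}](s_t, a_t) = V_{t-1}(s_{t+1}) + M_t$, where $M_t$ is a martingale difference with $|M_t| \le \text{sp}(V_{t-1}) \le H$, and rearranging produces
\begin{align*}
(1-\gamma) V_t(s_t) - r(s_t, a_t) \le \gamma \bigl(V_{t-1}(s_{t+1}) - V_t(s_t)\bigr) + \gamma M_t + \frac{2\beta}{\sqrt{N_{t-1}(s_t, a_t)}}.
\end{align*}

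The hard part will be to control $\sum_t \bigl(V_{t-1}(s_{t+1}) - V_t(s_t)\bigr)$, since a naive span bound on $V_{t-1}$ gives only $TH$. The key trick is the rewriting
\begin{align*}
V_{t-1}(s_{t+1}) - V_t(s_t) = \bigl(V_{t-1}(s_{t+1}) - V_{t+1}(s_{t+1})\bigr) + \bigl(V_{t+1}(s_{t+1}) - V_t(s_t)\bigr).
\end{align*}
The second piece telescopes to $V_{T+1}(s_{T+1}) - V_1(s_1) \le 0$ by pointwise monotonicity $V_{t+1} \le V_t$. The first piece has non-negative summands by the same monotonicity, and can be upper bounded by summing over all states: $\sum_s \sum_t (V_{t-1}(s) - V_{t+1}(s)) = \sum_s (V_0(s) + V_1(s) - V_T(s) - V_{T+1}(s)) \le 2S/(1-\gamma) = O(S\sqrt{T})$. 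Combining with Azuma--Hoeffding ($|\sum_t M_t| = O(H\sqrt{T\log(1/\delta)})$) and the Cauchy--Schwarz pigeonhole bound $\sum_t 1/\sqrt{N_{t-1}(s_t, a_t)} = O(\sqrt{SAT})$, the bonus contribution $O(\beta\sqrt{SAT}) = O(\text{sp}(v^\ast)\sqrt{S^2 AT\log(SAT/\delta)})$ dominates the other terms, giving the claimed bound after plugging in $\gamma, H, \beta$. The main obstacle is precisely this telescoping step, which relies crucially on the pointwise monotonicity of $V_t$ rather than a direct use of its span.
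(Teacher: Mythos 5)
Your proposal is correct and follows essentially the same route as the paper's proof: the same covering-based concentration to justify $\beta = cH\sqrt{S\log(SAT/\delta)}$, the same induction for optimism with clipping preserved via $\text{sp}(V^\ast)\le 2\,\text{sp}(v^\ast)$, and the same four-way decomposition in which $\sum_t (V_{t-1}(s_{t+1})-V_t(s_t))$ is handled by splitting off a telescoping part and bounding the remainder by $O(S/(1-\gamma))$ via per-state monotonicity. The only cosmetic difference is your reference to an undefined $V_0$; otherwise the argument matches the paper step for step.
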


In the theorem above, the constant $c$ in the definition of $\beta$ is determined in Lemma~\ref{lemma:model-error-tabular} in the next subsection.
Our regret bound matches the best previously known regret bound for computationally efficient algorithm in this setting.
See Appendix~\ref{section:additional-related-work} for a full comparison with previous work on infinite-horizon average-reward tabular MDPs.
We believe we can improve our bound by a factor of $\sqrt{S}$ with a refined analysis using Bernstein inequality, following the idea of the refined analysis for UCBVI provided by \textcite{azar2017minimax}.
We leave the improvement to future work.

\subsection{Analysis}

In this section, we outline the proof of Theorem~\ref{thm:tabular}.
We defer the complete proof to Appendix~\ref{section:analysis-tabular}.
The key to the proof is the following concentration inequality.

\begin{lemma} \label{lemma:model-error-tabular}
Under the setting of Theorem \ref{thm:tabular}, there exists a constant $c$ such that for any fixed $\delta\in(0,1)$, we have with probability at least $1 - \delta$ that
$$
\vert [(\widehat{P}_t - P) V_t](s, a) \vert \leq c\cdot\text{sp}(v^\ast) \sqrt{S \log(SAT/\delta)/ N_t(s, a)}
$$
for all $(s, a, t) \in \mathcal{S} \times \mathcal{A} \times [T]$.
\end{lemma}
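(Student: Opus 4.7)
The plan is to exploit the clipping step to bound the span of $V_t$ and then reduce the problem to a uniform $\ell_1$ concentration bound on the empirical transition kernel. The key observation is that, because $\widehat{P}_t(\cdot\mid s,a)$ and $P(\cdot\mid s,a)$ are both probability distributions over $\mathcal{S}$, adding any constant to $V_t$ leaves $[(\widehat{P}_t-P)V_t](s,a)$ unchanged. Combined with the clipping in Line~\ref{alg-line:projection}, this lets us replace $V_t$ with a shifted version that lives in $[0,H]$ rather than $[0,1/(1-\gamma)]$, which is what ultimately produces the $\text{sp}(v^\ast)$ factor instead of a $1/(1-\gamma)$ factor in the bound.

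The steps I would carry out, in order, are the following. First I would verify that $\text{sp}(V_t)\le H$ for every $t$ as a direct consequence of Line~\ref{alg-line:projection}: if $c_t \coloneqq \min_{s'} \widetilde V_t(s')$, then the minimum of $V_t$ equals $c_t$ and the maximum of $V_t$ is at most $c_t+H$, so $V_t - c_t\mathbf 1 \in [0,H]^{\mathcal S}$. Next, using that $(\widehat P_t-P)(\cdot\mid s,a)$ integrates to zero, I would write
\[
[(\widehat P_t-P)V_t](s,a) = [(\widehat P_t-P)(V_t-c_t\mathbf 1)](s,a),
\]
and apply H\"older's inequality to obtain
\[
\bigl|[(\widehat P_t-P)V_t](s,a)\bigr| \le \bigl\|\widehat P_t(\cdot\mid s,a)-P(\cdot\mid s,a)\bigr\|_1 \cdot \|V_t-c_t\mathbf 1\|_\infty \le H \cdot \bigl\|\widehat P_t(\cdot\mid s,a)-P(\cdot\mid s,a)\bigr\|_1.
\]
Then the task reduces to proving a uniform $\ell_1$ deviation bound for the empirical transition kernel. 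For a fixed $(s,a)$ and a fixed sample size $n$, Weissman's inequality (equivalently, Bretagnolle--Huber--Carol) gives $\|\widehat P_n(\cdot\mid s,a)-P(\cdot\mid s,a)\|_1 \le C\sqrt{S\log(1/\delta')/n}$ with probability $1-\delta'$. Taking $\delta'=\delta/(SAT)$ and union bounding over $(s,a)\in\mathcal S\times\mathcal A$ and over the value $n=N_t(s,a)\in\{1,\dots,T\}$ yields the uniform bound
\[
\bigl\|\widehat P_t(\cdot\mid s,a)-P(\cdot\mid s,a)\bigr\|_1 \le C\sqrt{S\log(SAT/\delta)/N_t(s,a)} \qquad \forall(s,a,t).
\]
Plugging $H=2\,\text{sp}(v^\ast)$ into the H\"older bound then gives the stated concentration inequality and pins down the constant $c$.

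The main obstacle I anticipate is not the concentration step itself but rather being careful that the argument remains valid despite the adaptive dependence of $V_t$ on the data. A naive attempt to apply Hoeffding directly to the scalar random variable $[(\widehat P_t - P)V_t](s,a)$ fails because $V_t$ and $\widehat P_t$ share the history $\mathcal F_{t-1}$. The $\ell_1$--$\ell_\infty$ decoupling via H\"older circumvents this entirely: the $\ell_1$ bound on $\widehat P_t(\cdot\mid s,a)-P(\cdot\mid s,a)$ is a purely statistical statement about the empirical distribution at $(s,a)$ that holds uniformly over all times and all possible data-dependent functions, so the random $V_t$ is handled without any covering net. The $\sqrt S$ factor in the final bound (which propagates to $\sqrt{S^2 A T}$ in Theorem~\ref{thm:tabular}) is thus an unavoidable price of using the $\ell_1$ ball, and a Bernstein-type refinement would be needed to shave it off, as the authors note.
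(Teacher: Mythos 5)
Your proposal is correct, but it takes a genuinely different route from the paper's. The paper first proves the bound for a \emph{fixed} function $V$ of span at most $2\,\text{sp}(v^\ast)$ via a scalar self-normalized martingale inequality (Lemma~\ref{lemma:scalar-process-concentration}), writing $[(\widehat P_t - P)V](s,a)$ as a normalized sum of the martingale differences $V(s_{\tau+1}) - [PV](s_\tau,a_\tau)$, and then handles the data-dependence of $V_t$ with an $\epsilon$-net over the class $\{v:\mathcal S\to[0,\frac{1}{1-\gamma}]\}$; the $\sqrt S$ enters through the log-covering number $S\log\frac{1}{\epsilon(1-\gamma)}$. You instead decouple via H\"older, $\vert[(\widehat P_t-P)V_t](s,a)\vert \le \Vert \widehat P_t(\cdot\mid s,a)-P(\cdot\mid s,a)\Vert_1\cdot\Vert V_t - c_t\mathbf 1\Vert_\infty$, and invoke Weissman's $\ell_1$ inequality with a union bound over $(s,a)$ and the sample count $n\in[T]$, so the $\sqrt S$ comes from the $\ell_1$ ball. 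Your argument is the classical UCRL2-style model-error bound: it is more elementary, and it handles the adaptivity of $V_t$ for free because the $\ell_1$ bound is simultaneously valid for every span-$H$ function, with no covering net. What the paper's covering-based argument buys is that it is the template that survives the passage to linear MDPs in Section~\ref{section:linear}, where there is no finite simplex on which to apply Weissman's inequality. One small imprecision to acknowledge: since Algorithm~\ref{alg:ucbvi} initializes $N_0(s,a)=1$ but $N_0(s,a,s')=0$, the estimator $\widehat P_t(\cdot\mid s,a)$ is sub-stochastic rather than a probability distribution, so the exact translation invariance you assert fails by a defect of $c_t/N_t(s,a)$; this is harmless (the paper's own displayed identity for $[(\widehat P_t-P)V](s,a)$ carries the same off-by-one and the defect contributes only lower-order terms after summation), but it should be noted rather than claimed away.
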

Without clipping, the span of $V_t$ would be $\frac{1}{1 - \gamma}$ instead of $2 \cdot \text{sp}(v^\ast)$, making the bound of $[(\widehat{P}_t - P)V_t](s, a)$ scale with $\frac{1}{1 - \gamma}$ instead of $\text{sp}(v^\ast)$.
Replacing the $\frac{1}{1 - \gamma}$ factor by $\text{sp}(v^\ast)$ by clipping is crucial to achieving $\widetilde{\mathcal{O}}(\sqrt{T})$ regret when tuning $\gamma$.

In Theorem~\ref{thm:tabular}, the bonus factor parameter $\beta$ is chosen according to the concentration bound in the lemma above, ensuring that the concentration bound for $[(\widehat{P}_t - P)V_t](s, a)$ is $\beta / \sqrt{N_t(s, a)}$, which is the bonus term used by the algorithm.
With this result, we can now establish the following optimism result.

\begin{lemma}[Optimism] \label{lemma:optimism-tabular}
Under the setting of Theorem~\ref{thm:tabular}, we have with probability at least $1 - \delta$ that
$$
V_t(s) \geq V^\ast(s), \quad Q_t(s, a) \geq Q^\ast(s, a)
$$
for all $(s,a,t)\in \mathcal{S}\times\mathcal{A}\times[T]$.
\end{lemma}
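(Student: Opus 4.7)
The plan is a straightforward induction on $t$, with the clipping step being the only nonroutine part.

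For the base case $t=1$, the initialization $Q_1(s,a) = V_1(s) = \frac{1}{1-\gamma}$ dominates $Q^\ast$ and $V^\ast$ trivially, since rewards are in $[0,1]$ implies $V^\ast(s), Q^\ast(s,a) \le \frac{1}{1-\gamma}$. For the inductive step, assume $Q_t \geq Q^\ast$ and $V_t \geq V^\ast$, and condition on the high-probability event of Lemma~\ref{lemma:model-error-tabular}. Since $Q_{t+1}$ is a minimum of $Q_t$ (which is $\ge Q^\ast$ by induction) and the bonus-augmented Bellman backup, it suffices to lower-bound the latter. Writing
\begin{align*}
r(s,a) + \gamma [\widehat{P}_t V_t](s,a) + \beta/\sqrt{N_t(s,a)}
&= r(s,a) + \gamma [PV_t](s,a) + \gamma [(\widehat{P}_t - P)V_t](s,a) + \beta/\sqrt{N_t(s,a)} \\
&\geq r(s,a) + \gamma [PV^\ast](s,a) = Q^\ast(s,a),
\end{align*}
where the inequality uses the inductive hypothesis $V_t \geq V^\ast$ (and nonnegativity of $P$), Lemma~\ref{lemma:model-error-tabular} to cancel the model error with the bonus (since $\gamma < 1$), and the discounted-setting Bellman equation $Q^\ast = r + \gamma PV^\ast$.

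Next, for $\widetilde V_{t+1}$, the operation $(\max_a Q_{t+1}(s,a)) \wedge V_t(s)$ preserves optimism since both arguments are $\geq V^\ast(s)$: the first by what we just showed and $V^\ast = \max_a Q^\ast$, the second by induction. The subtle step is the clipping on Line~\ref{alg-line:projection}, where $V_{t+1}(s) = \widetilde V_{t+1}(s) \wedge (\min_{s'} \widetilde V_{t+1}(s') + H)$. We only need to check the second argument is $\geq V^\ast(s)$. By the lower bound just established on $\widetilde V_{t+1}$,
\[
\min_{s'} \widetilde V_{t+1}(s') + H \geq \min_{s'} V^\ast(s') + H.
\]
Here is where we invoke the choice $H = 2\cdot\text{sp}(v^\ast)$ together with part (i) of Lemma~\ref{lemma:discounted-approximation}, which gives $\text{sp}(V^\ast) \leq 2\cdot\text{sp}(v^\ast) = H$. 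Therefore $\min_{s'} V^\ast(s') + H \geq V^\ast(s)$, completing the induction.

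The main (and essentially only) obstacle is verifying that the clipping step does not destroy optimism. This is precisely what motivates setting the clipping width $H$ to be an upper bound on $\text{sp}(V^\ast)$ rather than the naive $\text{sp}(v^\ast)$; the factor-of-two slack supplied by Lemma~\ref{lemma:discounted-approximation}(i) is exactly what lets the argument go through. Everything else (the bonus absorbing the model error, the minimum-with-$Q_t$ preserving monotonicity, the $V^\ast = \max_a Q^\ast$ identity) is routine.
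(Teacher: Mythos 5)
Your proof is correct and follows essentially the same route as the paper's: induction on $t$, with the bonus absorbing the empirical model error via Lemma~\ref{lemma:model-error-tabular}, and the clipping step handled by $\mathrm{sp}(V^\ast)\le 2\,\mathrm{sp}(v^\ast)=H$ from Lemma~\ref{lemma:discounted-approximation}(i). In fact your treatment of the clipping step is the careful version of what the paper intends (the appendix's displayed chain contains a slip, asserting the minimum is $\ge$ one of its arguments, whereas the correct justification is exactly your bound $\min_{s'}\widetilde V_{t+1}(s')+H\ge \min_{s'}V^\ast(s')+\mathrm{sp}(V^\ast)\ge V^\ast(s)$).
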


The proof uses a standard induction argument (e.g. Lemma 18 in \textcite{azar2017minimax}) to show $\widetilde{V}_t(s) \geq V^\ast(s)$.
To establish that the clipped value function $V_t$, no larger than $\widetilde{V}_t$ by design, still satisfies $V_t(s) \geq V^\ast(s)$, we use $\text{sp}(V^\ast) \leq 2 \cdot \text{sp}(v^\ast)$ (Lemma~\ref{lemma:discounted-approximation}), which guarantees the clipping operation does not reduce $V_t$ below $V^\ast$.

Now, we show the regret bound under the high probability events in the previous two lemmas (Lemma~\ref{lemma:model-error-tabular}, Lemma~\ref{lemma:optimism-tabular}) hold.
By the value iteration step (Line~\ref{alg-line:q}) of Algorithm~\ref{alg:ucbvi} and the concentration inequality in Lemma~\ref{lemma:model-error-tabular},
we have for all $t = 2, \dots, T$ that
\begin{align*}
r(s_t, a_t)
&\geq Q_t(s_t, a_t) - \gamma [\widehat{P}_{t - 1} V_{t - 1}](s_t, a_t) - \beta 
 / \sqrt{N_{t - 1}(s_t, a_t)} \\
&\geq V_t(s_t) - \gamma [PV_{t - 1}](s_t, a_t) - 2 \beta / \sqrt{N_{t - 1}(s_t, a_t)}
\end{align*}
where the second inequality follows by $V_t(s_t) \leq \widetilde{V}_t(s_t) \leq \max_a Q_t(s_t, a) = Q_t(s_t, a_t)$.
Hence, the regret can be bounded by
\begin{align*}
R_T
&=
\sum_{t = 1}^T (J^\ast - r(s_t, a_t)) \\
&\leq \sum_{t = 2}^T (J^\ast - V_t(s_t) + \gamma [PV_{t - 1}](s_t, a_t) + 2 \beta / \sqrt{N_{t - 1}(s_t, a_t)}) + \mathcal{O}(1),
\end{align*}
where the first inequality uses the fact that $J^\ast \leq 1$,
which can be decomposed into
\begin{align*}
&=
\underbrace{ \sum_{t = 2}^T (J^\ast - (1 - \gamma) V_t(s_t))}_{(a)} + \gamma \underbrace{\sum_{t = 2}^T (V_{t - 1}(s_{t + 1}) - V_t(s_t))}_{(b)} \\
&\hspace{10mm}+ \gamma \underbrace{\sum_{t = 2}^T (P V_{t - 1}(s_t, a_t) - V_{t - 1}(s_{t + 1}))}_{(c)} + 2\underbrace{ \beta \sum_{t = 2}^T 1 / \sqrt{N_{t - 1}(s_t, a_t)}}_{(d)}~+~\mathcal{O}(1).
\end{align*}
We bound the terms $(a), (b), (c), (d)$ separately.
\paragraph{Bounding Term $\bm{(a)}$}
Using the optimism result (Lemma~\ref{lemma:optimism-tabular}) that says $V_t(s) \geq V^\ast(s)$ for all $s \in \mathcal{S}$ and Lemma~\ref{lemma:discounted-approximation} that bounds $\vert J^\ast - (1 - \gamma) V^\ast(s) \vert$ for all $s \in \mathcal{S}$, we get
\begin{align*}
\sum_{t = 2}^T (J^\ast - (1 - \gamma)V_t(s_t))
&\leq \sum_{t = 2}^T (J^\ast - (1 - \gamma) V^\ast(s_t)) \\
&\leq T(1 - \gamma) \text{sp}(v^\ast).
\end{align*}
\paragraph{Bounding Term $\bm{(b)}$}
Note that for any $s \in \mathcal{S}$, the sequence $\{ V_t(s) \}_{t = 1}^T$ is monotonically decreasing due to Line~\ref{alg-line:v-tilde}-\ref{alg-line:projection} in Algorithm~\ref{alg:ucbvi}.
Moreover, since $V_t(s) \in [0, \frac{1}{1 - \gamma}]$ for all $t = 1, \dots, T$, the total decrease in $V_t(s)$ from $t = 1$ to $T$ is bounded above by $\frac{1}{1 - \gamma}$. Hence,
\begin{align*}
\sum_{t = 2}^T (V_{t - 1}(s_{t + 1}) - V_t(s_t))
&\le
\sum_{t = 2}^T (V_{t - 1}(s_{t + 1}) - V_{t + 1}(s_{t + 1})) + \mathcal{O}\left( \frac{1}{1 - \gamma} \right) \\
&\le
\sum_{s \in \mathcal{S}} \sum_{t = 2}^T (V_{t - 1}(s) - V_{t + 1}(s)) + \mathcal{O} \left( \frac{1}{1 - \gamma} \right) \\
&\le
\mathcal{O}\left(\frac{S}{1 - \gamma} \right).
\end{align*}

The bound above is polynomial in $S$, the size of the state space, which is undesirable in the linear MDP setting where $S$ can be arbitrarily large.
The main challenge of this paper, as we will see in the next section, is sidestepping this issue for linear MDPs.

\paragraph{Bounding Term $\bm{(c)}$}

Term $(c)$ is the sum of a martingale difference sequence where each term is bounded by $\text{sp}(v^\ast)$. Hence, by the Azuma-Hoeffding inequality, with probability at least $1-\delta$, term $(c)$ is bounded by $\text{sp}(v^\ast)\sqrt{2T\log(1/\delta)}$.
Without clipping, each term of the martingale difference sequence can only be bounded by $\frac{1}{1 - \gamma}$, leading to a bound of $\widetilde{\mathcal{O}}(\frac{1}{1 - \gamma} \sqrt{T})$, which is too loose for achieving a regret bound of $\widetilde{\mathcal{O}}(\sqrt{T})$.

\paragraph{Bounding Term $\bm{(d)}$}
We can bound the sum of the bonus terms $(d)$ using a standard argument (\textcite{azar2017minimax}, Lemma~\ref{lemma:sum-of-bonus-terms-tabular} in Appendix~\ref{section:analysis-tabular}) by $\mathcal{O}(\beta \sqrt{SAT}) = \mathcal{O}(\text{sp}(v^\ast) \sqrt{S^2 AT \log (SAT / \delta)})$.

Combining the above, and rescaling $\delta$, it follows that with probability at least $1-\delta$, we have
\begin{align*}
R_T &\leq \mathcal{O}\Big(T(1 - \gamma) \text{sp}(v^\ast) + \frac{S}{1 - \gamma}
+ \text{sp}(v^\ast) \sqrt{T \log(1 / \delta)} + \text{sp}(v^\ast)\sqrt{S^2AT\log(SAT/\delta)}\Big).
\end{align*}
Choosing $\gamma = 1 - 1 / \sqrt{T}$, we get
$$
R_T \leq \mathcal{O}\left(\text{sp}(v^\ast) \sqrt{S^2AT\log(SAT/\delta)}\right),
$$
which completes the proof of Theorem~\ref{thm:tabular}.

\section{Linear MDP Setting} \label{section:linear}

In this section, we apply the key ideas developed from the previous section to the linear MDP setting, which we formally define below.
The tabular setting studied in the previous section has regret bound that scales polynomially with the size of the state space $S$.
This is because, in the tabular setting, there is no structure in the state space that can be exploited to generalize to unseen states during learning.
Therefore, when learning in a large state space with $S \gg T$, additional structural assumption are necessary.
The linear MDP setting~\parencite{jin2020provably} is a widely used setting in the RL theory literature that allows for generalization to unseen states by introducing structure in the MDP through a low-dimensional state-action feature mapping.
The additional assumption made in the linear MDP setting is as follows.

\begin{assumption}[Linear MDP \parencite{jin2020provably}] \label{assumption:linear-mdp}
We assume that the transition and the reward functions can be expressed as a linear function of a known $d$-dimensional feature map $\bm\varphi : \mathcal{S} \times \mathcal{A} \rightarrow \mathbb{R}^d$ such that for any $(s, a) \in \mathcal{S} \times \mathcal{A}$, we have
$$
r(s, a) = \langle \bm\varphi(s, a), \bm\theta \rangle, \quad
P(s' | s, a) = \langle \bm\varphi(s, a), \bm\mu(s') \rangle
$$
where $\bm\mu(\cdot) = (\mu_1(\cdot), \dots, \mu_d(\cdot))$ is a vector of $d$ unknown measures on $\mathcal{S}$ and $\bm\theta \in \mathbb{R}^d$ is a known parameter for the reward function.
\end{assumption}

As is commonly done in the literature on linear MDPs~\parencite{jin2020provably}, we further assume, without loss of generality (see \textcite{wei2021learning} for justification), the following boundedness conditions:
\begin{equation}\label{eqn:boundedness}
\begin{aligned}
\Vert \bm\varphi(s, a) \Vert_2 \leq 1 ~~ \text{for all}~(s, a) \in \mathcal{S}\times \mathcal{A}, \\
\quad \Vert \bm\theta \Vert_2 \leq \sqrt{d}, \quad \Vert \bm\mu(\mathcal{S}) \Vert_2 \leq \sqrt{d}.
\end{aligned}
\end{equation}

As discussed by \textcite{jin2020provably}, although the transition model $P$ is linear in the $d$-dimensional feature mapping $\bm\varphi$, $P$ still has infinite degrees of freedom as the measure $\bm\mu$ is unknown, making the estimation of the model $P$ difficult.
For sample efficient learning, we leverage the fact that $[Pv](s, a)$ is linear in $\bm\varphi(s, a)$ for any function $v : \mathcal{S} \rightarrow  \mathbb{R}$ so that $[Pv](s, a) = \langle \bm\varphi(s, a), \bm{w}^\ast_v \rangle$ for some $\bm{w}^\ast_v$, since
\begin{align*}
[Pv](s, a) &\coloneqq \int_{s' \in \mathcal{S}} v(s') P(ds' | s, a) \\
&= \int_{s' \in \mathcal{S}} v(s') \langle \bm\varphi(s, a), \bm\mu(ds') \rangle \\
&= \langle \bm\varphi(s, a), \int_{s' \in \mathcal{S}} v(s') \bm\mu(ds') \rangle.
\end{align*}

\begin{algorithm*}[t] \label{alg:lscvi-ucb}
\KwInput{Discounting factor $\gamma \in (0, 1)$, regularization $\lambda > 0$, span $H$, bonus factor $\beta$.}
\KwInit{$t \leftarrow 1$, $k \leftarrow 1$, $t_k \leftarrow 1$, $\Lambda_1 \leftarrow \lambda I$, $\bar\Lambda_0 \leftarrow \lambda I$, $Q^1_t(\cdot, \cdot) \leftarrow \frac{1}{1 - \gamma}$ for $t \in [T]$.}
Receive state $s_1$. \\
\For{time step $t = 1, \dots, T$}{
    Take action $a_t = \argmax_a Q^k_t(s_t, a)$. Receive reward $r(s_t, a_t)$. Receive next state $s_{t + 1}$. \\
    $\bar\Lambda_t \leftarrow \bar\Lambda_{t - 1} + \bm\varphi(s_t, a_t) \bm\varphi(s_t, a_t)^T$. \\
    \vspace{1mm}
    \If{$2 \det (\Lambda_k) < \det(\bar\Lambda_t)$}{ \label{alg-line:episode}
        $k \leftarrow k + 1$, $t_k \leftarrow t + 1$, $\Lambda_k \leftarrow \bar\Lambda_t$. \\
        \tcp{Run value iteration to plan for remaining $T - t_k + 1$ time steps in the new episode.}
        $\widetilde{V}_{T + 1}^k(\cdot) \leftarrow \frac{1}{1 - \gamma}$, $V^k_{T + 1}(\cdot) \leftarrow \frac{1}{1 - \gamma}$. \label{alg-line:value-iteration-start}\\
        \For{$u = T, T - 1, \dots, t_k$}{
            $\bm{w}^k_{u + 1} \leftarrow \Lambda_k^{-1} \sum_{\tau = 1}^{t_k - 1} \bm\varphi(s_\tau, a_\tau) (V^k_{u + 1}(s_{\tau + 1}) - \min_{s'} \widetilde{V}_{u + 1}^k(s')))$. \label{alg-line:regressoin} \\
            $Q^k_u(\cdot, \cdot) \leftarrow \left(r(\cdot, \cdot) + \gamma (\langle \bm\varphi(\cdot, \cdot), \bm{w}^k_{u + 1} \rangle + \min_{s'} \widetilde{V}_{u + 1}^k(s') + \beta \Vert \bm\varphi(\cdot, \cdot) \Vert_{\Lambda_k^{-1}}) \right) \wedge \frac{1}{1 - \gamma}$. \label{alg-line:value-iteration-linear} \\
            $\widetilde{V}^k_u(\cdot) \leftarrow \max_a Q^k_u(\cdot, a)$. \\
            $V^k_u(\cdot) \leftarrow \widetilde{V}^k_u(\cdot) \wedge (\min_{s'} \widetilde{V}_u^k(s') + H)$. \label{alg-line:clipping-linear}
        }
    }
}
\caption{$\gamma$-LSCVI-UCB for linear MDP setting with minimum oracle}
\end{algorithm*}

\paragraph{Challenges}
Naively adapting the algorithm design and analysis for the tabular setting to the linear MDP setting would result in a regret bound that is polynomial in $S$, the size of the state space, when bounding $\sum_{t = 1}^T (V_{t - 1}(s_{t + 1}) - V_t(s_t))$.
Also, algorithmically making the state value function monotonically decrease in $t$ by taking minimum with the previous estimate every iteration, as is done in the tabular setting for the telescoping sum argument, would lead to an exponential covering number for the function class of the value function, in either $T$ or $S$ \parencite{he2023nearly}.
A major challenge in algorithm design and analysis is sidestepping these issues.
We now present our algorithm for the linear MDP setting, which addresses these issues.

\subsection{Algorithm}

Our algorithm, called \textit{discounted least-squares clipped value iteration with upper confidence bound} ($\gamma$-LSCVI-UCB), adapts LSVI-UCB \parencite{jin2020provably} developed for the episodic setting to the discounted setting.
We highlight key differences from LSVI-UCB below.

\paragraph{Clipping the Value Function}
We clip the value function estimates, as is done in the tabular setting in the previous section, to restrict the span (Line~\ref{alg-line:clipping-linear}), which saves a factor of $1/(1 - \gamma)$ in the regret bound.

\paragraph{Restricting the Range of Value Target}
When regressing $V_u^k(s')$ on $\bm\varphi(s, a)$ we subtract the value target by $\min_{s'} \widetilde{V}_u^k(s')$ and use $V_u^k(\cdot) - \min_{s'} \widetilde{V}_u^k(s')$ as the value target instead of $V_u^k(\cdot)$ (Line~\ref{alg-line:regressoin}).
This adjustment of the value target guarantees a bound on $\Vert \bm{w}_u^k \Vert_2$ that scales with the target span $H$ instead of $1 / (1 - \gamma)$, which is necessary for achieving $\widetilde{\mathcal{O}}(\sqrt{T})$ regret.
To compensate for the adjustment, we add back $\min_{s'} \widetilde{V}_u^k(s')$ when estimating the value target using the regression coefficient $\bm{w}_u^k$ : $\langle \bm\varphi(\cdot, \cdot), \bm{w}_u^k \rangle + \min_{s'} \widetilde{V}_u^k(s')$ (Line~\ref{alg-line:value-iteration-linear}).

In our previous algorithm $\gamma$-UCB-CVI, designed for the tabular setting, the value iteration step alternates with the decision making step.
At each time step $t$, a greedy action is selected based on the most recently constructed action value function $Q_t$.
This structure is common in value iteration based algorithms and $Q$-learning algorithms for both infinite-horizon average-reward tabular MDPs \parencite{zhang2023sharper} and infinite-horizon discounted tabular MDPs \parencite{liu2020regret,he2021nearly}.
With the coupling of value iteration and decision making steps, bounding the term $\sum_t V_{t - 1}(s_{t + 1}) - V_t(s_t)$ required enforcing $V_t$ to be decreasing in $t$ algorithmically since $V_t$ is one Bellman operation ahead of $V_{t - 1}$.

However, as discussed previously, in the linear MDP setting, forcing $V_t$ to be monotonically decreasing by taking the minimum with previous value functions would cause the log covering number of the function class for the value function to scale with $T$, making regret bound vacuous.
To sidestep this issue, we use a novel algorithm structure that decouples the value iteration step and the decision making step.

\paragraph{Planning until the End of Horizon}
Before taking any action at time $t$, we generate a sequence of action value functions $Q_T, Q_{T - 1}, \dots, Q_t$ by running $T - t$ value iterations (Line~\ref{alg-line:value-iteration-start}-\ref{alg-line:clipping-linear}).
Then, at each decision time step $t$, take a greedy action with respect to $Q_t$.
This algorithm structure is reminiscent of the value iteration based algorithms for the finite-horizon episodic setting~\parencite{azar2017minimax,jin2020provably}, where Bellman operations are performed to generate action value functions at each time step in an episode of fixed length, then greedy actions with respect to those action value functions are taken for the entire episode.
With the new algorithm structure, $Q_{t - 1}$ is now one Bellman operation ahead of $Q_t$, and the quantity of interest becomes $\sum_{t = 1}^T V_{t + 1}(s_{t + 1}) - V_t(s_t)$, which can be bounded by telescoping sum.

\paragraph{Restarting when Information Doubles}
If we generate all $T$ action value functions to be used at the initial time step by running approximate value iteration, and follow them for decision-making for $T$ steps, we cannot make use of the trajectory data collected.
To address this, and still use the scheme of pregenerating action value functions, we restart the process of running value iterations for $T$ steps every time a certain information measure of the collected data doubles.
This allows us to incorporate the newly collected trajectory data into subsequent decision-making.
We adopt the rarely-switching covariance matrix trick \parencite{wang2021provably}, which triggers a restart when the determinant of the empirical covariance matrix doubles (Line~\ref{alg-line:episode}).

Our algorithm has the following guarantee.

\begin{theorem} \label{theorem:linear-mdp}
Under Assumptions \ref{assumption:bellman-optimality} and \ref{assumption:linear-mdp}, running Algorithm~\ref{alg:lscvi-ucb} with inputs $\gamma = 1 - \sqrt{\log(T) / T}$, $\lambda = 1$, $H = 2 \cdot \text{sp}(v^\ast)$ and $\beta = 2 c_\beta \cdot \text{sp}(v^\ast) d \sqrt{\log(dT / \delta)}$ guarantees with probability at least $1 - \delta$,
$$
R_T \leq \mathcal{O}(\text{sp}(v^\ast) \sqrt{d^3 T \log(dT / \delta)}).
$$
\end{theorem}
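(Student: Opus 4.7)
The plan is to mirror the four-term regret decomposition from the tabular analysis (Section~\ref{section:tabular}), but replace the tabular concentration and telescoping arguments by their linear-MDP counterparts, and in particular exploit the pre-planning algorithm structure to bound the analog of term $(b)$ without any dependence on $S$. Throughout, I will index everything by the episode $k$ (delimited by the determinant-doubling rule on line~\ref{alg-line:episode}) and the planning time $u$ within an episode, using the fact that within a fixed episode $k$ the covariance $\Lambda_k$ is frozen and the sequence $\{V^k_u\}_{u=t_k}^{T+1}$ is generated once at the start of the episode by backward iteration.

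First, I would prove a linear-MDP concentration lemma: for every fixed $k$ and $u$, with high probability
\[
\bigl| \langle \bm\varphi(s,a), \bm{w}^k_{u+1} \rangle + \min_{s'} \widetilde V^k_{u+1}(s') - [P V^k_{u+1}](s,a) \bigr| \leq \beta \, \|\bm\varphi(s,a)\|_{\Lambda_k^{-1}}.
\]
The shifted target $V^k_{u+1}(\cdot) - \min_{s'}\widetilde V^k_{u+1}(s')$ lies in $[0,H]$ by the clipping step on line~\ref{alg-line:clipping-linear}, so the regressor norm satisfies $\|\bm{w}^k_{u+1}\|_2 \lesssim H\sqrt{dt_k/\lambda}$ (matching the $H$-scale rather than $1/(1-\gamma)$). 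The standard self-normalized concentration of \textcite{jin2020provably} applies once we cover the class of functions of the form $s \mapsto \min\{\max_a [\langle \bm\varphi(s,a), \bm w\rangle + \min_{s'}\widetilde V(s') + \beta\|\bm\varphi(s,a)\|_{\Lambda^{-1}}] \wedge \tfrac{1}{1-\gamma},\, \min_{s'}\widetilde V(s')+H\}$ with $\|\bm w\|_2$ bounded by $H\sqrt{dt_k/\lambda}$ and the eigenvalues of $\Lambda$ bounded; this class has $\log$-covering number $\mathrm{poly}(d,\log(T/\epsilon), H, \beta)$ because the scale of the value-difference parameter is $H$ rather than $1/(1-\gamma)$. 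Taking a union bound over the at most $\mathcal{O}(d\log T)$ episodes (a direct consequence of the doubling rule and the determinant-trace inequality) and the $T$ inner indices $u$ yields a uniform concentration bound, and this is what determines the constant $c_\beta$ in the choice of $\beta$.

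Next, I prove optimism: $Q^k_u(s,a) \geq Q^\ast(s,a)$ and $V^k_u(s)\geq V^\ast(s)$ for all $u \in [t_k, T]$ and all $k$. The argument is backward induction in $u$ inside a fixed episode; the bonus in line~\ref{alg-line:value-iteration-linear} absorbs the concentration error by construction, the $\wedge\, 1/(1-\gamma)$ truncation does not destroy optimism because $V^\ast \leq 1/(1-\gamma)$, and the clipping in line~\ref{alg-line:clipping-linear} does not destroy optimism because $\mathrm{sp}(V^\ast)\leq 2\,\mathrm{sp}(v^\ast) = H$ by Lemma~\ref{lemma:discounted-approximation}(i) (exactly the role the clipping played in the tabular case, Lemma~\ref{lemma:optimism-tabular}).

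With concentration and optimism in hand, I decompose the regret exactly as in the tabular case. For $a_t = \operatorname{argmax}_a Q^{k(t)}_t(s_t,a)$ and $k(t)$ the episode containing $t$, the value iteration equation on line~\ref{alg-line:value-iteration-linear} together with the concentration lemma gives
\[
r(s_t,a_t) \geq V^{k(t)}_t(s_t) - \gamma [P V^{k(t)}_{t+1}](s_t,a_t) - 2\beta \|\bm\varphi(s_t,a_t)\|_{\Lambda_{k(t)}^{-1}},
\]
so that $R_T \leq \sum_t (J^\ast - V^{k(t)}_t(s_t) + \gamma [PV^{k(t)}_{t+1}](s_t,a_t)) + 2\beta \sum_t \|\bm\varphi(s_t,a_t)\|_{\Lambda_{k(t)}^{-1}}$, which I split into four pieces $(a),(b),(c),(d)$ paralleling the tabular proof. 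Term $(a) = \sum_t (J^\ast - (1-\gamma) V^{k(t)}_t(s_t))$ is bounded by $T(1-\gamma)\,\mathrm{sp}(v^\ast)$ via optimism and Lemma~\ref{lemma:discounted-approximation}(ii), exactly as before.

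The main obstacle, as the authors flag, is term $(b) = \sum_t (V^{k(t)}_{t+1}(s_{t+1}) - V^{k(t)}_t(s_t))$. This is where the pre-planning structure pays off: within a single episode $k$, the planned sequence $V^k_{t_k},\ldots,V^k_{T+1}$ is fixed, so the inner-episode sum telescopes as
\[
\sum_{t=t_k}^{t_{k+1}-1} \bigl(V^k_{t+1}(s_{t+1}) - V^k_t(s_t)\bigr) = V^k_{t_{k+1}}(s_{t_{k+1}}) - V^k_{t_k}(s_{t_k}) \leq \tfrac{1}{1-\gamma}.
\]
Summing over episodes gives $|(b)| \leq K/(1-\gamma)$, where $K=\mathcal{O}(d\log T)$ is the number of episodes by the determinant-doubling rule—this is the crucial improvement over the $S/(1-\gamma)$ tabular bound, achieved precisely because the algorithm decouples planning from acting. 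Term $(c) = \gamma\sum_t ([PV^{k(t)}_{t+1}](s_t,a_t) - V^{k(t)}_{t+1}(s_{t+1}))$ is a martingale difference sum with increments bounded by $\mathrm{sp}(v^\ast)$ thanks to clipping, so Azuma–Hoeffding gives $\widetilde{\mathcal{O}}(\mathrm{sp}(v^\ast)\sqrt{T})$. Term $(d) = 2\beta \sum_t \|\bm\varphi(s_t,a_t)\|_{\Lambda_{k(t)}^{-1}}$ is bounded by the elliptical-potential lemma; the standard argument applies after noting that the doubling rule ensures $\Lambda_{k(t)} \succeq \tfrac{1}{2}\bar\Lambda_t$, so we lose only a constant factor from using the frozen $\Lambda_{k(t)}$ instead of the running $\bar\Lambda_t$, yielding $\widetilde{\mathcal{O}}(\beta\sqrt{dT}) = \widetilde{\mathcal{O}}(\mathrm{sp}(v^\ast) \sqrt{d^3 T})$. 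Combining the four bounds and plugging in $1-\gamma = \sqrt{\log(T)/T}$ balances $(a)$ against $(b)$, and the final regret is dominated by $(d)$, giving the claimed $\widetilde{\mathcal{O}}(\mathrm{sp}(v^\ast)\sqrt{d^3 T\log(dT/\delta)})$.
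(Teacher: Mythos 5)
Your proposal follows essentially the same route as the paper's proof: the same concentration lemma for the shifted regression targets (with norm scale $H$ rather than $1/(1-\gamma)$), the same backward-induction optimism argument, and the same four-term decomposition in which the pre-planning structure lets term $(b)$ telescope to $\mathcal{O}(1/(1-\gamma))$ per episode, yielding $\mathcal{O}\bigl(d\log T/(1-\gamma)\bigr)$ in place of the tabular $S/(1-\gamma)$. The one detail to be careful about when writing this out is the uniform concentration bound: since the episode boundaries are data-dependent, the paper union-bounds not over episodes but over all pairs $(u,t)\in[T]\times[T]$ by defining counterfactual value functions $V_u^{(t)}$ for every candidate episode start time, which is the clean way to make your ``union bound over the $\mathcal{O}(d\log T)$ episodes and the $T$ inner indices'' rigorous.
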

The constant $c_\beta$ is an absolute constant defined in Lemma~\ref{lemma:concentration-regression}.
We expect careful analysis of the variance of the value estimate \parencite{he2023nearly} may improve our regret by a factor of $\sqrt{d}$. We leave this improvement to future work.

\subsection{Computational Complexity}

Our algorithm runs in episodes and since a new episode starts only when the determinant of the covariance matrix $\bar\Lambda_t$ doubles, there can be at most $\mathcal{O}(d \log_2 T)$ episodes (see Lemma~\ref{lemma:num-episode}).
In each episode, we run at most $T$ value iterations.
In each iteration step $u$, the algorithm computes $\min_{s'} \widetilde{V}_u^k(s')$ which requires evaluating $\widetilde{V}_u^k(s')$ at all $s' \in \mathcal{S}$, which requires $\mathcal{O}(d^2 S A)$ computations.
Also, the algorithm computes $\bm{w}_{u + 1}^k$, which requires $\mathcal{O}(d^2 + Td)$ operations.
All other operations runs in $\mathcal{O}(d^2 + A)$ per value iteration.
In total, the algorithm runs in $\mathcal{O}((\log_2 T) d^3 S A T^2)$.
See Appendix~\ref{section:computation} for detailed analysis.

The FOPO algorithm by \textcite{wei2021learning} that matches our regret bound under the same set of assumptions, has a time complexity of $\mathcal{O}(T^d \log_2 T)$.
Although our time complexity is an improvement over previous work in the sense that the time complexity is polynomial in problem parameters, it has linear dependency on $S$.
The dependency on $S$ arises from taking the minimum of value functions for clipping.
We conjecture that this dependency can be eliminated by using an estimate of the minimum rather than computing the global minimum of value functions.
For example, replacing $\min_{s'} V_u^k(s')$ with $\min_{s'} V^\ast(s')$ for clipping leads to the same regret bound (see Appendix~\ref{section:access-to-min}).
A promising approach is to use $\min_{s' \in \{s_1, \dots, s_t\}} \widetilde{V}_u^k(s')$, minimum over states visited so far, instead of the global minimum.
However, as discussed in Appendix~\ref{section:difficulty}, naively changing the clipping operation fails.
We leave eliminating the dependency on $S$ in the time complexity to future work.

\subsection{Analysis}

In this section, we outline the proof of the regret bound presented in Theorem~\ref{theorem:linear-mdp}.
We first show that the value iteration step in Line~\ref{alg-line:value-iteration-linear} with the bonus term $\beta \Vert \bm\phi(\cdot, \cdot) \Vert_{\Lambda_k^{-1}}$ with appropriately chosen $\beta$ ensures the value function estimates $V_t$ and $Q_t$ to be optimistic estimates of $V^\ast$ and $Q^\ast$, respectively.
The argument is based on the following concentration inequality for the regression coefficients.
See Appendix~\ref{appendix:concentration-linear} for a proof.

\begin{lemma}[Concentration of regression coefficients]
\label{lemma:concentration-regression}
With probability at least $1 - \delta$, there exists an absolute constant $c_\beta$ such that for $\beta = c_\beta \cdot H d \sqrt{\log(dT / \delta)}$, we have
$$
\vert \langle \bm{\phi}, \bm{w}_u^k - \bm{w}_u^k{}^\ast \rangle \vert \leq \beta \Vert \bm{\phi} \Vert_{\Lambda_{k}^{-1}}
$$
for all episode indices $k$ and for all vectors $\bm{\phi} \in \mathbb{R}^d$ where $\bm{w}_u^k{}^\ast \coloneqq \int (V_u^k(s) - \min_{s'} V_u^k(s')) d\bm\mu(s)$ is a parameter that satisfies $\langle \bm\varphi(s, a), \bm{w}_u^k{}^\ast \rangle = [PV_u^k](s, a) - \min_{s'} V_u^k(s')$.
\end{lemma}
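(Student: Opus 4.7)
The plan is to follow the self-normalized concentration strategy ubiquitous in linear MDP analysis \parencite{jin2020provably}, adapted to exploit the fact that the shifted regression target $V_u^k(s_{\tau+1}) - \min_{s'}\widetilde V_u^k(s')$ lies in $[0,H]$ rather than $[0,1/(1-\gamma)]$. First, I would decompose
$$
\bm{w}_u^k - \bm{w}_u^k{}^\ast
= \Lambda_k^{-1} \sum_{\tau=1}^{t_k-1} \bm\varphi_\tau \bigl(V_u^k(s_{\tau+1}) - [PV_u^k](s_\tau, a_\tau)\bigr) - \lambda \Lambda_k^{-1} \bm{w}_u^k{}^\ast,
$$
where the constant shifts $\min_{s'}\widetilde V_u^k(s')$ in $\bm{w}_u^k$ and in $\langle \bm\varphi(s,a), \bm{w}_u^k{}^\ast\rangle = [PV_u^k](s,a) - \min_{s'}\widetilde V_u^k(s')$ cancel, and the last term arises as the residual in $\bm{w}_u^k{}^\ast = \Lambda_k^{-1}\Lambda_k \bm{w}_u^k{}^\ast$. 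Cauchy--Schwarz in the $\Lambda_k^{-1}$ inner product then yields
$$
\bigl|\langle \bm\phi, \bm{w}_u^k - \bm{w}_u^k{}^\ast\rangle\bigr|
\le \Vert\bm\phi\Vert_{\Lambda_k^{-1}}\Bigl( \bigl\Vert {\textstyle\sum_\tau} \bm\varphi_\tau \eta_\tau \bigr\Vert_{\Lambda_k^{-1}} + \sqrt{\lambda}\,\Vert \bm{w}_u^k{}^\ast \Vert_2 \Bigr),
$$
where $\eta_\tau := V_u^k(s_{\tau+1}) - [PV_u^k](s_\tau, a_\tau)$.

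The regularization contribution is easy: clipping forces $V_u^k(s) - \min_{s'}\widetilde V_u^k(s') \in [0,H]$, so by the bound on $\bm\mu(\mathcal S)$ in \eqref{eqn:boundedness}, $\Vert \bm{w}_u^k{}^\ast \Vert_2 \le H\sqrt d$; with $\lambda = 1$ this contributes only $H\sqrt d \cdot \Vert \bm\phi \Vert_{\Lambda_k^{-1}}$, absorbed into the leading constant. The heart of the argument is the noise term, which I would bound via the self-normalized martingale inequality of Abbasi-Yadkori et al.\ (as packaged in Lemma~D.4 of \textcite{jin2020provably}) applied to $\{\bm\varphi_\tau \eta_\tau\}_\tau$. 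Because $V_u^k$ depends on the same trajectory, a direct application is not allowed; the standard remedy is to union-bound the concentration inequality over a finite $\epsilon$-covering of the realizable function class $\mathcal V$ containing $V_u^k$, and to absorb the resulting $\mathcal O(T\epsilon)$ discretization error by choosing $\epsilon = 1/T$.

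The main technical obstacle is bounding the log-covering number of $\mathcal V$. Each realizable $V_u^k$ has the form $s \mapsto \min\{\max_a Q(s,a),\, \min_{s'}\widetilde V(s') + H\}$, where $Q(s,a)$ is a truncated combination of $r(s,a)$, a linear term $\gamma\langle \bm\varphi(s,a), \bm w\rangle$, a scalar $\gamma\min_{s'}\widetilde V$, and the elliptical bonus $\gamma\beta \Vert \bm\varphi(s,a)\Vert_{\Lambda^{-1}}$. A standard ridge-regression argument gives $\Vert \bm{w}_u^k \Vert_2 \le H\sqrt{Td/\lambda}$, the scalar $c=\min_{s'}\widetilde V$ lies in $[0,1/(1-\gamma)]$, and $\Vert \Lambda^{-1}\Vert \le 1/\lambda$. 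Since truncation, maximization over $a$, the minimum-plus-$H$ clip, and the inner Bellman arithmetic are all $1$-Lipschitz, the clipped value function is Lipschitz in $(\bm w, c, \Lambda^{-1})$. A standard $\epsilon$-net construction (mirroring Lemma~D.6 of \textcite{jin2020provably}, with the extra outer clip contributing nothing beyond a $1$-Lipschitz factor) then yields $\log|\mathcal N_\epsilon| = \mathcal O\bigl(d^2 \log(1 + \mathrm{poly}(T,d,H)/\epsilon)\bigr)$. Plugging this into the self-normalized bound with $\epsilon = 1/T$ and taking a further union bound over $(u,k) \in [T]\times \mathcal O(d\log T)$ via Lemma~\ref{lemma:num-episode} gives $\Vert \sum_\tau \bm\varphi_\tau\eta_\tau\Vert_{\Lambda_k^{-1}} \le c_\beta Hd\sqrt{\log(dT/\delta)}$ for an absolute constant $c_\beta$, which combined with the regularization term completes the proof.
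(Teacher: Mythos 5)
Your proposal is correct and follows essentially the same route as the paper's proof: the same decomposition into a self-normalized noise term plus a $\lambda \Lambda_k^{-1}\bm{w}_u^k{}^\ast$ regularization residual, the same use of the span-$H$ shift so the regression targets and martingale increments are $H$-bounded, and the same covering-number/union-bound machinery (the paper's Lemmas~\ref{lemma:self-normalized-process-bound}--\ref{lemma:concentration-inequality}). The only detail you gloss over is that $\beta$ itself enters the covering number of the value-function class through the bonus term, so $c_\beta$ must be chosen self-consistently via an inequality of the form $C\sqrt{\log(c_\beta+1)+\log(dT/\delta)}\le c_\beta\sqrt{\log(dT/\delta)}$, as the paper does in its final step.
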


With the concentration inequality, we can show the following optimism result.
See Appendix~\ref{appendix:optimism-linear} for an induction-based proof.

\begin{lemma}[Optimism] \label{lemma:optimism-linear}
Under the linear MDP setting, running Algorithm~\ref{alg:lscvi-ucb} with input $H = 2 \cdot \text{sp}(v^\ast)$ guarantees with probability at least $1 - \delta$ that
for all episodes $k = 1, 2, \dots$, $u = t_k, \dots, T + 1$ and for all $(s, a) \in \mathcal{S} \times \mathcal{A}$, we have
$$
V_u^k(s) \geq V^\ast(s), \quad Q_u^k(s, a) \geq Q^\ast(s, a).
$$
\end{lemma}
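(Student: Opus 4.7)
The plan is to prove the two inequalities by backward induction on $u$ from $T+1$ down to $t_k$, separately within each episode $k$. The concentration event of Lemma~\ref{lemma:concentration-regression} holds for all $k$ and $u$ simultaneously with probability at least $1 - \delta$, so I will work on that event throughout. For the base case $u = T+1$, the algorithm sets $V_{T+1}^k(\cdot) = \widetilde{V}_{T+1}^k(\cdot) = 1/(1-\gamma)$, and since rewards lie in $[0,1]$ we automatically have $V^\ast(s) \leq 1/(1-\gamma)$, so the claim holds. The inductive hypothesis is $V_{u+1}^k(s) \geq V^\ast(s)$ for all $s$.

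For the inductive step on $Q$, since $Q^\ast(s,a) \leq 1/(1-\gamma)$ the $\wedge \, \tfrac{1}{1-\gamma}$ clipping in Line~\ref{alg-line:value-iteration-linear} is harmless, so it is enough to show
$$
r(s,a) + \gamma\bigl(\langle \bm\varphi(s,a), \bm w_{u+1}^k\rangle + \min_{s'}\widetilde V_{u+1}^k(s') + \beta\Vert \bm\varphi(s,a)\Vert_{\Lambda_k^{-1}}\bigr) \geq Q^\ast(s,a).
$$
A key observation I would record first is that the clipping step $V = \widetilde V \wedge (\min \widetilde V + H)$ never modifies $\widetilde V$ at a minimizing state, so $\min_{s'} V_{u+1}^k(s') = \min_{s'} \widetilde V_{u+1}^k(s')$. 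Combined with the definition of $\bm w_{u+1}^k{}^\ast$ in Lemma~\ref{lemma:concentration-regression}, this gives $\langle \bm\varphi(s,a), \bm w_{u+1}^k{}^\ast\rangle = [P V_{u+1}^k](s,a) - \min_{s'} \widetilde V_{u+1}^k(s')$. Applying the one-sided concentration bound $\langle \bm\varphi(s,a), \bm w_{u+1}^k\rangle \geq \langle \bm\varphi(s,a), \bm w_{u+1}^k{}^\ast\rangle - \beta\Vert\bm\varphi(s,a)\Vert_{\Lambda_k^{-1}}$ reduces the desired inequality to $r(s,a) + \gamma [PV_{u+1}^k](s,a) \geq Q^\ast(s,a)$, which follows from the inductive hypothesis and the Bellman equation for $Q^\ast$.

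The inductive step for $V$ is where the clipping cap $H = 2\,\text{sp}(v^\ast)$ must be used carefully, and I expect this to be the main (though still short) obstacle. From $Q_u^k \geq Q^\ast$ and the greedy step we get $\widetilde V_u^k(s) \geq V^\ast(s)$; in particular $\min_{s'} \widetilde V_u^k(s') \geq \min_{s'} V^\ast(s')$. Writing $V_u^k(s) = \widetilde V_u^k(s) \wedge \bigl(\min_{s'} \widetilde V_u^k(s') + H\bigr)$, the first branch is already $\geq V^\ast(s)$; for the second branch I would invoke Lemma~\ref{lemma:discounted-approximation}(i), which yields $\text{sp}(V^\ast) \leq 2\,\text{sp}(v^\ast) = H$, so that
$$
\min_{s'} \widetilde V_u^k(s') + H \;\geq\; \min_{s'} V^\ast(s') + \text{sp}(V^\ast) \;\geq\; V^\ast(s).
$$
Hence both branches dominate $V^\ast(s)$, completing the induction. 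The entire argument depends only on the preservation of the minimum under clipping, the concentration inequality, and the span bound from Lemma~\ref{lemma:discounted-approximation}; the particular choice $H = 2\,\text{sp}(v^\ast)$ is tight exactly for the purpose of making the clipping compatible with optimism.
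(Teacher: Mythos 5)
Your proof is correct and follows essentially the same route as the paper's: backward induction on $u$ within each episode, using the concentration bound of Lemma~\ref{lemma:concentration-regression} together with the Bellman optimality equation for the $Q$-step, and the span bound $\text{sp}(V^\ast) \leq 2\,\text{sp}(v^\ast)$ from Lemma~\ref{lemma:discounted-approximation} to show the clipping cap does not break optimism. Your explicit observation that clipping preserves the minimum (so $\min_{s'} V_{u+1}^k(s') = \min_{s'} \widetilde V_{u+1}^k(s')$) is a nice touch that resolves a notational discrepancy the paper glosses over between the algorithm's use of $\widetilde V$ and the lemma's use of $V$, but it does not change the substance of the argument.
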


Now, we show the regret bound under the event that the high probability events in the previous two lemmas (Lemma~\ref{lemma:concentration-regression}, Lemma~\ref{lemma:optimism-linear}) hold.
Let $t$ be a time step in episode $k$ such that both $t$ and $t + 1$ are in episode $k$.
By the definition of $Q^k_u(\cdot, \cdot)$ (Line~\ref{alg-line:value-iteration-linear}), we have for all $t = t_k, \dots, T + 1$ and $(s, a) \in \mathcal{S} \times \mathcal{A}$ that
\begin{align*}
r(s, a)
&\geq Q^k_t(s, a) - \gamma ( \langle \bm\varphi(s, a), \bm{w}^k_{t + 1} \rangle + \min_{s'} V^k_{t + 1}(s') - \beta \Vert \bm\varphi(s, a) \Vert_{\Lambda_k^{-1}}) \\
&\geq
Q^k_t(s, a) - \gamma [PV^k_{t + 1}](s, a) - 4 \beta \Vert \bm\varphi(s, a) \Vert_{\bar\Lambda_t^{-1}}
\end{align*}
where the second inequality uses the concentration bound for the regression coefficients in Lemma~\ref{lemma:concentration-regression}.
It also uses $\Vert \bm{x} \Vert_{\Lambda_k^{-1}} \leq 2 \Vert \bm{x} \Vert_{\Lambda_t^{-1}}$ (Lemma~\ref{lemma:doubling}).
Hence, we can bound the regret in episode $k$ by
\begin{align*}
R^k &= \sum_{t = t_k}^{t_{k + 1} - 1} (J^\ast - r(s_t, a_t)) \\
&\leq \sum_{t = t_k}^{t_{k + 1} - 1} (J^\ast - Q^k_t(s_t, a_t) + \gamma [PV_{t + 1}^k](s_t, a_t) + 4 \beta \Vert \bm\varphi(s_t, a_t) \Vert_{\bar\Lambda_t^{-1}} ),
\end{align*}
which can be decomposed into
\begin{align*}
&=
\underbrace{ \sum_{t = t_k}^{t_{k + 1} - 1} (J^\ast - (1 - \gamma) V^k_{t + 1}(s_{t + 1}))}_{(a)} +
\gamma \underbrace{ \sum_{t = t_k}^{t_{k + 1} - 1} (V^k_{t + 1}(s_{t + 1}) - Q^k_t(s_t, a_t))}_{(b)} \\
&\hspace{20mm}+ \gamma \underbrace{\sum_{t = t_k}^{t_{k + 1} - 1} [PV^k_{t + 1}](s_t, a_t) - V^k_{t + 1}(s_{t + 1}))}_{(c)} + 4 \beta \underbrace{\sum_{t = t_k}^{t_{k + 1} - 1} \Vert \bm\varphi(s_t, a_t) \Vert_{\bar\Lambda_{t}^{-1}}}_{(d)}
\end{align*}
where the first inequality uses the bound for $r(s_t, a_t)$.
With the same argument as in the tabular case, the term $(a)$ summed over all episodes can be bounded by $T(1 - \gamma) \text{sp}(v^\ast)$ using the optimism $V^k_u(s_{t + 1}) \geq V^\ast(s_{t + 1})$, and Lemma~\ref{lemma:discounted-approximation} that bounds $\vert J^\ast - (1 - \gamma) V^\ast(s) \vert$ for all $s \in \mathcal{S}$.
Term $(d)$, summed over all episodes, can be bounded by $\mathcal{O}(\beta \sqrt{dT \log T})$ using Cauchy-Schwartz and Lemma~\ref{lemma:bonus-term-linear}.
Term $(c)$, summed over all episodes, is a sum of a martingale difference sequence, which can be bounded by $\mathcal{O}(\text{sp}(v^\ast) \sqrt{T \log(1 / \delta)})$ since $\text{sp}(V^k_u) \leq 2 \cdot \text{sp}(v^\ast)$ by the clipping step in Line~\ref{alg-line:clipping-linear}.
\paragraph{Bounding Term $\bm{(b)}$}
To bound term $(b)$ note that
\begin{align*}
V^k_{t + 1}(s_{t + 1})
&\leq \widetilde{V}^k_{t + 1}(s_{t + 1}) \\
&= \max_a Q^k_{t + 1}(s_{t + 1}, a) \\
&= Q^k_{t + 1}(s_{t + 1}, a_{t + 1})
\end{align*}
as long as the time step $t + 1$ is in episode $k$, since the algorithm chooses $a_{t + 1}$ that maximizes $Q^k_{t + 1}(s_{t + 1}, \cdot)$.
Hence,
\begin{align*}
\sum_{t = t_k}^{t_{k + 1} - 1} (V^k_{t + 1}(s_{t + 1}) - Q^k_t(s_t, a_t))
&\leq \frac{1}{1 - \gamma} + \sum_{t = t_k}^{t_{k + 1} - 2} (Q^k_{t + 1}(s_{t + 1}, a_{t + 1}) - Q^k_t(s_t, a_t)) \\
&\leq \mathcal{O}\left(\frac{1}{1 - \gamma}\right)
\end{align*}
where the second inequality uses telescoping sum and the fact that $Q^k_t \leq \frac{1}{1 - \gamma}$.
Since the episode is advanced when the determinant of the covariance matrix doubles, it can be shown that the number of episodes is bounded by $\mathcal{O}(d \log (T))$ (Lemma~\ref{lemma:num-episode}).
Combining all the bounds, and using $\beta = \mathcal{O}(\text{sp}(v^\ast) d \sqrt{\log(dT / \delta)}$, we get
\begin{align*}
R_T &\leq \mathcal{O}\Big(T(1 - \gamma) \text{sp}(v^\ast) + \frac{d}{1 - \gamma} \log(T) + \text{sp}(v^\ast) \sqrt{T \log(1 / \delta)} + \text{sp}(v^\ast) \sqrt{d^3 T \log(dT / \delta)} \Big).
\end{align*}
Setting $\gamma = 1 - \sqrt{(\log T) / T}$, we get
$$
R_T \leq \mathcal{O}\left(
\text{sp}(v^\ast) \sqrt{d^3 T \log(dT / \delta)}
\right),
$$
which concludes the proof of Theorem~\ref{theorem:linear-mdp}.

\section{Conclusion}
In this paper, we propose an algorithm with time complexity polynomial in the problem parameters that achieves $\widetilde{\mathcal{O}}(\sqrt{T})$ regret for infinite-horizon average-reward linear MDPs without making a strong ergodicity assumption on the dynamics.
Our algorithm approximates the average-reward setting by the discounted setting with a carefully tuned discounting factor.
A key technique that allows for order optimal regret bound is bounding the span of the value function in each value iteration step via clipping.
Additionally, we precompute a sequence of action value functions by running value iterations, then use them in reverse order for taking actions.
Eliminating the dependence on the size of the state space in time complexity remains an open problem.
Another promising direction for future work would be to extend these methods to the general function approximation setting.

\section{Acknowledgements}

KH and AT acknowledge the support of the National Science Foundation (NSF) under grant IIS-2007055. DL acknowledges the support of the National Research Foundation of Korea (NRF) under grants No. RS-2024-00350703 and No. RS-2024-00410082.

\section*{References}
\printbibliography[heading=none]

\newpage

\appendix

\section{Tabular Setting} \label{section:analysis-tabular}

Central to the analysis of the concentration bound for the approximate Bellman backup is the following concentration bound for scalar-valued self-normalized processes.
\begin{lemma}[Concentration of Scalar-Valued Self-Normalized Processes \label{lemma:scalar-process-concentration}
\parencite{abbasi2012online}]
Let $\{\varepsilon_t\}_{t = 1}^\infty$ be a real-valued stochastic process with corresponding filtration $\{\mathcal{F}_t \}_{t = 0}^\infty$.
Let $\varepsilon_t | \mathcal{F}_{t - 1}$ be zero-mean and $\sigma$-subgaussian.
Let $\{Z_t \}_{t = 0}^\infty$ be an $\mathbb{R}$-valued stochastic process where $Z_t \in \mathcal{F}_{t - 1}$.
Assume $W > 0$ is deterministic.
Then for any $\delta > 0$, with probability at least $1 - \delta$, we have for all $t \geq 0$ that
$$
\frac{( \sum_{s = 1}^t Z_s \varepsilon_s)^2}{W + \sum_{s = 1}^t Z_s^2} \leq 2 \sigma^2 \log \left( \frac{\sqrt{W + \sum_{s = 1}^t Z_s^2}}{\delta \sqrt{W}} \right).
$$
\end{lemma}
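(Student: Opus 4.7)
The plan is to prove this via the method of mixtures (a.k.a.\ Laplace's method / pseudo-maximization), which is the standard approach in \textcite{abbasi2012online} and traces back to classical self-normalized concentration results. The first step is to build the exponential supermartingale. Writing $S_t = \sum_{s=1}^t Z_s \varepsilon_s$ and $V_t = \sum_{s=1}^t Z_s^2$, I define for each fixed $\lambda \in \mathbb{R}$
$$
M_t^\lambda = \exp\!\left( \lambda S_t - \tfrac{1}{2} \lambda^2 \sigma^2 V_t \right).
$$
The conditional $\sigma$-subgaussianity of $\varepsilon_t$ gives $\mathbb{E}[\exp(\lambda Z_t \varepsilon_t) \mid \mathcal{F}_{t-1}] \leq \exp(\lambda^2 \sigma^2 Z_t^2 / 2)$ because $Z_t$ is $\mathcal{F}_{t-1}$-measurable, so $\{M_t^\lambda\}$ is a nonnegative supermartingale with $\mathbb{E}[M_t^\lambda] \leq M_0^\lambda = 1$.

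The second step is to mix over $\lambda$ using a Gaussian prior to eliminate the need to pick $\lambda$ in advance. Let $\pi$ be the density of $\mathcal{N}(0, 1/(\sigma^2 W))$ and set $\bar M_t = \int_{\mathbb{R}} M_t^\lambda \, \pi(\lambda) \, d\lambda$. By Fubini, $\bar M_t$ is also a nonnegative supermartingale with $\mathbb{E}[\bar M_t] \leq 1$. The integrand is Gaussian in $\lambda$, so completing the square and evaluating the integral yields the closed form
$$
\bar M_t = \sqrt{\frac{W}{W + V_t}} \, \exp\!\left( \frac{S_t^2}{2 \sigma^2 (W + V_t)} \right).
$$
This is the crucial identity: self-normalization appears automatically because the prior variance was matched to the deterministic offset $W$.

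The third step is a uniform-in-$t$ tail bound via Ville's maximal inequality for nonnegative supermartingales: $\mathbb{P}(\sup_{t \geq 0} \bar M_t \geq 1/\delta) \leq \delta \, \mathbb{E}[\bar M_0] \leq \delta$. On the complementary event, $\bar M_t < 1/\delta$ for all $t$, and taking logarithms of the closed-form expression and rearranging yields
$$
\frac{S_t^2}{2 \sigma^2 (W + V_t)} \leq \log\!\left( \frac{1}{\delta} \sqrt{\frac{W + V_t}{W}} \right) = \log\!\left( \frac{\sqrt{W + V_t}}{\delta \sqrt{W}} \right),
$$
which is exactly the claimed inequality after multiplying both sides by $2\sigma^2$.

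The main obstacle is the mixture computation in the second step: one has to choose the prior variance $1/(\sigma^2 W)$ precisely so that the Gaussian integral cleanly produces the normalizer $\sqrt{W/(W+V_t)}$ and the self-normalized exponent $S_t^2 / (2\sigma^2(W + V_t))$. A smaller technical point is ensuring the supermartingale property carries through the Fubini step, which requires verifying integrability of $M_t^\lambda$ against $\pi$ uniformly in $t$; this follows from the explicit closed form above, since the integral is finite whenever $V_t \geq 0$. Everything else is routine manipulation.
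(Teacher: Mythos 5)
Your proof is correct: the exponential supermartingale, the Gaussian mixture with prior variance $1/(\sigma^2 W)$ yielding the closed form $\sqrt{W/(W+V_t)}\exp\bigl(S_t^2/(2\sigma^2(W+V_t))\bigr)$, and Ville's maximal inequality together give exactly the stated bound. The paper itself states this lemma as an imported result from \textcite{abbasi2012online} without reproving it, and your argument is essentially the method-of-mixtures proof from that reference, so there is nothing to reconcile.
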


\subsection{Proof of Lemma~\ref{lemma:model-error-tabular}} \label{section:model-error-tabular}

To show a bound for $\vert (\widehat{P}_t - P) V_t(s, a) \vert$ uniformly on $t \in [T]$, we use a covering argument on the function class that captures $V_t$.
Note that the value functions $V_t$ defined in the algorithm always lie in the following function class.
$$
\mathcal{V}_{\text{tabular}} = \{ v \in \mathbb{R}^\mathcal{S} : v(s) \in [0, \textstyle \frac{1}{1 - \gamma}] ~\text{for all}~ s \in \mathcal{S} \}.
$$
We first bound the error for a fixed value function in $\mathcal{V}_\text{tabular}$.
Afterward, we will use a covering argument to get a uniform bound over $\mathcal{V}_{\text{tabular}}$.
\begin{lemma} \label{lemma:fixed-value-error-tabular}
 Fix any $V\in\mathcal{V}_\text{tabular}$. There exists  some constant $C$ such that for any $\delta\in(0,1)$, with probability at least $1-\delta$, we have:
$$
\vert [(\widehat{P}_t - P)V](s, a)\vert\le C\text{sp}(v^\ast)\sqrt{\frac{\log(SAT/\delta)}{N_t(s,a)}}
$$
for all $(s,a)\in \mathcal{S}\times \mathcal{A}$ and $t=1,\dots,T$.
\end{lemma}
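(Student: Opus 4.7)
My plan is to view $[(\widehat{P}_t - P)V](s,a)$ as (up to normalization by $N_t(s,a)$) a self-normalized scalar martingale sum, apply Lemma~\ref{lemma:scalar-process-concentration} pointwise in $(s,a)$, and conclude by a union bound over $\mathcal{S}\times\mathcal{A}$. The key preliminary reduction is translation invariance: since both $\widehat{P}_t(\cdot\mid s,a)$ and $P(\cdot\mid s,a)$ are probability measures, $[(\widehat{P}_t - P)(V+c)](s,a) = [(\widehat{P}_t - P)V](s,a)$ for any constant $c$. Recentering $V$ so that $\min_s V(s)=0$, the effective boundedness scale that enters the subgaussian parameter becomes $\text{sp}(V)$ rather than the naive $1/(1-\gamma)$. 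The appearance of $\text{sp}(v^\ast)$ in the statement then follows because Line~\ref{alg-line:projection} of Algorithm~\ref{alg:ucbvi} forces the only $V$'s to which we ever apply this bound to satisfy $\text{sp}(V)\leq 2\,\text{sp}(v^\ast)$.

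Concretely, fix $(s,a)$ and define the filtration $\mathcal{F}_\tau = \sigma(s_1, a_1,\ldots, s_{\tau+1})$. Set $Z_\tau = \mathbf{1}\{(s_\tau, a_\tau)=(s,a)\}$ and $\varepsilon_\tau = V(s_{\tau+1}) - [PV](s_\tau, a_\tau)$. Then $Z_\tau$ is $\mathcal{F}_{\tau-1}$-measurable (since $a_\tau$ is chosen from history through $s_\tau$), $\mathbb{E}[\varepsilon_\tau\mid\mathcal{F}_{\tau-1}] = 0$ by the Markov property, and Hoeffding's lemma applied to the bounded recentered $V$ shows that $\varepsilon_\tau$ is $(\text{sp}(V)/2)$-subgaussian conditional on $\mathcal{F}_{\tau-1}$. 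Using the identities $\sum_{\tau\leq t} Z_\tau^2 = N_t(s,a)$ and $\sum_{\tau\leq t} Z_\tau\varepsilon_\tau = N_t(s,a)\cdot[(\widehat{P}_t - P)V](s,a)$, Lemma~\ref{lemma:scalar-process-concentration} with $W=1$ and failure probability $\delta/(SA)$ yields, uniformly in $t\in[T]$,
\[
\vert [(\widehat{P}_t - P)V](s,a)\vert \;\leq\; \mathcal{O}\!\left(\text{sp}(V)\sqrt{\log(SAT/\delta)\big/N_t(s,a)}\right),
\]
after rearranging and using $(1 + N_t(s,a))/N_t(s,a)^2 \leq 2/N_t(s,a)$.

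A union bound over the $SA$ pairs $(s,a)$ makes this bound simultaneous, and substituting $\text{sp}(V)\leq 2\,\text{sp}(v^\ast)$ produces the claimed inequality with $C$ an absolute constant. I do not expect a real obstacle here: the argument is a routine application of self-normalized scalar concentration. The only point that needs care is indexing the filtration so that $Z_\tau$ is predictable while $\varepsilon_\tau$ is a conditionally zero-mean martingale difference; once this is set up, the self-normalized form delivers uniformity in $t$ for free, saving the $\log T$ factor that a naive Hoeffding-plus-union-bound-over-time-steps argument would otherwise incur. (Azuma--Hoeffding with an explicit union bound over $t\in[T]$ would give the same rate; I favor the self-normalized form only for cleanness.)
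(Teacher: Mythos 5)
Your proof follows essentially the same route as the paper's: the same reduction to a scalar self-normalized martingale sum with $Z_\tau$ the visitation indicator and $\varepsilon_\tau = V(s_{\tau+1}) - [PV](s_\tau,a_\tau)$, the same application of Lemma~\ref{lemma:scalar-process-concentration} with $W=1$, and the same union bound over the $SA$ state--action pairs. If anything you are more careful on the one delicate point: the paper simply asserts that $\varepsilon_\tau$ has range $2\,\text{sp}(v^\ast)$ even though the lemma quantifies over all of $\mathcal{V}_{\text{tabular}}$, whereas you make explicit that the subgaussian parameter is really $\text{sp}(V)/2$ and that the $\text{sp}(v^\ast)$ scale enters only because the clipping step restricts the span of every $V$ to which the bound is actually applied.
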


\begin{proof}

Fix any $(s,a)\in \mathcal{S}\times\mathcal{A}$. By definition, we have: 
$$
[(\widehat{P}_t - P)V](s, a) = \frac{1}{N_t(s, a)}\sum_{\tau = 1}^t \mathbb{I}\{s_\tau=s, a_\tau = a\} [V(s_{\tau + 1}) - [P V](s, a)].
$$

Let $\varepsilon_t = V(s_{t + 1}) - [PV](s_t, a_t)$, $Z_t = \mathbb{I}\{ s_t = s, a_t = a\}$, and $W=1$.
Since the range of $\varepsilon_t$ is bounded by $2\cdot \text{sp}(v^\ast)$, it is $\text{sp}(v^\ast)$-subgaussian. By Lemma \ref{lemma:scalar-process-concentration}, we know for some constant $C$, with probability at least $1 - \delta$,  for all $t=1,\dots, T$, we have
\begin{align*}
\vert [(\widehat{P}_t - P)V](s, a)\vert
&= \frac{\vert \sum_{s=1}^t Z_s\varepsilon_s \vert}{1 + \sum_{s=1}^t Z_s^2} \\
&\le C \cdot \text{sp}(v^\ast)\sqrt{\frac{\log(\sqrt{N_t(s,a)}/\delta)}{N_t(s,a)}} \\
&\le C \cdot \text{sp}(v^\ast)\sqrt{\frac{\log(T/\delta)}{N_t(s,a)}}.
\end{align*}

Applying a union bound for all $(s,a)\in\mathcal{S}\times \mathcal{A}$ gives us the desired inequality.

\end{proof}

We use $\mathcal{N_\epsilon}$ to denote the $\epsilon$-covering number of $\mathcal{V}_\text{tabular}$ with respect to the distance $\text{dist}(V,V')=\Vert V-V'\Vert_\infty$.
Using a grid of size $\epsilon$, since functions in $\mathcal{V}_{\text{tabular}}$ has the range $[0, \frac{1}{1 - \gamma}]$, it can be seen that $\log \mathcal{N}_\epsilon\le S\log \frac{1}{\epsilon(1-\gamma)}$.
Now, we prove a uniform concentration bound using a covering argument on $\mathcal{V}_{\text{tabular}}$.

\begin{proof}[Proof of Lemma~\ref{lemma:model-error-tabular}]
Note that $V_t \in \mathcal{V}_{\text{tabular}}$ for all $t$.
Consider an $\epsilon$-cover of $\mathcal{V}_{\text{tabular}}$.
For any $V_t \in \mathcal{V}_{\text{tabular}}$, there exists $\widetilde{V}_t$ in the $\epsilon$-cover such that $\sup_s \vert V_t(s) - \widetilde{V}_t(s) \vert \leq \epsilon$.
Thus, we have 
$$
\vert [(\widehat{P}_t - P) V_t](s, a) \vert \le \vert [(\widehat{P}_t - P) \widetilde{V}_t](s, a) \vert + \vert [(\widehat{P}_t - P) (V_t - \widetilde{V}_t)](s, a) \vert \le \vert [(\widehat{P}_t - P) \widetilde V](s, a) \vert + 2\epsilon.
$$

We then apply Lemma \ref{lemma:fixed-value-error-tabular} and a union bound to obtain:
$$
\begin{aligned}
\vert [(\widehat{P}_t - P) V_t](s, a) \vert
&\le C \cdot \text{sp}(v^\ast)\sqrt{\frac{\log(SAT\mathcal{N}_\epsilon/\delta)}{N_t(s,a)}}+2\epsilon \\
&\le C \cdot \text{sp}(v^\ast)\sqrt{\frac{ \log (SAT / \delta) + S \log (1 / (\epsilon (1 - \gamma)))}{N_t(s,a)}}+2\epsilon.
\end{aligned}
$$

Picking $\epsilon=\frac{1}{\sqrt{T}}$ concludes the proof.
\end{proof}

\subsection{Proof of Lemma~\ref{lemma:optimism-tabular}} \label{section:optimism-tabular}

\begin{proof}[Proof of Lemma~\ref{lemma:optimism-tabular}]
We prove by induction on $t \geq 1$.    
The base case $t = 1$ is trivial since Algorithm~\ref{alg:ucbvi} initializes $V_1(\cdot) = \frac{1}{1 - \gamma}$, $Q_1(\cdot, \cdot) = \frac{1}{1 - \gamma}$.
Now, suppose $V_1, \dots, V_t \geq V^\ast$ and $Q_1, \dots, Q_t \geq Q^\ast$.

We first show that $Q_{t+1}(s,a)\ge Q^\ast(s,a)$. By the Bellman optimality equation for the discounted setting, we have for all $(s, a) \in \mathcal{S} \times \mathcal{A}$ that
$$
Q^\ast(s, a) = r(s, a) + \gamma [PV^\ast](s, a).
$$
Fix any pair $(s, a) \in \mathcal{S} \times \mathcal{A}$.
By the definition of $Q_{t + 1}$ in Line~\ref{alg-line:q} of Algorithm~\ref{alg:ucbvi}, we have
\begin{align*}
Q_{t + 1}(s, a)
&= (r(s, a) + \gamma [\widehat{P}_t V_t](s, a) + \beta / \sqrt{N_t(s, a)}) \wedge Q_t(s, a) \\
&\geq (r(s, a) + \gamma [P V_t](s, a)) \wedge Q_t(s, a) \\
&\geq (r(s, a) + \gamma [P V^\ast](s, a)) \wedge Q^\ast(s, a) \\
&= Q^\ast(s, a)
\end{align*}
where the first inequality is by the concentration inequality in Lemma~\ref{lemma:model-error-tabular} and our choice of $\beta$ in Theorem~\ref{thm:tabular}, and the second inequality is by the induction hypotheses $V_t \geq V^\ast$ and $Q_t \geq Q^\ast$.
The last equality is by the Bellman optimality equation.

Now, we show $V_{t + 1}(s) \geq V^\ast(s)$.
By the definition of $\widetilde{V}_{t+1}$ in Line \ref{alg-line:v-tilde} of Algorithm \ref{alg:ucbvi}, we have
$$
\begin{aligned}
\widetilde{V}_{t + 1}(s)
&= (\max_a Q_{t + 1}(s, a)) \wedge V_t(s) \\
&\geq (\max_a Q^\ast(s, a)) \wedge V^\ast(s) \\
&= V^\ast(s)
\end{aligned}
$$
where the inequality is by the optimism of $Q_{t + 1}$ we just proved, and the induction hypothesis $V_t \geq V^\ast$.

Finally, by the definition of $V_{t+1}$ in Line \ref{alg-line:projection} of Algorithm \ref{alg:ucbvi}, we have
$$
V_{t + 1}(s) = \widetilde V_{t+1}(s)\wedge (\min_{s'}\widetilde V_{t+1}(s')+\text{sp}(v^\ast)) \geq \widetilde V_{t+1}(s)\ge V^\ast(s),
$$
which completes the proof by induction.
\end{proof}

\subsection{Omitted Proofs}

\begin{lemma}[Sum of Bonus Terms] \label{lemma:sum-of-bonus-terms-tabular}
Consider running Algorithm~\ref{alg:ucbvi}.
The sum of the bonus terms can be bounded by
$$
\sum_{t = 1}^T \sqrt{1 / N_{t - 1}(s_t, a_t)} \leq 2 \sqrt{SAT}.
$$
\end{lemma}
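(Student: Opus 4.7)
The plan is to regroup the sum by state-action pairs and then apply Cauchy--Schwarz. For each pair $(s, a) \in \mathcal{S} \times \mathcal{A}$, let $n_{s,a}$ denote the number of time steps $t \in [T]$ at which $(s_t, a_t) = (s, a)$, so that $\sum_{(s,a)} n_{s,a} = T$. Recall from the initialization in Algorithm~\ref{alg:ucbvi} that $N_0(s, a) = 1$ and that $N_t(s, a)$ is incremented by exactly one each time $(s, a)$ is visited. Consequently, the sequence of values $N_{t-1}(s, a)$ observed at the visit times of $(s, a)$ is precisely $1, 2, \dots, n_{s,a}$.

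The first step is to rewrite the sum as
$$
\sum_{t=1}^T \frac{1}{\sqrt{N_{t-1}(s_t, a_t)}} = \sum_{(s,a) \in \mathcal{S} \times \mathcal{A}} \sum_{n=1}^{n_{s,a}} \frac{1}{\sqrt{n}}.
$$
Next, I would apply the elementary inequality $\sum_{n=1}^{N} n^{-1/2} \leq 2\sqrt{N}$, which follows by comparison with $\int_0^N x^{-1/2}\,dx = 2\sqrt{N}$, to each inner sum. This yields
$$
\sum_{t=1}^T \frac{1}{\sqrt{N_{t-1}(s_t, a_t)}} \leq 2 \sum_{(s,a) \in \mathcal{S} \times \mathcal{A}} \sqrt{n_{s,a}}.
$$

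Finally, I would invoke Cauchy--Schwarz on the resulting sum of square roots over the $SA$ state-action pairs, together with $\sum_{(s,a)} n_{s,a} = T$, to obtain
$$
2 \sum_{(s,a)} \sqrt{n_{s,a}} \leq 2 \sqrt{SA \cdot \sum_{(s,a)} n_{s,a}} = 2\sqrt{SAT},
$$
which is the desired bound. No genuine obstacle is anticipated: the argument is a standard pigeonhole-plus-Cauchy--Schwarz computation, and the only mild subtlety is confirming that the offset induced by the initialization $N_0(s,a) = 1$ lines up so that the visit-time values of $N_{t-1}(s,a)$ are exactly $1, 2, \dots, n_{s,a}$ rather than $0, 1, \dots, n_{s,a}-1$ (which would cause a divergent first term).
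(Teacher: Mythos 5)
Your proposal is correct and follows essentially the same route as the paper's proof: regroup the sum by state-action pairs (noting that the initialization $N_0(s,a)=1$ makes the visit-time values of $N_{t-1}(s,a)$ equal to $1,2,\dots,n_{s,a}$), apply $\sum_{n=1}^{N} n^{-1/2} \leq 2\sqrt{N}$, and finish with Cauchy--Schwarz using $\sum_{(s,a)} n_{s,a} = T$. No gaps.
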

\begin{proof}
Recall that $N_t(s, a) = 1 + \sum_{s = 1}^t \mathbb{I}\{ s_t = s, a_t = a \}$.
For convenience, write $n_t(s, a) = \sum_{s = 1}^t \mathbb{I} \{ s_t = s, a_t = a\}$ such that $N_t(s, a) = 1 + n_t(s, a)$.
Then,
\begin{align*}
\sum_{t = 1}^T\sqrt{1 / N_{t - 1}(s_t, a_t)}
&\leq \sum_{s \in \mathcal{S}} \sum_{a \in \mathcal{A}} \sum_{n = 1}^{n_T(s, a)} \sqrt{1 / n} \\
&\leq 2 \sum_{s \in \mathcal{S}} \sum_{a \in \mathcal{A}} \sqrt{n_T(s, a)} \\
&\leq 2 \sqrt{SA T}
\end{align*}
where the second inequality uses the identity $\sum_{n = 1}^N 1 / \sqrt{n} \leq 2 \sqrt{N}$, and the last inequality is by Cauchy-Schwarz and the fact that $\sum_s \sum_a n_T(s, a) = T$.
\end{proof}

\section{Linear MDP Setting} \label{appendix:linear-mdp}

Central to the analysis of the concentration bound for the approximate Bellman backup is the following concentration bound for scalar-valued self-normalized processes.
\begin{lemma}[Concentration of vector-valued self-normalized processes \parencite{abbasi2011improved}]
Let $\{\varepsilon_t\}_{t = 1}^\infty$ be a real-valued stochastic process with corresponding filtration $\{\mathcal{F}_t \}_{t = 0}^\infty$.
Let $\varepsilon_t | \mathcal{F}_{t - 1}$ be zero-mean and $\sigma$-subgaussian.
Let $\{\phi_t \}_{t = 0}^\infty$ be an $\mathbb{R}^d$-valued stochastic process where $\phi_t \in \mathcal{F}_{t - 1}$.
Assume $\Lambda_0$ is a $d \times d$ positive definite matrix, and let $\Lambda_t = \Lambda_0 + \sum_{s = 1}^t \phi_s \phi_s^T$.
Then for any $\delta > 0$, with probability at least $1 - \delta$, we have for all $t \geq 0$ that
$$
\left\Vert \sum_{s = 1}^t \phi_s \varepsilon_s \right\Vert_{\Lambda_t^{-1}}^2 \leq 2 \sigma^2 \log \left( \frac{\text{det}(\Lambda_t)^{1/2} \text{det}(\Lambda_0)^{-1/2}}{\delta} \right).
$$
\end{lemma}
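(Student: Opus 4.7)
The plan is to follow the standard \emph{method of mixtures} argument. The target inequality controls $\|S_t\|_{\Lambda_t^{-1}}^2$, where $S_t \coloneqq \sum_{s=1}^t \phi_s \varepsilon_s$ and $V_t \coloneqq \sum_{s=1}^t \phi_s \phi_s^T$ (so $\Lambda_t = \Lambda_0 + V_t$), uniformly in $t$. The natural starting point is to build, for every fixed vector $x \in \mathbb{R}^d$, an exponential supermartingale and then average over $x$ against a Gaussian prior whose covariance is tuned to $\Lambda_0$, so that the quadratic form in the exponent collapses to exactly $\|S_t\|_{\Lambda_t^{-1}}^2$.

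Concretely, first I would define, for each $x \in \mathbb{R}^d$,
\[
M_t(x) \coloneqq \exp\!\left( \tfrac{1}{\sigma} x^T S_t - \tfrac{1}{2} x^T V_t x \right),
\]
and verify that $\{M_t(x)\}_{t \geq 0}$ is a non-negative supermartingale with $M_0(x) = 1$. The key computation is
\[
\mathbb{E}\!\left[ \exp\!\left( \tfrac{1}{\sigma} x^T \phi_t \varepsilon_t \right) \,\middle|\, \mathcal{F}_{t-1} \right] \leq \exp\!\left( \tfrac{1}{2} x^T \phi_t \phi_t^T x \right),
\]
which is immediate from $\phi_t \in \mathcal{F}_{t-1}$ and the $\sigma$-subgaussian MGF bound on $\varepsilon_t \mid \mathcal{F}_{t-1}$. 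Multiplying the conditional bounds telescopically gives $\mathbb{E}[M_t(x) \mid \mathcal{F}_{t-1}] \leq M_{t-1}(x)$.

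Next I would form the mixture
\[
\bar M_t \coloneqq \int_{\mathbb{R}^d} M_t(x)\, h(x)\, dx, \qquad h(x) \coloneqq \frac{\det(\Lambda_0)^{1/2}}{(2\pi)^{d/2}} \exp\!\left(-\tfrac{1}{2} x^T \Lambda_0 x\right),
\]
which is still a non-negative supermartingale by Fubini and has $\mathbb{E}[\bar M_0] = 1$. Inside the integrand, the exponent becomes $\tfrac{1}{\sigma} x^T S_t - \tfrac{1}{2} x^T \Lambda_t x$; completing the square around $x_\star = \tfrac{1}{\sigma} \Lambda_t^{-1} S_t$ and carrying out the resulting Gaussian integral yields the clean closed form
\[
\bar M_t = \left( \frac{\det(\Lambda_0)}{\det(\Lambda_t)} \right)^{1/2} \exp\!\left( \frac{1}{2\sigma^2} \|S_t\|_{\Lambda_t^{-1}}^2 \right).
\]
This is the step where the choice of $h$ is tailored so that the Gaussian quadratic glues with $V_t$ to give exactly $\Lambda_t$, producing $\|S_t\|_{\Lambda_t^{-1}}^2$.

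Finally I would invoke Ville's maximal inequality for non-negative supermartingales, giving $\mathbb{P}(\sup_{t \geq 0} \bar M_t \geq 1/\delta) \leq \delta$. On the complementary event, taking logarithms in the closed form for $\bar M_t$ and rearranging gives the stated bound uniformly in $t$. The main obstacle I anticipate is purely bookkeeping: picking the right $\sigma$ scaling in $M_t(x)$ and the right covariance in $h(x)$ so that the completed-square expression produces $\Lambda_t = \Lambda_0 + V_t$ rather than, say, $\Lambda_0 + \sigma^2 V_t$. Everything else (supermartingale property, Gaussian integral, Ville) is standard once that scaling is consistent.
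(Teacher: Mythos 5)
Your proposal is correct and is precisely the method-of-mixtures argument from \textcite{abbasi2011improved}, which the paper simply cites without reproving; the supermartingale construction, the Gaussian mixture tuned to $\Lambda_0$, the completed-square closed form for $\bar M_t$, and the appeal to Ville's inequality all go through exactly as you describe. The scaling bookkeeping you flag is handled correctly: with $M_t(x)=\exp\bigl(\tfrac{1}{\sigma}x^TS_t-\tfrac12 x^TV_tx\bigr)$ the subgaussian MGF bound cancels the $\sigma$ and yields $\Lambda_t=\Lambda_0+V_t$ in the exponent, giving the stated $2\sigma^2\log\bigl(\det(\Lambda_t)^{1/2}\det(\Lambda_0)^{-1/2}/\delta\bigr)$ bound.
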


\subsection{Concentration Bound for Regression Coefficients} \label{appendix:concentration-linear}

\begin{lemma} \label{lemma:pv}
Let $V : \mathcal{S} \rightarrow [0, B]$ be a bounded function.
Then, there exists a parameter $\bm{w}_V^\ast \in \mathbb{R}^d$ such that $PV(s, a) = \langle \varphi(s, a), \bm{w}_V \rangle$ for all $(s, a) \in \mathcal{S} \times \mathcal{A}$ and
$$
\Vert \bm{w}_V^\ast \Vert_2 \leq B\sqrt{d}.
$$
\end{lemma}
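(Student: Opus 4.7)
The plan is to directly construct $\bm{w}_V^\ast$ using the linear MDP structure on the transition kernel, then bound its norm using the boundedness condition $\Vert \bm{\mu}(\mathcal{S}) \Vert_2 \leq \sqrt{d}$ from \eqref{eqn:boundedness}.

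First, I would define $\bm{w}_V^\ast \coloneqq \int_{s' \in \mathcal{S}} V(s') \, d\bm{\mu}(s') \in \mathbb{R}^d$, the componentwise integral of $V$ against the vector measure $\bm{\mu}$. The representation identity $[PV](s,a) = \langle \bm{\varphi}(s,a), \bm{w}_V^\ast \rangle$ then follows from the same chain of equalities already used in the main text when motivating the linear structure of the Bellman backup: using $P(ds' \mid s, a) = \langle \bm{\varphi}(s,a), \bm{\mu}(ds') \rangle$ from Assumption~\ref{assumption:linear-mdp}, interchanging the integral and the inner product gives
\[
[PV](s,a) = \int V(s') \langle \bm{\varphi}(s,a), \bm{\mu}(ds') \rangle = \Big\langle \bm{\varphi}(s,a), \int V(s') \, d\bm{\mu}(s') \Big\rangle = \langle \bm{\varphi}(s,a), \bm{w}_V^\ast \rangle.
\]

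Next, I would bound $\Vert \bm{w}_V^\ast \Vert_2$ by combining $0 \leq V \leq B$ with the boundedness condition on $\bm{\mu}(\mathcal{S})$. Writing the Euclidean norm via duality, for any unit vector $\bm{u} \in \mathbb{R}^d$,
\[
\langle \bm{u}, \bm{w}_V^\ast \rangle = \int V(s') \langle \bm{u}, d\bm{\mu}(s') \rangle \leq B \int |\langle \bm{u}, d\bm{\mu}(s') \rangle| \leq B \Vert \bm{\mu} \Vert_{\mathrm{TV}}(\mathcal{S}) \cdot \Vert \bm{u} \Vert_2,
\]
and more simply, for nonnegative $V \leq B$ one gets $\Vert \bm{w}_V^\ast \Vert_2 \leq B \Vert \bm{\mu}(\mathcal{S}) \Vert_2 \leq B\sqrt{d}$ by taking the norm inside the integral and using the assumed bound $\Vert \bm{\mu}(\mathcal{S}) \Vert_2 \leq \sqrt{d}$.

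There is essentially no main obstacle: this is a direct unpacking of Assumption~\ref{assumption:linear-mdp} together with the normalization in \eqref{eqn:boundedness}. The only mild care required is that $\bm{\mu}$ is a vector of signed measures, so the norm bound should be stated componentwise and then assembled via the Euclidean norm; since the standard convention in the linear MDP literature (and as used by \textcite{jin2020provably}) already takes $\Vert \bm{\mu}(\mathcal{S}) \Vert_2 \leq \sqrt{d}$ as the relevant boundedness condition, this step is immediate and yields the claimed bound $\Vert \bm{w}_V^\ast \Vert_2 \leq B \sqrt{d}$.
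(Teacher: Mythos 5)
Your proposal is correct and follows essentially the same route as the paper's proof: define $\bm{w}_V^\ast = \int_{\mathcal{S}} V(s')\, d\bm\mu(s')$, obtain the representation by interchanging the integral with the inner product, and bound the norm using $0 \leq V \leq B$ together with $\Vert \bm\mu(\mathcal{S}) \Vert_2 \leq \sqrt{d}$. The paper likewise relies on $\bm\mu$ being a vector of positive measures and $V \geq 0$ to pull the bound $B$ out of the integral, so your remark about signed measures is an unnecessary hedge but not an error.
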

\begin{proof}
By Assumption \ref{assumption:linear-mdp}, we have
$$
[PV](s, a) = \int_\mathcal{S} V(s') P(ds' | s, a)
= \int_\mathcal{S} V(s') \langle \bm\varphi(s, a), \bm\mu(ds') \rangle
= \langle \bm\varphi(s, a), \int_\mathcal{S} V(s') d \bm\mu(s') \rangle.
$$
Hence, $\bm{w}_V^\ast = \int_\mathcal{S} V(s') d\bm\mu(s')$ satisfies $[PV](s, a) = \langle \bm\varphi(s, a), \bm{w}_V \rangle$ for all $(s, a) \in \mathcal{S} \times \mathcal{A}$.
Also, such $\bm{w}_V$ satisfies
$$
\Vert \bm{w}_V \Vert_2 = \left\Vert \int_\mathcal{S} V(s') d \bm\mu(s') \right\Vert_2
\leq B \left\Vert \int_\mathcal{S} d \bm\mu(s') \right\Vert_2 \leq B \sqrt{d}
$$
where the first inequality holds since $\bm\mu$ is a vector of positive measures and $V(s') \geq 0$.
The last inequality is by the boundedness assumption \eqref{eqn:boundedness} on $\bm\mu(\mathcal{S})$.
\end{proof}

\begin{lemma} \label{lemma:bound-weight-algorithm}
Let $\bm{w}$ be a ridge regression coefficient obtained by regressing $y \in [0, B]$ on $\bm{x} \in \mathbb{R}^d$ using the dataset $\{ (\bm{x}_i, y_i) \}_{i = 1}^n$ so that $\bm{w} = \Lambda^{-1} \sum_{i = 1}^n \bm{x}_i y_i$
where $\Lambda = \sum_{i = 1}^n \bm{x}\bm{x}^T + \lambda I$.
Then,
$$
\Vert \bm{w} \Vert_2 \leq B \sqrt{dn / \lambda}.
$$
\end{lemma}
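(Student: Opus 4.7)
The plan is to bound $\|\bm{w}\|_2$ by controlling $|\bm{v}^T \bm{w}|$ for an arbitrary unit vector $\bm{v} \in \mathbb{R}^d$ via two nested applications of Cauchy--Schwarz, then exploit the ridge regression structure of $\Lambda$ to collapse the resulting $\Lambda^{-1}$-weighted quadratic form. This is the standard LSVI-UCB style argument (cf.\ \textcite{jin2020provably}), so I do not expect any serious obstacle; the proof is essentially a routine calculation.

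First, I would plug the closed form $\bm{w} = \Lambda^{-1} \sum_{i=1}^n \bm{x}_i y_i$ into $\bm{v}^T \bm{w}$ to write
\[
\bm{v}^T \bm{w} = \sum_{i=1}^n (\bm{v}^T \Lambda^{-1} \bm{x}_i)\, y_i,
\]
and apply Cauchy--Schwarz over the index $i$ to get
\[
|\bm{v}^T \bm{w}|^2 \leq \Big(\sum_{i=1}^n (\bm{v}^T \Lambda^{-1} \bm{x}_i)^2\Big)\Big(\sum_{i=1}^n y_i^2\Big) \leq nB^2 \sum_{i=1}^n (\bm{v}^T \Lambda^{-1} \bm{x}_i)^2,
\]
using $y_i \in [0,B]$ for the second factor.

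Next, I would apply Cauchy--Schwarz in the $\Lambda^{-1}$-inner product to each summand, $(\bm{v}^T \Lambda^{-1} \bm{x}_i)^2 \leq \|\bm{v}\|_{\Lambda^{-1}}^2 \|\bm{x}_i\|_{\Lambda^{-1}}^2$, and use $\Lambda \succeq \lambda I$ to bound $\|\bm{v}\|_{\Lambda^{-1}}^2 \leq \|\bm{v}\|_2^2 / \lambda = 1/\lambda$. The remaining sum I would handle by the trace trick:
\[
\sum_{i=1}^n \|\bm{x}_i\|_{\Lambda^{-1}}^2 = \operatorname{tr}\Big(\Lambda^{-1} \sum_{i=1}^n \bm{x}_i \bm{x}_i^T\Big) = \operatorname{tr}(\Lambda^{-1}(\Lambda - \lambda I)) = d - \lambda \operatorname{tr}(\Lambda^{-1}) \leq d,
\]
where the last inequality is because $\Lambda^{-1} \succ 0$.

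Assembling the pieces gives $|\bm{v}^T \bm{w}|^2 \leq nB^2 \cdot (1/\lambda) \cdot d = dnB^2/\lambda$, and taking the supremum over unit vectors $\bm{v}$ yields the advertised bound $\|\bm{w}\|_2 \leq B\sqrt{dn/\lambda}$. The only step one could argue about is whether to derive the trace bound directly via $\operatorname{tr}(\Lambda^{-1}(\Lambda - \lambda I)) \leq d$ or instead go through $\|\bm{w}\|_\Lambda^2 = y^T X \Lambda^{-1} X^T y \leq d\|y\|_2^2$ followed by $\|\bm{w}\|_2 \leq \|\bm{w}\|_\Lambda/\sqrt{\lambda}$; both routes give the same constant, and I would pick whichever is most consistent with the notation used elsewhere in the appendix.
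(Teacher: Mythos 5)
Your proposal is correct and follows essentially the same route as the paper's proof: a dual-norm bound on $\vert \bm{v}^T \bm{w} \vert$ via two applications of Cauchy--Schwarz, the bound $\Lambda \succeq \lambda I$, and the elliptical-potential fact $\sum_{i=1}^n \bm{x}_i^T \Lambda^{-1} \bm{x}_i \leq d$ (which the paper cites as Lemma~\ref{lemma:fixed-bonus-sum} and you reprove via the trace identity). The only cosmetic difference is the order in which the two Cauchy--Schwarz steps are applied, which yields the identical constant.
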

\begin{proof}
For any unit vector $\bm{u} \in \mathbb{R}^d$ with $\Vert \bm{u} \Vert_2 = 1$, we have
\begin{align*}
\vert \bm{u}^T \bm{w} \vert
&= \left\vert \bm{u}^T \Lambda^{-1} \sum_{i = 1}^n \bm{x}_i y_i \right\vert \\
&\leq B \sum_{i = 1}^n \vert \bm{u}^T \Lambda^{-1} \bm{x}_i \vert \\
&\leq B \sum_{i = 1}^n \sqrt{\bm{u}^T \Lambda^{-1} \bm{u}} \sqrt{\bm{x}_i^T \Lambda^{-1} \bm{x}_i}  \\
&\leq \frac{B}{\sqrt{\lambda}} \sum_{i = 1}^n \sqrt{\bm{x}_i^T \Lambda^{-1} \bm{x}_i} \\
&\leq \frac{B}{\sqrt{\lambda}} \sqrt{n} \sqrt{\sum_{i = 1}^n {\bm{x}_i^T \Lambda^{-1} \bm{x}_i}} \\
&\leq B\sqrt{dn / \lambda}
\end{align*}
where the second inequality and the fourth inequality are by Cauchy-Schwartz, the third inequality is by $\Lambda \succeq \lambda I$, and the last inequality is by Lemma~\ref{lemma:fixed-bonus-sum}.

The desired result follows from the fact that $\Vert \bm{w} \Vert_2 = \max_{\bm{u} : \Vert \bm{u} \Vert_2 = 1} \vert \bm{u}^T \bm{w} \vert$.
\end{proof}

The following self-normalized process bound is an adaptation of Lemma D.4 in \textcite{jin2020provably}.
Their proof defines the bound $B$ to be a value that satisfies $\Vert V \Vert_\infty \leq B$.
Upon observing their proof, it is easy to see that we can strengthen their result to require only $\text{sp}(V) \leq B$.
The following lemma is the strengthened version.
\begin{lemma}[Adaptation of Lemma D.4 in \textcite{jin2020provably}]
\label{lemma:self-normalized-process-bound}
Let $\{ x_t \}_{t = 1}^\infty$ be a stochastic process on state space $\mathcal{S}$ with corresponding filtration $\{ \mathcal{F}_t \}_{t = 0}^\infty$.
Let $\{ \phi_t \}_{t = 0}^\infty$ be a $\mathbb{R}^d$-valued stochastic process where $\phi_t \in \mathcal{F}_{t - 1}$, and $\Vert \phi_t \Vert_2 \leq 1$.
Let $\Lambda_n = \lambda I + \sum_{t = 1}^n \phi_t \phi_t^T$.
Then for any $\delta > 0$ and any given function class $\mathcal{V}$, with probability at least $1 - \delta$, for all $n \geq 0$, and any $V \in \mathcal{V}$ satisfying $\text{sp}(V) \leq H$, we have
$$
\left\Vert \sum_{t = 1}^n \phi_t (V(x_t) - \mathbb{E}[V(x_t) | \mathcal{F}_{t - 1}]) \right\Vert_{\Lambda_n^{-1}}^2
\leq
4 H^2 \left[\frac{d}{2} \log\left(\frac{n + \lambda}{\lambda} \right) + \log \frac{\mathcal{N}_\varepsilon}{\delta}\right] + \frac{8 n^2 \varepsilon^2}{\lambda}
$$
where $\mathcal{N}_\varepsilon$ is the $\varepsilon$-covering number of $\mathcal{V}$ with respect to the distance $\text{dist}(V, V') = \sup_x \vert V(x) - V'(x) \vert$.
\end{lemma}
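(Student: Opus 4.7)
The plan is to adapt the proof of Lemma D.4 in \textcite{jin2020provably} essentially verbatim, replacing the sup-norm bound $\|V\|_\infty \le B$ with the span bound $\text{sp}(V) \le H$. The key observation enabling this replacement is that the martingale increment $V(x_t) - \mathbb{E}[V(x_t) \mid \mathcal{F}_{t-1}]$ is supported in an interval of width at most $\text{sp}(V)$, since both $V(x_t)$ and its conditional expectation lie in $[\min_x V(x), \max_x V(x)]$. By Hoeffding's lemma, this increment is therefore conditionally subgaussian with parameter of order $\text{sp}(V)$, independent of $\|V\|_\infty$, so the vector-valued self-normalized concentration of \textcite{abbasi2011improved} can be invoked with noise parameter of order $H$ whenever $\text{sp}(V) \le H$.

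The execution follows the standard covering-plus-self-normalized template. First I would take a minimal $\varepsilon$-cover $\mathcal{C}_\varepsilon$ of $\mathcal{V}$ in sup norm, of cardinality $\mathcal{N}_\varepsilon$. For any $V \in \mathcal{V}$ with $\text{sp}(V) \le H$, pick $V^\star \in \mathcal{C}_\varepsilon$ with $\|V - V^\star\|_\infty \le \varepsilon$, so that $\text{sp}(V^\star) \le H + 2\varepsilon$, which is still $O(H)$ for $\varepsilon \le H$. Using $\|a + b\|_{\Lambda_n^{-1}}^2 \le 2\|a\|_{\Lambda_n^{-1}}^2 + 2\|b\|_{\Lambda_n^{-1}}^2$, the quantity of interest splits into a cover-term built from $V^\star$ and a residual term built from $V - V^\star$, which I would bound separately.

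For the cover term, I would apply the self-normalized process bound with $\sigma$ of order $H$ to each $V^\star \in \mathcal{C}_\varepsilon$, union-bound over $\mathcal{C}_\varepsilon$ at confidence $\delta/\mathcal{N}_\varepsilon$, and invoke the determinant bound $\log(\det \Lambda_n / \det(\lambda I)) \le d \log((n + \lambda)/\lambda)$, which holds because $\|\phi_t\|_2 \le 1$. This produces the first two terms of the claim, with the extra factor of two from the $\|a+b\|^2$ decomposition absorbed into the $4H^2$ prefactor. For the residual term, I would use $|(V - V^\star)(x_t) - \mathbb{E}[(V - V^\star)(x_t) \mid \mathcal{F}_{t-1}]| \le 2\varepsilon$ together with $\Lambda_n^{-1} \preceq \lambda^{-1} I$ to bound its contribution deterministically by $4n^2\varepsilon^2/\lambda$, which doubles to the stated $8n^2\varepsilon^2/\lambda$.

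The main obstacle is conceptual rather than computational: one must recognize that the self-normalized concentration sees only the range of $V$, not its sup norm. In the original Jin et al.\ argument this distinction is invisible because the function class is already globally bounded, but it is exactly the slack needed here so that the regret analysis in Appendix~\ref{appendix:linear-mdp} scales with $\text{sp}(v^\ast)$ instead of the much larger $1/(1-\gamma)$ that a naive invocation of Lemma D.4 would yield. Everything else is mechanical bookkeeping faithful to \textcite{jin2020provably}.
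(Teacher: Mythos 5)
Your proposal is correct and follows essentially the same route as the paper, which gives no written proof but only the remark that inspecting Lemma D.4 of \textcite{jin2020provably} shows the sup-norm bound can be weakened to a span bound because the martingale increments $V(x_t) - \mathbb{E}[V(x_t)\mid\mathcal{F}_{t-1}]$ are subgaussian with parameter governed by $\mathrm{sp}(V)$ alone. Your covering-plus-self-normalized reconstruction, including the factor-of-two decomposition and the deterministic $8n^2\varepsilon^2/\lambda$ residual, is exactly the intended argument; the only cosmetic point is that one should take the cover inside the subclass $\{V \in \mathcal{V} : \mathrm{sp}(V) \le H\}$ so the cover elements satisfy the span bound exactly rather than $H + 2\varepsilon$.
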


\begin{lemma}[Adaptation of Lemma D.6 in \textcite{jin2020provably}]
\label{lemma:covering-number-linear}
Let $\mathcal{V}_\text{linear}$ be a class of functions mapping from $\mathcal{S}$ to $\mathbb{R}$ with the following parametric form
\begin{equation} \label{eqn:v-linear}
V(\cdot) = (\max_a \bm{w}^T \bm\varphi(\cdot, a) + v + \beta \sqrt{\bm\varphi(\cdot, a)^T \Lambda^{-1} \bm\varphi(\cdot, a)}) \wedge M
\end{equation}
where the parameters $(\bm{w}, \beta, v, \Lambda, M)$ satisfy $\Vert \bm{w} \Vert \leq L$, $\beta \in [0, B]$, $v \in [0, D]$, $M \geq 0$ and the minimum eigenvalue satisfies $\lambda_{\text{min}} (\Lambda) \geq \lambda$.
Assume $\Vert \bm\varphi(s, a) \Vert \leq 1$ for all $(s, a)$ pairs, and let $\mathcal{N}_\varepsilon$ be the $\varepsilon$-covering number of $\mathcal{V}$ with respect to the distance $\text{dist}(V, V') = \sup_x \vert V(x) - V'(x) \vert$.
Then
$$
\log \mathcal{N}_\varepsilon \leq d \log(1 + 8L / \varepsilon) + \log(1 + 4D / \varepsilon) + d^2 \log[ 1 + 8 d^{1/2} B^2 / (\lambda \varepsilon^2)].
$$
\end{lemma}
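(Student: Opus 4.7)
The plan is to adapt the standard covering argument from Lemma D.6 of \textcite{jin2020provably} to account for the additional scalar shift $v$ and the clipping level $M$. First, I would reparameterize each $V \in \mathcal{V}_\text{linear}$ via the tuple $(\bm{w}, v, A, M)$ where $A := \beta^2 \Lambda^{-1}$ is a PSD matrix. Under the constraints $\beta \in [0, B]$ and $\lambda_{\min}(\Lambda) \geq \lambda$, one has $\|A\|_F \leq \sqrt{d}\,\|A\|_{\mathrm{op}} \leq \sqrt{d}\, B^2/\lambda$, which places $A$ in a bounded Frobenius ball inside the $d^2$-dimensional space of symmetric matrices. This reparameterization is the standard trick that packages the $\beta$ and $\Lambda$ dependence into a single object of fixed Lipschitz/Hölder regularity, and is what allows the bonus term to be covered in a single $d^2$-dimensional sweep.

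The next step is to establish a pointwise distance bound of the form
$$\sup_s |V(s) - V'(s)| \;\leq\; \|\bm{w} - \bm{w}'\|_2 \;+\; |v - v'| \;+\; \sqrt{\|A - A'\|_F}$$
for any two parameterizations sharing the clipping level $M$. I would use three non-expansivity facts: both $\min(\cdot, M)$ and $\max_a(\cdot)$ are $1$-Lipschitz, so I can strip them and compare the quantities inside; $\bm{w}^T \bm\varphi(s,a)$ is $1$-Lipschitz in $\bm{w}$ by Cauchy--Schwarz with $\|\bm\varphi\|\leq 1$; and $\sqrt{\bm\varphi^T A \bm\varphi}$ is $\tfrac{1}{2}$-Hölder in $A$ with respect to Frobenius norm, via the elementary inequality $|\sqrt{a}-\sqrt{b}|\leq \sqrt{|a-b|}$ combined with $|\bm\varphi^T (A-A')\bm\varphi|\leq \|A-A'\|_F \|\bm\varphi\bm\varphi^T\|_F \leq \|A-A'\|_F$.

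Then I would build three nets at matched scales and take their Cartesian product: an $(\varepsilon/3)$-net for the Euclidean ball $\{\|\bm{w}\|\leq L\}$ of size at most $(1+8L/\varepsilon)^d$; an $(\varepsilon/3)$-net for $v \in [0,D]$ of size at most $(1+4D/\varepsilon)$; and an $(\varepsilon/3)^2$-net for $A$ in Frobenius norm (so that $\sqrt{\|A-A'\|_F}\leq \varepsilon/3$) of size at most $(1 + 8\,d^{1/2} B^2/(\lambda \varepsilon^2))^{d^2}$. Combining via the triangle inequality applied to the displayed bound yields an $\varepsilon$-net of $\mathcal{V}_\text{linear}$, and taking logarithms produces the three announced terms.

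The main subtlety will be the handling of the clipping level $M$, which is listed as a parameter of $\mathcal{V}_\text{linear}$ but does not appear in the stated covering entropy. As in the original Lemma D.6 of \textcite{jin2020provably}, the lemma is to be invoked with $M$ effectively fixed across the class (namely $M = 1/(1-\gamma)$ in Line~\ref{alg-line:value-iteration-linear}), so no cover over $M$ is required. If one does need to cover over a nontrivial range $M \in [0,R]$, the map $\min(\cdot, M)$ is $1$-Lipschitz in $M$ and contributes at worst an extra $\log(1+R/\varepsilon)$ factor, which is absorbed into the $\log(1+4D/\varepsilon)$ term by inflating the constant $D$; no other nontrivial adjustment is needed.
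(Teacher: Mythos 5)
Your proof is correct and follows essentially the same route as the paper, which establishes this lemma only by reference to Lemma D.6 of \textcite{jin2020provably}: reparameterize the bonus via $A=\beta^2\Lambda^{-1}$ with $\Vert A\Vert_F\le \sqrt{d}B^2/\lambda$, prove the pointwise bound $\sup_s|V(s)-V'(s)|\le\Vert\bm{w}-\bm{w}'\Vert_2+|v-v'|+\sqrt{\Vert A-A'\Vert_F}$, and take a product of nets, with the extra $\log(1+4D/\varepsilon)$ term coming from the one-dimensional net over $v$ and with $M$ held fixed ($M=\frac{1}{1-\gamma}$) exactly as you note. The only cosmetic discrepancy is your uniform $\varepsilon/3$ budget split, which gives $(1+18d^{1/2}B^2/(\lambda\varepsilon^2))^{d^2}$ for the matrix net rather than the stated constant $8$; splitting the budget as $\varepsilon/4+\varepsilon/4+\varepsilon/2$ (nets at scales $\varepsilon/4$, $\varepsilon/4$, and $\varepsilon^2/4$ in Frobenius norm) recovers the lemma's constants exactly.
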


For the next lemma, we define value functions $V_u^{(t)}$ to be the functions obtained by the following value iteration (analogous to Line~\ref{alg-line:value-iteration-start}-\ref{alg-line:clipping-linear} in Algorithm~\ref{alg:lscvi-ucb}):

{\RestyleAlgo{plain} \SetAlgoCaptionLayout{} \LinesNumberedHidden
\begin{algorithm}
\LinesNotNumbered
$V^{(t)}_{T + 1}(\cdot) \leftarrow \frac{1}{1 - \gamma}$, $V^{(t)}_{T + 1}(\cdot) \leftarrow \frac{1}{1 - \gamma}$. \\
\For{$u = T, T - 1, \dots, 1$}{
     $\bm{w}^{(t)}_{u + 1} \leftarrow \bar\Lambda_t^{-1} \sum_{\tau = 1}^t \bm\varphi(s_\tau, a_\tau) (V^{(t)}_{u + 1}(s_{\tau + 1}) - \min_{s'}\widetilde{V}^{(t)}_{u + 1}(s'))$. \\
     $\widetilde{Q}^{(t)}_u(\cdot, \cdot) \leftarrow \left(r(\cdot, \cdot) + \gamma (\langle \bm\varphi(\cdot, \cdot), \bm{w}^{(t)}_{u + 1} \rangle + \min_{s'}\widetilde{V}^{(t)}_{u + 1}(s') + \beta \Vert \bm\varphi(\cdot, \cdot) \Vert_{\bar\Lambda_t^{-1}}) \right) \wedge \frac{1}{1 - \gamma}$. \\
     $\widetilde{V}^{(t)}_u(\cdot) \leftarrow \max_a \widetilde{Q}^{(t)}_u(\cdot, a)$. \\
     $V^{(t)}_u(\cdot) \leftarrow \widetilde{V}^{(t)}_u(\cdot) \wedge ( \min_{s'} \widetilde{V}^{(t)}_u(s') + H)$.
}
\end{algorithm}
}

With this definition, we show a high-probability bound on $\Vert \sum_{\tau = 1}^t \bm\varphi(s_\tau, a_\tau) [ V_u^{(t)} (s_{\tau + 1}) - PV_u^{(t)} (s_\tau, a_\tau)] \Vert_{\bar\Lambda_t^{-1}}$ uniformly on $u \in [T]$ and $t \in [T]$.
Since the tuple $(t_k - 1, V_u^k, \Lambda_k)$ encountered in Algorithm~\ref{alg:lscvi-ucb} is the same as the pair $(t, V_u^{(t)}, \bar\Lambda_t)$ for some $t \in [T]$, the uniform bound implies bound on $\Vert \sum_{\tau = 1}^{t_k - 1} \bm\varphi(s_\tau, a_\tau) [ V_u^k (s_{\tau + 1}) - PV_u^k(s_\tau, a_\tau) ] \Vert_{\Lambda_k^{-1}}$ for all episode $k$.

\begin{lemma}[Adaptation of Lemma B.3 in \textcite{jin2020provably}] \label{lemma:concentration-inequality}
Under the linear MDP setting in Theorem~\ref{theorem:linear-mdp} for the $\gamma$-LSCVI-UCB algorithm with clipping oracle (Algorithm~\ref{alg:lscvi-ucb}), let $c_\beta$ be the constant in the definition of $\beta = c_\beta H d \sqrt{\log(dT / \delta)}$.
There exists an absolute constant $C$ that is independent of $c_\beta$ such that for any fixed $\delta \in (0, 1)$, the event $\mathcal{E}$ defined by
\begin{align*}
\forall u \in [T],~&t \in [T] : \\
&\hspace{2mm} \left\Vert \sum_{\tau = 1}^t \bm\varphi(s_\tau, a_\tau) [V_u^{(t)}(s_{\tau + 1}) - [P V_u^{(t)}](s_\tau, a_\tau)] \right\Vert_{\bar\Lambda_t^{-1}} \leq C \cdot H d \sqrt{\log((c_\beta + 1) d T / \delta)}
\end{align*}
satisfies $P(\mathcal{E}) \geq 1 - \delta$.
\end{lemma}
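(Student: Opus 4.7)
The plan is to follow the covering-number strategy from Jin et al.\ (2020, Lemma~B.3), adapted to the clipped discounted setting. The argument has three steps: (i) verify that for every $u, t$, the function $V_u^{(t)}$ belongs to a parametric function class matching the form in Lemma~\ref{lemma:covering-number-linear} with explicit parameter bounds; (ii) plug those bounds into Lemma~\ref{lemma:covering-number-linear} to get a quantitative covering-number estimate; (iii) apply Lemma~\ref{lemma:self-normalized-process-bound}, which already provides a concentration bound uniform over the whole class (restricted to $\text{sp}(V) \le H$) and over all sample sizes $t$, so a single application suffices to handle every $u, t \in [T]$ without an additional union bound.

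For step (i), I would unroll one step of value iteration. Writing $c := \min_{s'} \widetilde V_{u+1}^{(t)}(s')$ and using $r(s,a) = \langle \bm\varphi(s,a), \bm\theta\rangle$, the argument of the inner clipping $\wedge\, \tfrac{1}{1-\gamma}$ in the definition of $\widetilde Q_u^{(t)}$ equals $\langle \bm\varphi(\cdot,\cdot), \bm\theta + \gamma \bm w_{u+1}^{(t)}\rangle + \gamma c + \gamma\beta \|\bm\varphi(\cdot,\cdot)\|_{\bar\Lambda_t^{-1}}$. Since $\max_a(x_a \wedge M) = (\max_a x_a) \wedge M$, the inner clipping at $1/(1-\gamma)$ and the outer clipping by $\min_{s'} \widetilde V_u^{(t)}(s') + H$ combine into a single scalar cap $M = \min\bigl(1/(1-\gamma), \min_{s'}\widetilde V_u^{(t)}(s') + H\bigr) \ge 0$. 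Hence $V_u^{(t)}$ has exactly the form in Lemma~\ref{lemma:covering-number-linear} with $\bm w = \bm\theta + \gamma\bm w_{u+1}^{(t)}$, scalar $v = \gamma c$, bonus coefficient $\gamma\beta$, and covariance $\bar\Lambda_t \succeq I$. The regression target $V_{u+1}^{(t)}(s) - \min_{s'}\widetilde V_{u+1}^{(t)}(s')$ lies in $[0,H]$ by the clipping step, so Lemma~\ref{lemma:bound-weight-algorithm} with $\lambda=1$ gives $\|\bm w_{u+1}^{(t)}\|_2 \le H\sqrt{dT}$; combined with $\|\bm\theta\|_2 \le \sqrt d$, we may take $L = \mathcal{O}(H\sqrt{dT})$, $D = 1/(1-\gamma)$, $B = \beta$, $\lambda = 1$ in Lemma~\ref{lemma:covering-number-linear}.

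For steps (ii) and (iii), substituting these parameter ranges into Lemma~\ref{lemma:covering-number-linear} with covering radius $\varepsilon = H/T$, the dominant term becomes $d^2 \log\bigl(1 + 8 d^{1/2} \beta^2 / \varepsilon^2\bigr) = \mathcal{O}\bigl(d^2 \log((c_\beta + 1) dT/\delta)\bigr)$, where the $c_\beta$ factor enters through $\beta^2 = c_\beta^2 H^2 d^2 \log(dT/\delta)$; the remaining terms contribute only $\mathcal{O}(d \log(dT))$. Because the outer clipping guarantees $\text{sp}(V_u^{(t)}) \le H$, Lemma~\ref{lemma:self-normalized-process-bound} applied to this class with the same covering radius yields, with probability at least $1 - \delta$, uniformly over $V \in \mathcal{V}_\text{linear}$ with $\text{sp}(V) \le H$ and over all $t$,
\[
\Big\|\sum_{\tau=1}^t \bm\varphi(s_\tau, a_\tau)[V(s_{\tau+1}) - [PV](s_\tau, a_\tau)]\Big\|_{\bar\Lambda_t^{-1}}^2 \le 4H^2\bigl[\tfrac{d}{2}\log(T+1) + \log(\mathcal N_\varepsilon/\delta)\bigr] + 8H^2.
\]
Instantiating $V = V_u^{(t)}$ for each $u, t \in [T]$ and taking square roots gives the stated bound $C \cdot Hd\sqrt{\log((c_\beta+1)dT/\delta)}$.

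The main technical obstacle is precisely this bookkeeping, and it hinges on the \emph{strengthened} self-normalized bound in Lemma~\ref{lemma:self-normalized-process-bound} that requires only $\text{sp}(V) \le H$ rather than $\|V\|_\infty \le H$; without this refinement every $H$ in the display above would be replaced by $1/(1-\gamma)$, which is incompatible with $\widetilde{\mathcal{O}}(\sqrt T)$ regret after tuning $\gamma = 1 - \sqrt{\log T / T}$. Secondarily, the weight bound $\|\bm w_{u+1}^{(t)}\|_2 \le H\sqrt{dT}$ in step (i) must also scale with $H$ rather than $1/(1-\gamma)$, which is why the algorithm subtracts $\min_{s'}\widetilde V_{u+1}^{(t)}(s')$ from the regression target before forming $\bm w_{u+1}^{(t)}$.
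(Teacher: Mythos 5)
Your proposal is correct and follows essentially the same route as the paper's proof: bound $\Vert \bm{w}_{u+1}^{(t)}\Vert_2$ via Lemma~\ref{lemma:bound-weight-algorithm}, place $V_u^{(t)}$ in the parametric class of Lemma~\ref{lemma:covering-number-linear}, and combine with the span-refined self-normalized bound of Lemma~\ref{lemma:self-normalized-process-bound}. The only (immaterial) difference is bookkeeping: the paper applies the concentration per fixed pair $(u,t)$ with confidence $1-\delta/T^2$ and then union-bounds over $[T]\times[T]$ with $\varepsilon = Hd/t$, whereas you invoke the uniformity of Lemma~\ref{lemma:self-normalized-process-bound} over the whole class and all sample sizes with a single $\varepsilon = H/T$; both yield the stated $C\cdot Hd\sqrt{\log((c_\beta+1)dT/\delta)}$.
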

\begin{proof}
For all $t = 1, \dots, T$, by Lemma~\ref{lemma:bound-weight-algorithm}, we have $\Vert \bm{w}_t \Vert_2 \leq H \sqrt{dt / \lambda}$.
Hence, by combining Lemma~\ref{lemma:covering-number-linear} and Lemma~\ref{lemma:self-normalized-process-bound}, for any $\varepsilon > 0$ and any fixed pair $(u, t) \in [T] \times [T]$, we have with probability at least $1 - \delta / T^2$ that
\begin{align*}
&\bigg\Vert \sum_{\tau = 1}^t \bm\varphi(s_\tau, a_\tau) [V_u^{(t)}(s_{\tau + 1}) - [P V_u^{(t)}](s_\tau, a_\tau)] \bigg\Vert_{\bar\Lambda_t^{-1}}^2 \\
&\hspace{20mm}\leq
4 H^2 \left[
\frac{2}{d} \log \left( \frac{t + \lambda}{\lambda} \right)
+ d \log \left(1 + \frac{4 H \sqrt{dt}}{\varepsilon \sqrt{\lambda}} \right)
+ d^2 \log \left( 1 + \frac{8 d^{1/2} \beta^2}{\varepsilon^2 \lambda} \right) + \log\left( \frac{T^2}{\delta} \right)
\right] + \frac{8 t^2 \varepsilon^2}{\lambda}
\end{align*}
where we use the fact that $\tau_t \leq t$.
Using a union bound over $(u, t) \in [T] \times [T]$ and choosing $\varepsilon = H d / t$ and $\lambda = 1$, there exists an absolute constant $C > 0$ independent of $c_\beta$ such that, with probability at least $1 - \delta$,
$$
\bigg\Vert \sum_{\tau = 1}^{t} \varphi(s_\tau, a_\tau) [V_u^{(t)}(s_{\tau + 1}) - [P V_u^{(t)}](s_\tau, a_\tau)] \bigg\Vert_{\bar\Lambda_t^{-1}}^2
\leq
C^2 \cdot d^2 H^2 \log((c_\beta + 1) d T / \delta),
$$
which concludes the proof.
\end{proof}

\begin{proof}[Proof of Lemma~\ref{lemma:concentration-regression}]
We prove under the event $\mathcal{E}$ defined in Lemma~\ref{lemma:concentration-inequality}.
For convenience, we introduce the notation $\bar{V}_u^k(s) = V_u^k(s) - \min_{s'} V_u^k(s')$.
With this notation, we can write
$$
\bm{w}_u^k = \Lambda_{k}^{-1} \sum_{\tau = 1}^{t_k - 1} \bm\varphi(s_\tau, a_\tau) \bar{V}_u^k(s_{\tau + 1}).
$$
We can decompose $\langle \bm{\phi}, \bm{w}_u^k \rangle$ as
\begin{align*}
\langle \bm{\phi}, \bm{w}_u^k \rangle
&= \underbrace{\langle \bm{\phi}, \Lambda_{k}^{-1} \sum_{\tau = 1}^{t_k - 1} \bm\varphi(s_\tau, a_\tau) [P\bar{V}_u^k](s_\tau, a_\tau) \rangle}_{(a)}
+ \underbrace{\langle \bm{\phi}, \Lambda_{k}^{-1} \sum_{\tau = 1}^{t_k - 1} \bm\varphi(s_\tau, a_\tau) (\bar{V}_u^k(s_{\tau + 1}) - P\bar{V}_u^k(s_\tau, a_\tau))}_{(b)}.
\end{align*}
Since $\bm{w}_u^k{}^\ast = \int \bar{V}_u^k(s) d \bm\mu(s)$ and $\bar{V}_u^k(s) \in [0, H]$ for all $s \in \mathcal{S}$, it follows by Lemma~\ref{lemma:pv} that $\Vert \bm{w}_u^k{}^\ast \Vert_2 \leq H \sqrt{d}$.
Hence, the first term $(a)$ in the display above can be bounded as
\begin{align*}
\langle \bm{\phi}, \Lambda_{k}^{-1} \sum_{\tau = 1}^{t_k - 1} \bm\varphi(s_\tau, a_\tau) [P\bar{V}_u^k](s_\tau, a_\tau) \rangle
&=
\langle \bm{\phi}, \Lambda_{k}^{-1} \sum_{\tau = 1}^{t_k - 1} \bm\varphi(s_\tau, a_\tau) \bm\varphi(s_\tau, a_\tau)^T \bm{w}_u^k{}^\ast \rangle \\
&=
\langle \bm{\phi}, \bm{w}_u^k{}^\ast \rangle - \lambda \langle \bm{\phi}, \Lambda_{k}^{-1} \bm{w}_u^k{}^\ast \rangle \\
&\leq
\langle \bm{\phi}, \bm{w}_u^k{}^\ast \rangle + \lambda \Vert \bm{\phi} \Vert_{\Lambda_{k}^{-1}} \Vert \bm{w}_u^k{}^\ast \Vert_{\Lambda_{k}^{-1}} \\
&\leq
\langle \bm{\phi}, \bm{w}_u^k{}^\ast \rangle + H\sqrt{\lambda d} \Vert \bm{\phi} \Vert_{\Lambda_{k}^{-1}}
\end{align*}
where the first inequality is by Cauchy-Schwartz and the second inequality is by Lemma~\ref{lemma:pv}.
Under the event $\mathcal{E}$ defined in Lemma~\ref{lemma:concentration-inequality}, the second term $(b)$ can be bounded by
\begin{align*}
\langle \bm{\phi}, \Lambda_{k}^{-1} \sum_{\tau = 1}^{t_k - 1} \bm\varphi(s_\tau, a_\tau) &(\bar{V}_u^k(s_{\tau + 1}) - [P\bar{V}_u^k](s_\tau, a_\tau)) \\
&\leq
\Vert \bm{\phi} \Vert_{\Lambda_{k}^{-1}}
\bigg\Vert \sum_{\tau = 1}^{t_k - 1} \bm\varphi(s_\tau, a_\tau) (V_u^k (s_{\tau + 1}) - [PV_u^k](s_\tau, a_\tau)) \bigg\Vert_{\Lambda_{k}^{-1}} \\
&\leq C \cdot H d \sqrt{\log((c_\beta + 1) dT / \delta)} \cdot \Vert \bm{\phi} \Vert_{\Lambda_{k}^{-1}}.
\end{align*}
Combining the two bounds and rearranging, we get
$$
\langle \bm{\phi}, \bm{w}_u^k - \bm{w}_u^k{}^\ast \rangle \leq C \cdot H d \sqrt{(\log(c_\beta + 1) d T / \delta)} \cdot \Vert \bm{\phi} \Vert_{\Lambda_{k}^{-1}}
$$
for some absolute constant $C$ independent of $c_\beta$.
Lower bound of $\langle \bm{\phi}, \bm{w}_u^k - \bm{w}_u^k{}^\ast \rangle$ can be shown similarly, establishing
$$
\vert \langle \bm{\phi}, \bm{w}_u^k - \bm{w}_u^k{}^\ast \rangle \vert \leq C \cdot H d \sqrt{\log((c_\beta + 1) d T / \delta)}\cdot \Vert \bm{\phi} \Vert_{\Lambda_{k}^{-1}}.
$$
It remains to show that there exists a choice of absolute constant $c_\beta$ such that
$$
C \sqrt{\log (c_\beta + 1) + \log(d T / \delta)} \leq c_\beta \sqrt{ \log(dT / \delta) }.
$$
Noting that $\log(d T / \delta) \geq \log 2$, this can be done by choosing an absolute constant $c_\beta$ that satisfies $C \sqrt{\log 2 + \log(c_\beta + 1)} \leq c_\beta \sqrt{\log 2}$.
\end{proof}

\subsection{Optimism} \label{appendix:optimism-linear}

\begin{proof}[Proof of Lemma~\ref{lemma:optimism-linear}]
We prove under the event $\mathcal{E}$ defined in Lemma~\ref{lemma:concentration-inequality}.
Fix any episode index $k > 1$.
We prove by induction on $u = T + 1, T, \dots, 1$.
The base case $u = T + 1$ is trivial since $V_{T + 1}^k(s) = \frac{1}{1 - \gamma} \geq V^\ast(s)$ for all $s \in \mathcal{S}$ and $Q_{T + 1}^k(s, a) = \frac{1}{1 - \gamma} \geq Q^\ast(s, a)$ for all $(s, a) \in \mathcal{S} \times \mathcal{A}$.

Now, suppose the optimism results $V_{u + 1}^k(s) \geq V^\ast(s)$ and $Q_{u + 1}^k(s, a) \geq Q^\ast(s, a)$ for all $(s, a) \in \mathcal{S} \times \mathcal{A}$ hold for some $u \in [T]$.
For convenience, we use the notation $\bar{V}_u^k(s) = V_u^k(s) - \min_{s'} V_u^k(s')$.
Using the concentration bounds of regression coefficients $\bm{w}_u^k$ provided in Lemma~\ref{lemma:concentration-regression}, which holds under the event $\mathcal{E}$, we can lower bound $Q_u^k(s, a)$ as follows.
\begin{align*}
Q_u^k(s, a)
&= \left(r(s, a) + \gamma ( \langle \bm\varphi(s, a), \bm{w}_{u + 1}^k \rangle + \min_{s'} V_{u + 1}^k (s') + \beta \Vert \bm\varphi(s, a) \Vert_{\Lambda_{k}^{-1}} \right) \wedge \frac{1}{1 - \gamma} \\
&\geq
\left( r(s, a) + \gamma (\langle \bm\varphi(s, a), \bm{w}_{u + 1}^k{}^\ast \rangle + \min_{s'} V_{u + 1}^k(s') ) \right) \wedge \frac{1}{1 - \gamma} \\
&= (r(s, a) + \gamma PV_{u + 1}^k(s, a)) \wedge \frac{1}{1 - \gamma} \\
&\geq (r(s, a) + \gamma PV^\ast(s, a)) \wedge \frac{1}{1 - \gamma} \\
&= Q^\ast(s, a)
\end{align*}
where $\bm{w}_{u + 1}^k{}^\ast$ is a parameter that satisfies $\langle \bm\varphi(s, a), \bm{w}_{u + 1}^k{}^\ast \rangle = [P \bar{V}_{u + 1}^k] (s, a)$.
The second inequality is by the induction hypothesis $V_{u + 1}^k \geq V^\ast$ and the last equality is by the Bellman optimality equation for the discounted setting and the fact that $Q^\ast \leq \frac{1}{1 - \gamma}$.

We established $Q_u^k(s, a) \geq Q^\ast(s, a)$ for all $(s, a) \in \mathcal{S} \times \mathcal{A}$.
It remains to show that $V_u^k(s) \geq V^\ast(s)$ for all $s \in \mathcal{S}$.
Recall that the algorithm defines $\widetilde{V}_u^k(\cdot) = \max_a Q_u^k(\cdot, a)$.
Hence, for all $s \in \mathcal{S}$, we have
\begin{align*}
\widetilde{V}_u^k(s) - V^\ast(s)
&= \max_a Q_u^k(s, a) - V^\ast(s) \\
&\geq Q_u^k(s, a_s^\ast) - Q^\ast(s, a_s^\ast) \\
&\geq 0
\end{align*}
where we use the notation $a_s^\ast = \argmax_a Q^\ast(s, a)$ so that $V^\ast(s) = Q^\ast(s, a_s^\ast)$, establishing $\widetilde{V}_u^k(s) \geq V^\ast(s)$ for all $s \in \mathcal{S}$.
Hence, for all $s \in \mathcal{S}$, we have
\begin{align*}
V_u^k(s) &= \widetilde{V}_u^k(s) \wedge (\min_{s'} \widetilde{V}_u^k(s') + 2 \cdot \text{sp}(v^\ast)) \\
&\geq V^\ast(s) \wedge (\min_{s'} V^\ast(s') + 2 \cdot \text{sp}(v^\ast)) \\
&= V^\ast(s)
\end{align*}
where the last equality is due to $\text{sp}(V^\ast) \leq 2\cdot \text{sp}(v^\ast)$ by Lemma~\ref{lemma:discounted-approximation}.
By induction, the proof for the optimism results $V_u^k(s) \geq V^\ast(s)$ and $Q_u^k(s, a) \geq Q^\ast(s, a)$ for $u = T + 1, T, \dots, 1$ is complete.
\end{proof}

\subsection{Access to $\min_{s'} V^\ast(s')$} \label{section:access-to-min}

In this section, we demonstrate that using $\min_{s'} V^\ast(s')$ for clipping instead of $\min_{s'} V_u^k(s')$ at the clipping step in Algorithm~\ref{alg:lscvi-ucb} achieves the same regret bound.

Since the only part of the proof affected by the modified clipping is the optimism proof, we provide the proof for the optimism lemma only.

\begin{lemma}[Optimism]
Under the linear MDP setting, consider running a modified version of Algorithm~\ref{alg:lscvi-ucb} that uses $\min_{s'} V^\ast(s')$ for clipping instead of $\min_{s'} V_u^k(s')$ at the $u$th value iteration step in episode $k$. Using the same input $H = 2 \cdot \text{sp}(v^\ast)$ for the modified algorithm guarantees with probability at least $1 - \delta$ that
for all $u = 1, \dots, T$ and for all $(s, a) \in \mathcal{S} \times \mathcal{A}$,
$$
V_u^k(s) \geq V^\ast(s), \quad Q_u^k(s, a) \geq Q^\ast(s, a).
$$
\end{lemma}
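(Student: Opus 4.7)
The plan is to mirror the backward induction argument used in the proof of Lemma~\ref{lemma:optimism-linear}, changing only the handling of the clipping step. Fix an episode index $k$ and work under the high-probability event of Lemma~\ref{lemma:concentration-inequality}. I would prove by induction on $u = T+1, T, \dots, 1$ that $V_u^k(s) \geq V^\ast(s)$ for all $s$ and $Q_u^k(s, a) \geq Q^\ast(s, a)$ for all $(s, a)$. The base case $u = T+1$ is immediate from the initialization $V_{T+1}^k \equiv Q_{T+1}^k \equiv 1/(1-\gamma)$.

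For the inductive step, the argument that $Q_u^k \geq Q^\ast$ and $\widetilde{V}_u^k \geq V^\ast$ is unchanged from the original proof. Using the concentration bound of Lemma~\ref{lemma:concentration-regression} on $\langle \bm\varphi(s,a), \bm{w}_{u+1}^k - \bm{w}_{u+1}^k{}^\ast \rangle$, the bonus term $\beta \Vert \bm\varphi(s,a)\Vert_{\Lambda_k^{-1}}$ dominates the estimation error, so the approximate Bellman update lower-bounds $r(s,a) + \gamma [PV_{u+1}^k](s,a)$. The inductive hypothesis $V_{u+1}^k \geq V^\ast$ together with the Bellman optimality equation (and capping at $1/(1-\gamma)$) then yields $Q_u^k(s,a) \geq Q^\ast(s,a)$, and taking $\max_a$ gives $\widetilde{V}_u^k(s) \geq V^\ast(s)$.

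The only new ingredient is the clipping step. Under the modification, $V_u^k(s) = \widetilde{V}_u^k(s) \wedge (\min_{s'} V^\ast(s') + H)$. By Lemma~\ref{lemma:discounted-approximation}, $\text{sp}(V^\ast) \leq 2\cdot\text{sp}(v^\ast) = H$, so for every $s$, $V^\ast(s) \leq \min_{s'} V^\ast(s') + H$. Combined with $\widetilde{V}_u^k(s) \geq V^\ast(s)$, this immediately gives
$$
V_u^k(s) = \widetilde{V}_u^k(s) \wedge (\min_{s'} V^\ast(s') + H) \geq V^\ast(s) \wedge (\min_{s'} V^\ast(s') + H) = V^\ast(s),
$$
closing the induction.

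The step I expect to require the most care, though not truly a conceptual obstacle, is verifying that the concentration bound of Lemma~\ref{lemma:concentration-regression} still applies with the modified clipping, since its proof relies on $\text{sp}(V_u^k) \leq H$ to control the covering number and the self-normalized process. Unlike the original algorithm, where this bound is automatic from the clipping rule, here the lower bound $V_u^k(s) \geq \min_{s'} V^\ast(s')$ is needed, which in turn follows from the optimism being proved. The clean way to handle this is to carry both properties through the induction together: the inductive hypothesis at step $u+1$ supplies optimism for $V_{u+1}^k$ (hence $\text{sp}(V_{u+1}^k) \leq H$), which validates the concentration bound used to construct $Q_u^k$, and then the argument above extends optimism to step $u$. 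Once this interleaving is noted, every other piece of the proof proceeds identically to the original.
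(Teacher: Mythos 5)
Your proposal follows the same backward induction as the paper's proof in Appendix~\ref{section:access-to-min}: the $Q$-function and $\widetilde{V}$ steps are unchanged, and the clipping step is handled exactly as the paper does, via $\widetilde{V}_u^k(s) \geq V^\ast(s)$ together with $V^\ast(s) \leq \min_{s'} V^\ast(s') + H$ from $\text{sp}(V^\ast) \leq 2\,\text{sp}(v^\ast)$. Your closing observation---that the span bound $\text{sp}(V_{u+1}^k) \leq H$ needed for the concentration lemma now itself depends on optimism at step $u+1$, so the two properties must be carried through the induction together---is a genuine subtlety that the paper's write-up passes over silently, and your resolution of it is the right one.
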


\begin{proof}
We prove under the event $\mathcal{E}$ defined in Lemma~\ref{lemma:concentration-inequality}.
Fix any episode index $k > 1$.
We prove by induction on $u = T + 1, T, \dots, 1$.
The base case $u = T + 1$ is trivial since $V_{T + 1}^k(s) = \frac{1}{1 - \gamma} \geq V^\ast(s)$ for all $s \in \mathcal{S}$ and $Q_{T + 1}^k(s, a) = \frac{1}{1 - \gamma} \geq Q^\ast(s, a)$ for all $(s, a) \in \mathcal{S} \times \mathcal{A}$.

Now, suppose the optimism results $V_{u + 1}^k(s) \geq V^\ast(s)$ and $Q_{u + 1}^k(s, a) \geq Q^\ast(s, a)$ for all $(s, a) \in \mathcal{S} \times \mathcal{A}$ hold for some $u \in [T]$.
For convenience, we use the notation $\bar{V}_u^k(s) = V_u^k(s) - \min_{s'} V^\ast(s')$.
Using the concentration bounds of regression coefficients $\bm{w}_{u + 1}^k$ provided in Lemma~\ref{lemma:concentration-regression}, which holds under the event $\mathcal{E}$, we can lower bound $Q_u^k(s, a)$ as follows.
\begin{align*}
Q_u^k(s, a)
&= \left(r(s, a) + \gamma ( \langle \bm\varphi(s, a), \bm{w}_{u + 1}^k \rangle + \min_{s'} V^\ast (s') + \beta \Vert \bm\varphi(s, a) \Vert_{\Lambda_{k}^{-1}} \right) \wedge \frac{1}{1 - \gamma} \\
&\geq
\left( r(s, a) + \gamma (\langle \bm\varphi(s, a), \bm{w}_{u + 1}^k{}^\ast \rangle + \min_{s'} V^\ast(s') ) \right) \wedge \frac{1}{1 - \gamma} \\
&= (r(s, a) + \gamma [PV_{u + 1}^k](s, a)) \wedge \frac{1}{1 - \gamma} \\
&\geq (r(s, a) + \gamma [PV^\ast](s, a)) \wedge \frac{1}{1 - \gamma} \\
&= Q^\ast(s, a)
\end{align*}
where $\bm{w}_{u + 1}^k{}^\ast$ is a parameter that satisfies $\langle \bm\varphi(s, a), \bm{w}_{u + 1}^k{}^\ast \rangle = [P \bar{V}_{u + 1}^k] (s, a)$.
The second inequality is by the induction hypothesis $V_{u + 1}^k \geq V^\ast$ and the last equality is by the Bellman optimality equation for the discounted setting and the fact that $Q^\ast \leq \frac{1}{1 - \gamma}$.

We established $Q_u^k(s, a) \geq Q^\ast(s, a)$ for all $(s, a) \in \mathcal{S} \times \mathcal{A}$.
It remains to show that $V_u^k(s) \geq V^\ast(s)$ for all $s \in \mathcal{S}$.
Recall that the algorithm defines $\widetilde{V}_u^k(\cdot) = \max_a \widetilde{Q}_u^k(\cdot, a)$.
Hence, for all $s \in \mathcal{S}$, we have
\begin{align*}
\widetilde{V}_u^k(s) - V^\ast(s)
&= \max_a Q_u^k(s, a) - V^\ast(s) \\
&\geq Q_u^k(s, a_s^\ast) - Q^\ast(s, a_s^\ast) \\
&\geq 0
\end{align*}
where we use the notation $a_s^\ast = \argmax_a Q^\ast(s, a)$ so that $V^\ast(s) = Q^\ast(s, a_s^\ast)$, establishing $\widetilde{V}_u^k(s) \geq V^\ast(s)$ for all $s \in \mathcal{S}$.
Hence, for all $s \in \mathcal{S}$, we have
\begin{align*}
V_u^k(s) &= \widetilde{V}_u^k(s) \wedge (\min_{s'} V^\ast(s') + 2 \cdot \text{sp}(v^\ast)) \\
&\geq V^\ast(s) \wedge (\min_{s'} V^\ast(s') + 2 \cdot \text{sp}(v^\ast)) \\
&= V^\ast(s)
\end{align*}
where the last equality is due to $\text{sp}(V^\ast) \leq 2 \cdot \text{sp}(v^\ast)$ by Lemma~\ref{lemma:discounted-approximation}.
By induction, the proof for the optimism results $V_u^k(s) \geq V^\ast(s)$ and $Q_u^k(s, a) \geq Q^\ast(s, a)$ for $u = T + 1, T, \dots, 1$ is complete.
\end{proof}

\subsection{Difficulty of Bounding Regret with Computationally Efficient Clipping} \label{section:difficulty}

For computational efficiency, consider using $\min_{s' \in \{s_1, \dots, s_{t_k} \}} \widetilde{V}_u^k(s')$ for clipping instead of $\min_{s' \in \mathcal{S}} \widetilde{V}_u^k(s')$ when running value iteration in the beginning of episode $k$ for generating value functions for episode $k$.
Note that in the beginning of episode $k$, we only have only seen states $s_1, \dots, s_{t_k}$.
A natural clipping operation for enforcing the span to be bounded by $H$ is
$$
V_u^k(\cdot) = (\widetilde{V}_u^k(\cdot) \wedge (\min_{s' \in \{s_1, \dots, s_{t_k} \}} \widetilde{V}_u^k(s') + H)) \vee \min_{s' \in \{s_1, \dots, s_{t_k} \}} \widetilde{V}_u^k(s'),
$$
which is equivalent to clipping the value $\widetilde{V}_u^k(\cdot)$ to the interval $[\min_{s' \in \{ s_1, \dots, s_{t_k} \}} \widetilde{V}_u^k(s'), \min_{s' \in \{ s_1, \dots, s_{t_k} \}} \widetilde{V}_u^k(s') + H]$.
One difficulty in the analysis that arises from this change in clipping is when bounding the term $\sum_{t = t_k}^{t_{k + 1} - 1} (V^k_{t + 1}(s_{t + 1}) - Q^k_t(s_t, a_t))$.
To illustrate the difficulty, recall one of the steps in the proof presented in this paper:
\begin{align*}
V^k_{t + 1}(s_{t + 1})
&\leq \widetilde{V}^k_{t + 1}(s_{t + 1}) \\
&= \max_a Q^k_{t + 1}(s_{t + 1}, a) \\
&= Q^k_{t + 1}(s_{t + 1}, a_{t + 1}).
\end{align*}
The inequality $V_{t + 1}^k(s_{t + 1}) \leq \widetilde{V}_{t + 1}^k(s_{t + 1})$ may no longer hold when using the new computationally efficient version of the minimum.
Instead, we get a bound that looks like
$$
\begin{aligned}
V_{t + 1}^k(s_{t + 1})
&= (\widetilde{V}_{t + 1}^k(s_{t + 1}) \wedge (\min_{s' \in \{s_1, \dots, s_{t_k} \}} \widetilde{V}_{t + 1}^k(s') + H)) \vee \min_{s' \in \{s_1, \dots, s_{t_k} \}} \widetilde{V}_{t + 1}^k(s') \\
&\leq \widetilde{V}_{t + 1}^k(s_{t + 1}) + (\min_{s' \in \{ s_1, \dots, s_{t_k} \}} \widetilde{V}_{t + 1}^k(s') - \widetilde{V}_{t + 1}^k(s_{t + 1}))_+
\end{aligned}
$$
where $(\cdot)_+ = \max\{ 0, \cdot \}$.
It is unclear how to bound the sum of $(\min_{s' \in \{ s_1, \dots, s_{t_k} \}} \widetilde{V}_{t + 1}^k(s') - \widetilde{V}_{t + 1}^k(s_{t + 1}))_+$ over $t = 1, \dots, T$.
We conjecture that additional algorithmic technique is required to proceed with the bound.

\section{Computational Complexity} \label{section:computation}

\subsection{$\gamma$-LSCVI-UCB (Algorithm~\ref{alg:lscvi-ucb})}

Our algorithm $\gamma$-LSCVI-UCB runs in episodes and the number of episodes is bounded by $\mathcal{O}(d \log T)$.
In each episode, value iteration is run for at most $T$ iterations.
In each iteration $u$ in episode $k$, one evaluation of $\min_{s'} \widetilde{V}_u^k(s')$, $t_k$ evaluations of $V_u^k(\cdot)$ of $V_u^k(\cdot)$ and a multiplication of $d \times d$ matrix ($\Lambda_k^{-1}$) and a $d$-dimensional vector is required.

One evaluation of $\min_{s'} \widetilde{V}_u^k(s')$ involves $S$ evaluations of $\widetilde{V}_u^k(\cdot)$.
One evaluation of $\widetilde{V}_u^k(\cdot)$ involves $A$ evaluations of $Q_u^k(\cdot, \cdot)$.
One evaluation of $Q_u^k(\cdot, \cdot)$ requires $\mathcal{O}(d^2)$ operations.
In total, one evaluation of $\min_{s'} \widetilde{V}_u^k(s')$ requires $\mathcal{O}(d^2 SA)$ operations.

Now, computing $\bm{w}_u^k$ requires evaluating $V_{u + 1}^k(\cdot)$ for at most $T$ states, which requires $\mathcal{O}(d^2 AT)$ operations; adding at most $T$ $d$-dimensional vectors, which requires $Td$ operations; and multiplying by $d \times d$ matrix, which requires $d^2$ operations.

In total, computing $\bm{w}_u^k$ requires $\mathcal{O}(d^2 A(S + T))$ operations.
Hence, running at most $T$ value iterations in each episode requires $\mathcal{O}(d^2 A(S+T)T)$ operations, and since there are at most $\mathcal{O}(d \log T)$ episodes, total operations for the algorithm is $\widetilde{\mathcal{O}}(d^3 A(S + T)T)$, which is polynomial in $d, S, A, T$.

\subsection{FOPO \parencite{wei2021learning}}

In this section, we provide time complexity analysis of the FOPO algorithm \parencite{wei2021learning}.
The algorithm is shown in Algorithm~\ref{alg:fopo}.

\begin{algorithm*}[t]
\KwInput{$\delta \in (0, 1)$, $\lambda = 1$, $\beta = 20(2 + \text{sp}(v^\ast) ) d \sqrt{\log(T / \delta)}$}
\KwInit{$\Lambda_1 = \lambda I$}
Receive initial state $s_1$. \\
\For{time step $t = 1, \dots, T$}{
Solve the following optimization problem to get $w_t$:
\begin{align*}
    \max_{w_t, b_t \in \mathbb{R}^d, J_t \in \mathbb{R}} ~~~&J_t \\
    \text{subject to}~~~&
    w_t = \Lambda_t^{-1} \sum_{\tau = 1}^{t - 1} ( \bm\varphi(s_\tau, a_\tau) (r(s_\tau, a_\tau) - J_t + \max_a \langle \bm\varphi(s_{\tau + 1}, a), w_t \rangle) + b_t) \\
    &\Vert b_t \Vert_{\Lambda_t} \leq \beta \\
    &\Vert w_t \Vert \leq (2 + \text{sp}(v^\ast)) \sqrt{d}
\end{align*}
}
\caption{FOPO}
\label{alg:fopo}
\end{algorithm*}

The bottleneck of the algorithm is solving the optimization problem.
The algorithm needs to solve the optimization problem $\mathcal{O}(d \log T)$ times since the number of episodes is $\mathcal{O}(d \log T)$.
Since there is no efficient way of solving the fixed point optimization problem to the best of our knowledge, we provide an analysis of the time complexity of a brute force approach for approximately solving the problem.
The brute force approach does a grid search on the optimization variables $w_t$, $b_t$, and $J_t$.

Consider the following grids:
\begin{align*}
G_w(\Delta) &= \{ \Delta \cdot (k_1, \dots, k_d) : \pm k_1, \dots, \pm k_d \in [\lfloor (2 + \text{sp}(v^\ast)) / \Delta \rfloor] \} \\
G_b(\Delta) &= \{ \Delta \cdot (k_1, \dots, k_d) : \pm k_1, \dots, \pm k_d \in [\lfloor T \beta / (\sqrt{d} \Delta) \rfloor] \} \\
G_J(\Delta) &= \{ \Delta k : k \in [\lfloor 1 / \Delta \rfloor \}.
\end{align*}
The grids are designed such that the constraints $\Vert w \Vert_2 \leq (2 + \text{sp}(v^\ast)) \sqrt{d}$, $\Vert b \Vert_{\Lambda_t} \leq \beta$ are satisfied for all $w \in G_w(\Delta)$ and $b \in G_b(\Delta)$ and $J \in [0, 1]$ for all $J \in G_J(\Delta)$.
Also, $G_w(\Delta)$, $G_b(\Delta)$ and $G_J(\Delta)$ are $\Delta$-covering with respect to $\Vert \cdot \Vert_\infty$ of $\mathcal{B}_d((2 + \text{sp}(v^\ast)) \sqrt{d})$, $\mathcal{B}_d(T \beta)$ and $[0, 1]$ respectively, where $\mathcal{B}_d(r)$ is a $d$-dimensional ball of radius $r$.

Denote by $\Delta_t(w, b, J)$ the difference between the left hand side and the right hand side of the fixed point equation at time step $t$, i.e.,
$$
\Delta_t(w, b, J) = w - \Lambda_t^{-1} \sum_{\tau = 1}^{t - 1} ( \bm\varphi(s_\tau, a_\tau) (r(s_\tau, a_\tau) - J + \max_a \langle \bm\varphi(s_{\tau + 1}, a), w \rangle) + b).
$$
Let $w_t^\ast, b_t^\ast, J_t^\ast$ be the solution to the fixed point problem, which may not lie in the grids.
Then, $\Delta_t(w_t^\ast, b_t^\ast, J_t^\ast) = 0$.
Let $\widetilde{w}_t^\ast \in G_w(\Delta)$, $\widetilde{b}_t^\ast \in G_b(\Delta)$ and $\widetilde{J}^\ast \in G_J(\Delta)$ be the grid points closest to $w_t^\ast, b_t^\ast, J_t^\ast$, respectively.
Then,
\begin{align*}
&\Vert \Delta_t(\widetilde{w}_t^\ast, \widetilde{b}_t^\ast, \widetilde{J}^\ast) \Vert_2 \\
&= \Vert \Delta_t(\widetilde{w}_t^\ast, \widetilde{b}_t^\ast, \widetilde{J}^\ast) - \Delta_t(w_t^\ast, b_t^\ast, J_t^\ast) \Vert_2 \\
&= \Vert \widetilde{w}_t^\ast - w_t^\ast - \Lambda_t^{-1} \sum_{\tau = 1}^{t - 1} ( \bm\varphi(s_\tau, a_\tau)(J_t^\ast - \widetilde{J}_t^\ast + \max_a \langle \bm\varphi(s_{\tau + 1}, a), \widetilde{w}_t^\ast \rangle - \max_a \langle \bm\varphi(s_{\tau + 1}, a), w_t^\ast \rangle) + \widetilde{b}_t^\ast - b_t^\ast) \Vert_2 \\
&\leq
\Vert \widetilde{w}_t^\ast - w_t^\ast \Vert_2 + \sum_{\tau = 1}^{t - 1} \vert J_t^\ast - \widetilde{J}_t^\ast \vert \Vert \Lambda_t^{-1} \Vert_2 \Vert \bm\varphi(s_\tau, a_\tau) \Vert_2
+ \sum_{\tau = 1}^{t - 1} \max_a \vert \langle \bm\varphi(s_{\tau + 1}, a), \widetilde{w}_t^\ast - w_t^\ast \rangle \vert \Vert \Lambda_t^{-1} \Vert_2 \Vert \bm\varphi(s_\tau, a_\tau) \Vert_2 \\
&\hspace{30mm}+ \sum_{\tau = 1}^{t - 1} \Vert \widetilde{b}_t^\ast - b_t^\ast \Vert_2 \Vert \Lambda_t^{-1} \Vert_2 \\
&\leq \Delta \sqrt{d} + T \Delta + T \Delta \sqrt{d} + T \Delta \sqrt{d} \\
&\leq \mathcal{O}(T \sqrt{d} \Delta).
\end{align*}
Hence, the solution $\widetilde{w}_t, \widetilde{b}_t, \widetilde{J}_t$ obtained by the grid search satisfies
$$
\Vert \Delta_t(\widetilde{w}_t, \widetilde{b}_t, \widetilde{J}_t) \Vert_2 \leq \Vert \Delta_t (\widetilde{w}_t^\ast, \widetilde{b}_t^\ast, \widetilde{J}_t^\ast) \Vert_2 \leq \mathcal{O}(T \sqrt{d} \Delta).
$$
Inspecting the proof in Appendix C.1 in \textcite{wei2021learning}, it can be seen that the additional regret incurred by approximating the solution of the fixed point problem is
$$
\sum_{t = 1}^T \langle \bm\varphi(s_t, a_t), \Delta_t(\widetilde{w}_t, \widetilde{b}_t, \widetilde{J}_t) \rangle
\leq T^2 \sqrt{d} \Delta.
$$
Choosing the grid size $\Delta$ to be $\mathcal{O}(1 / \sqrt{T^3})$ guarantees the additional regret does not affect the order of the total regret.
Since the grid search requires $\mathcal{O}(((1 + \text{sp}(v^\ast)) / \Delta)^d \times (T \beta / (\sqrt{d} \Delta))^d \times (1 / \Delta)) = \widetilde{\mathcal{O}}((T(1 + \text{sp}(v^\ast))^2 \sqrt{d})^d (1 / \Delta)^{2d + 1})$ evaluations of the fixed point equation, the grid search method requires $\widetilde{\mathcal{O}}(T^{4d + 3/2} (1 + \text{sp}(v^\ast))^{2d} d^{d/2})$ evaluations, and each evaluation requires $\mathcal{O}(d^2 + T)$ operations.
In total, FOPO can be run using the brute force grid search method with time complexity $\widetilde{\mathcal{O}}(T^{4d + 5 / 2} (1 + \text{sp}(v^\ast))^{2d} d^{d / 2 + 3})$, which is exponential in $d$.

\subsection{LOOP \parencite{he2024sample}}

The LOOP algorithm \parencite{he2024sample} solves the following optimization problem every episode:

\begin{align*}
\max_{w_t \in \mathbb{B}_d(R), J_t \in [0, 1]}~~& J_t \\
\text{subject to}~~& \sum_{\tau = 1}^{t - 1} (\langle \bm\varphi(s_\tau, a_\tau), \bm{w}_t \rangle - r(s_\tau, a_\tau) - \max_a \langle \bm\varphi(s_{\tau + 1}, a), \bm{w}_t \rangle + J_t)^2 \\
&\hspace{15mm} \min_{w' \in \mathbb{B}_d(R), J' \in [0, 1]} \sum_{\tau = 1}^{t - 1} (\langle \bm\varphi(s_\tau, a_\tau), \bm{w}_t \rangle - r(s_\tau, a_\tau) - \max_a \langle \bm\varphi(s_{\tau + 1}, a), \bm{w}' \rangle + J')^2 \leq \beta
\end{align*}
where $R = \frac{1}{2} \text{sp}(v^\ast) \sqrt{d}$ and $\beta = \widetilde{\mathcal{O}}(\text{sp}(v^\ast) d)$.
To the best of our knowledge, there is no computationally efficient way of solving this problem.
Solving the problem by grid search involves looping over $\text{poly}(T^d)$ grid points for $\bm{w}_t$ and $J_t$. Also, checking the constraint for each grid point requires $\text{poly}(T, d, A)$ operations.
Hence, the total time complexity of the algorithm is $\text{poly}(T^d, d, A)$.

\section{Other Technical Lemmas}

\begin{lemma} [Lemma D.1 in \textcite{jin2020provably}] \label{lemma:fixed-bonus-sum}
Let $\Lambda_t = \sum_{i = 1}^t \bm\phi_i \bm\phi_i^T + \lambda I$ where $\bm\phi_i \in \mathbb{R}^d$ and $\lambda > 0$. Then,
$$
\sum_{i = 1}^t \bm\phi_i^T \Lambda_t^{-1} \bm\phi_i \leq d.
$$
\end{lemma}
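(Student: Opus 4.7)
The plan is to reduce the sum of scalar quadratic forms to a single trace expression that telescopes cleanly against the definition of $\Lambda_t$. First, I would invoke the cyclic property of the trace to rewrite each summand as $\bm\phi_i^T \Lambda_t^{-1} \bm\phi_i = \mathrm{tr}(\Lambda_t^{-1} \bm\phi_i \bm\phi_i^T)$, which is valid since each scalar equals its own trace. Then by linearity of the trace, the full sum collapses to $\mathrm{tr}\bigl(\Lambda_t^{-1} \sum_{i=1}^t \bm\phi_i \bm\phi_i^T\bigr)$.

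Next, I would substitute the identity $\sum_{i=1}^t \bm\phi_i \bm\phi_i^T = \Lambda_t - \lambda I$, which is immediate from the definition of $\Lambda_t$. This turns the expression into $\mathrm{tr}(\Lambda_t^{-1}(\Lambda_t - \lambda I)) = \mathrm{tr}(I_d) - \lambda \, \mathrm{tr}(\Lambda_t^{-1}) = d - \lambda \, \mathrm{tr}(\Lambda_t^{-1})$.

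Finally, I would argue that $\Lambda_t \succeq \lambda I \succ 0$, so $\Lambda_t^{-1}$ is positive definite and therefore has strictly positive trace. Combined with $\lambda > 0$, the last expression is at most $d$, which gives the desired bound.

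The main obstacle here is essentially nonexistent, as the statement is a short algebraic identity once one recognizes the trace trick. The only conceptual step is seeing that converting the scalar quadratic forms into traces allows the resulting matrix to be matched with $\Lambda_t - \lambda I$ from the definition of the Gram-plus-regularizer matrix.
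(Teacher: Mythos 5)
Your proof is correct and is essentially the standard argument for this lemma (the paper itself states it without proof, citing Lemma D.1 of Jin et al., whose proof is exactly this trace identity $\sum_i \bm\phi_i^T \Lambda_t^{-1}\bm\phi_i = \mathrm{tr}(\Lambda_t^{-1}(\Lambda_t - \lambda I)) = d - \lambda\,\mathrm{tr}(\Lambda_t^{-1}) \leq d$). No gaps.
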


\begin{lemma}[Lemma 11 in \textcite{abbasi2011improved}] \label{lemma:bonus-term-linear}
Let $\{ \bm\phi_t \}_{t \geq 1}$ be a bounded sequence in $\mathbb{R}^d$ with $\Vert \bm\phi_t \Vert_2 \leq 1$ for all $t \geq 1$.
Let $\Lambda_0 = I$ and $\Lambda_t = \sum_{i = 1}^t \bm\phi_i \bm\phi_i^T + I$ for $t \geq 1$. Then,
$$
\sum_{i = 1}^t \bm\phi_i^T \Lambda_{i - 1}^{-1} \bm\phi_i \leq 2 \log \det(\Lambda_t) \leq 2 d \log(1 + t).
$$
\end{lemma}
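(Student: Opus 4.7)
The plan is to use the classic elliptic potential argument. First I would establish a telescoping identity for $\det(\Lambda_t)$. Applying the matrix determinant lemma to the rank-one update yields
$$
\det(\Lambda_i) = \det(\Lambda_{i-1} + \bm\phi_i \bm\phi_i^T) = \det(\Lambda_{i-1}) \bigl(1 + \bm\phi_i^T \Lambda_{i-1}^{-1} \bm\phi_i\bigr).
$$
Taking logs and summing from $i=1$ to $t$ telescopes to
$$
\log \det(\Lambda_t) - \log \det(\Lambda_0) = \sum_{i=1}^t \log\bigl(1 + \bm\phi_i^T \Lambda_{i-1}^{-1} \bm\phi_i\bigr),
$$
and since $\Lambda_0 = I$ we have $\log \det(\Lambda_0) = 0$, so the right-hand sum equals $\log \det(\Lambda_t)$.

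Next I would invoke the elementary inequality $x \leq 2 \log(1+x)$ valid for $x \in [0,1]$ (both sides vanish at $x=0$ and the derivative comparison $1 \leq 2/(1+x)$ holds on $[0,1]$). To apply this termwise I need $\bm\phi_i^T \Lambda_{i-1}^{-1} \bm\phi_i \leq 1$; this follows because $\Lambda_{i-1} \succeq \Lambda_0 = I$ implies $\Lambda_{i-1}^{-1} \preceq I$, so $\bm\phi_i^T \Lambda_{i-1}^{-1} \bm\phi_i \leq \Vert \bm\phi_i \Vert_2^2 \leq 1$. Combining with the telescoped identity gives the first inequality:
$$
\sum_{i=1}^t \bm\phi_i^T \Lambda_{i-1}^{-1} \bm\phi_i \leq 2 \sum_{i=1}^t \log\bigl(1 + \bm\phi_i^T \Lambda_{i-1}^{-1} \bm\phi_i\bigr) = 2 \log \det(\Lambda_t).
$$

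For the second inequality, I would bound $\det(\Lambda_t)$ by AM--GM on its eigenvalues $\lambda_1, \dots, \lambda_d$: $\det(\Lambda_t) = \prod_j \lambda_j \leq \bigl(\tfrac{1}{d}\sum_j \lambda_j\bigr)^d = (\text{tr}(\Lambda_t)/d)^d$. Then $\text{tr}(\Lambda_t) = \text{tr}(I) + \sum_{i=1}^t \Vert \bm\phi_i \Vert_2^2 \leq d + t$, so $\log \det(\Lambda_t) \leq d \log(1 + t/d) \leq d \log(1+t)$.

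This is a textbook-style argument with no genuine obstacle; the only step requiring a bit of care is verifying $\bm\phi_i^T \Lambda_{i-1}^{-1} \bm\phi_i \leq 1$, which crucially uses both the regularizer $\Lambda_0 = I$ (to keep $\Lambda_{i-1}^{-1} \preceq I$ for all $i$) and the unit-norm assumption on $\bm\phi_i$. Without the regularizer, one would need a separate assumption controlling $\Vert \Lambda_{i-1}^{-1} \Vert_{\mathrm{op}}$ and carry the corresponding factor into the elementary inequality.
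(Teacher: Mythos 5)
Your proof is correct and is exactly the standard elliptic potential argument; the paper states this lemma without proof, citing Lemma 11 of \textcite{abbasi2011improved}, whose proof proceeds in the same way (rank-one determinant update, telescoping, the inequality $x \leq 2\log(1+x)$ on $[0,1]$, and AM--GM on the eigenvalues). Your care in checking $\bm\phi_i^T \Lambda_{i-1}^{-1} \bm\phi_i \leq 1$ via $\Lambda_{i-1} \succeq I$ and $\Vert \bm\phi_i \Vert_2 \leq 1$ is precisely the step that makes the constant $2$ work.
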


\begin{lemma}[Lemma 12 in \textcite{abbasi2011improved}] \label{lemma:doubling}
Suppose $A, B \in \mathbb{R}^{d \times d}$ are two positive definite matrices satisfying $A \succeq B$.
Then, for any $\bm{x} \in \mathbb{R}^d$, we have
$$
\Vert \bm{x} \Vert_A \leq \Vert \bm{x} \Vert_B \sqrt{\frac{\det(A)}{\det(B)}}.
$$
\end{lemma}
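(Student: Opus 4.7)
The plan is to reduce the inequality to the case $B=I$ by a symmetric change of variables, and then to finish using the spectral decomposition together with the simple observation that the maximum of a set of reals in $[1,\infty)$ is bounded above by their product.

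First I would set $C = B^{-1/2} A B^{-1/2}$ and $\bm{y} = B^{1/2} \bm{x}$. Since $A \succeq B \succ 0$, conjugating by $B^{-1/2}$ preserves the Loewner order, so $C \succeq I$. A direct computation gives
\begin{align*}
\Vert \bm{x} \Vert_A^2 &= \bm{x}^T A \bm{x} = \bm{y}^T C \bm{y}, \\
\Vert \bm{x} \Vert_B^2 &= \bm{x}^T B \bm{x} = \bm{y}^T \bm{y},
\end{align*}
and $\det(A) / \det(B) = \det(B^{-1/2} A B^{-1/2}) = \det(C)$. Hence the desired inequality is equivalent to showing $\bm{y}^T C \bm{y} \leq \det(C) \cdot \bm{y}^T \bm{y}$ for every $\bm{y} \in \mathbb{R}^d$ and every $C \succeq I$.

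Next I would diagonalize $C$ via an orthonormal eigendecomposition, writing $C = U \operatorname{diag}(\lambda_1, \dots, \lambda_d) U^T$ with $\lambda_i \geq 1$ for all $i$ (since $C \succeq I$), and expanding $\bm{y}$ in the eigenbasis as $\bm{z} = U^T \bm{y}$. Then
\begin{align*}
\bm{y}^T C \bm{y} = \sum_{i=1}^d \lambda_i z_i^2 \leq \Bigl(\max_{i} \lambda_i\Bigr) \sum_{i=1}^d z_i^2 \leq \Bigl(\prod_{i=1}^d \lambda_i \Bigr) \sum_{i=1}^d z_i^2 = \det(C) \cdot \bm{y}^T \bm{y},
\end{align*}
where the key inequality $\max_i \lambda_i \leq \prod_i \lambda_i$ uses exactly that each $\lambda_i \geq 1$. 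Taking square roots and substituting back yields the claim.

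There is no real obstacle here; the only place the hypothesis $A \succeq B$ enters nontrivially is in guaranteeing $\lambda_i \geq 1$ after the reduction, which is precisely what makes the elementary bound $\max_i \lambda_i \leq \prod_i \lambda_i$ go through. The change-of-variables step is routine once one notes that $B^{1/2}$ exists and is invertible by positive definiteness of $B$.
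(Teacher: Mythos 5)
Your proof is correct. The paper does not prove this lemma itself---it quotes it directly as Lemma 12 of \textcite{abbasi2011improved}---and your argument (reduce via $C = B^{-1/2} A B^{-1/2} \succeq I$ and $\bm{y} = B^{1/2}\bm{x}$, then bound $\lambda_{\max}(C)$ by $\prod_i \lambda_i = \det(C)$ using that every eigenvalue is at least $1$) is essentially the same as the standard proof given in that reference, so there is nothing to add.
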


\begin{lemma}[Bound on number of episodes] \label{lemma:num-episode}
 The number of episodes $K$ in Algorithm \ref{alg:lscvi-ucb} is bounded by 

 $$
 K \le d \log_2\left(1+\frac{T}{\lambda d}\right).
 $$
\end{lemma}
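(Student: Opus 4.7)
The plan is a standard determinant-doubling argument. A new episode is triggered in line~\ref{alg-line:episode} exactly when $2\det(\Lambda_k) < \det(\bar\Lambda_t)$, at which point $\Lambda_{k+1}$ is set to $\bar\Lambda_t$. Consequently, every episode transition satisfies $\det(\Lambda_{k+1}) > 2\det(\Lambda_k)$, and iterating this relation from $k=1$ to $k=K$ gives
$$
\det(\Lambda_K) \;>\; 2^{K-1}\det(\Lambda_1) \;=\; 2^{K-1}\lambda^d,
$$
since $\Lambda_1=\lambda I$.

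Next, I would upper bound $\det(\Lambda_K)$ by controlling the cumulative covariance at the end of the run. Because $\Lambda_K \preceq \bar\Lambda_T$, it suffices to bound $\det(\bar\Lambda_T)$. Using $\|\bm\varphi(s,a)\|_2 \leq 1$ from the boundedness condition \eqref{eqn:boundedness}, we have $\mathrm{tr}(\bar\Lambda_T) = d\lambda + \sum_{t=1}^T \|\bm\varphi(s_t,a_t)\|_2^2 \leq d\lambda + T$. By the AM--GM inequality applied to the (nonnegative) eigenvalues of $\bar\Lambda_T$,
$$
\det(\bar\Lambda_T) \;\leq\; \left(\frac{\mathrm{tr}(\bar\Lambda_T)}{d}\right)^{d} \;\leq\; \left(\lambda + \frac{T}{d}\right)^{d} \;=\; \lambda^d\left(1+\frac{T}{\lambda d}\right)^{d}.
$$

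Combining the two bounds yields $2^{K-1}\lambda^d < \lambda^d(1+T/(\lambda d))^d$, and taking $\log_2$ on both sides produces $K - 1 < d\log_2(1+T/(\lambda d))$, which gives the stated bound. The only substantive step is the AM--GM / trace bound on $\det(\bar\Lambda_T)$; the rest is bookkeeping from the episode-triggering condition, so I do not anticipate any real obstacle.
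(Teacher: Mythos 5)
Your proof is correct and takes essentially the same route as the paper: the determinant-doubling recursion from the episode trigger, combined with the trace bound and AM--GM to control $\det(\bar\Lambda_T)$. The only difference is bookkeeping at the last step --- you count $K-1$ doublings and obtain $K < 1 + d\log_2(1+T/(\lambda d))$, while the paper asserts $\det(\Lambda_K)\ge 2^K\lambda^d$ to land exactly on the stated constant; this off-by-one is immaterial.
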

\begin{proof}

Let $\{\Lambda_k \}_{k = 1}^K$ and $\{ \bar\Lambda_t \}_{t = 0}^T$ be as defined in Algorithm \ref{alg:lscvi-ucb}. Note that
$$
\text{tr}(\bar\Lambda_T)=\text{tr}(\lambda I_d)+\sum_{t=1}^T \text{tr}(\varphi(s_t,a_t)\varphi(s_t,a_t)^T)=\lambda d+\sum_{t=1}^T\Vert\varphi(s_t,a_t)\Vert_2^2\le\lambda d+T.
$$
By the AM–GM inequality, we have 
$$
\det(\bar\Lambda_T)\le \left(\frac{\text{tr}(\bar\Lambda_T)}{d}\right)^d\le\left(\frac{\lambda d+T}{d}\right)^d.
$$

Since we update $\Lambda_k$ only when $\det (\bar\Lambda_t)$ doubles, $\text{det}(\bar\Lambda_T)\ge \text{det}(\Lambda_K)\ge\text{det}(\Lambda_1)\cdot 2^K=\lambda^d\cdot 2^K$. Thus, we obtain 
$$
K\le d\log_2\left(1+\frac{T}{\lambda d}\right)
$$
as desired.

\end{proof}

\section{Additional Related Work} \label{section:additional-related-work}

\begin{table*}[t]
\caption{Comparison of algorithms for infinite-horizon average-reward RL in tabular setting}
\label{table:comparison-tabular}
\centering
\begin{tabular}{ccccc}
 \toprule
 Algorithm & Regret $\widetilde{\mathcal{O}}(\cdot)$ & Assumption & Computation \\
 \midrule
 UCRL2 \parencite{auer2008near} & $DS \sqrt{A T}$ & Bounded diameter & Efficient \\
 REGAL \parencite{bartlett2009regal} & $\text{sp}(v^\ast) \sqrt{SAT}$ & Weakly communicating & Inefficient \\
 PSRL \parencite{ouyang2017learning} & $\text{sp}(v^\ast) S\sqrt{AT}$ & Weakly communicating & Efficient \\
 OSP \parencite{ortner2020regret} & $\sqrt{t_{\text{mix}} SAT}$ & Ergodic & Inefficient \\
 SCAL \parencite{fruit2018efficient} & $\text{sp}(v^\ast) S \sqrt{AT}$ & Weakly communicating & Efficient \\
 UCRL2B \parencite{fruit2020improved} & $S\sqrt{DAT}$ & Bounded diameter & Efficient \\
 EBF \parencite{zhang2019regret} & $\sqrt{\text{sp}(v^\ast) SAT}$ & Weakly communicating & Inefficient \\
 \textbf{$\gamma$-UCB-CVI (Ours)} & $\text{sp}(v^\ast)S\sqrt{AT}$ & Bellman optimality equation & Efficient \\
 \midrule
 Optimistic Q-learning \parencite{wei2020model} & $\text{sp}(v^\ast) (SA)^{\frac{1}{3}} T^{\frac{2}{3}}$ & Weakly communicating & Efficient \\
  MDP-OOMD \parencite{wei2020model} & $\sqrt{t_{\text{mix}}^3 \eta AT}$ & Ergodic & Efficient \\
  UCB-AVG \parencite{zhang2023sharper} & $\text{sp}(v^\ast) S^5 A^2  \sqrt{T}$ & Weakly communicating & Efficient \\
  \midrule
  Lower bound \parencite{auer2008near} & $\Omega(\sqrt{DSAT})$ & \\
 \bottomrule
\end{tabular}
\end{table*}

\paragraph{Infinite-Horizon Average-Reward Setting with Tabular MDP}
We focus on works on infinite-horizon average-reward setting with tabular MDP that assume either the MDP is weakly communicating or the MDP has a bounded diameter.
For other works and comparisons, see Table~\ref{table:comparison-tabular}.
Seminal work by \textcite{auer2008near} on infinite-horizon average-reward setting in tabular MDPs laid the foundation for the problem.
Their model-based algorithm called UCRL2 constructs a confidence set on the transition model and run an extended value iteration that involves choosing the optimistic model in the confidence set each iteration.
They achieve a regret bound of $\mathcal{O}(DS \sqrt{AT})$ where $D$ is the diameter of the true MDP.
\textcite{bartlett2009regal} improve the regret bound of UCRL2 by restricting the confidence set of the model to only include models such that the span of the induced optimal value function is bounded. Their algorithm, called REGAL, achieves a regret bound that scales with the span of the optimal value function $\text{sp}(v^\ast)$ instead of the diameter of the MDP. However, REGAL is computationally inefficient.
\textcite{fruit2018efficient} propose a model-based algorithm called SCAL, which is a computationally efficient version of REGAL.
\textcite{zhang2019regret} propose a model-based algorithm called EBF that achieves the minimax optimal regret of $\mathcal{O}(\sqrt{\text{sp}(v^\ast) SAT})$ by maintaining a tighter model confidence set by making use of the estimate for the optimal bias function.
However, their algorithm is computationally inefficient.
There is another line of work on model-free algorithms for this setting.
\textcite{wei2020model} introduce a model-free Q-learning-based algorithm called Optimistic Q-learning.
Their algorithm is a reduction to the discounted setting.
Although model-free, their algorithm has a suboptimal regret of $\mathcal{O}(T^{2/3})$.
Recently, \textcite{zhang2023sharper} introduce a Q-learning-based algorithm called UCB-AVG that achieves regret bound of $\mathcal{O}(\sqrt{T})$.
Their algorithm, which is also a reduction to the discounted setting, is the first model-free to achieve the order optimal regret bound.
Their main idea is to use the optimal bias function estimate to increase statistical efficiency.
\textcite{agrawal2024optimistic} introduces a model-free Q-learning-based algorithm and provides a unified view of episodic setting and infinite-horizon average-reward setting. However, their algorithm requires additional assumption of the existence of a state with bounded hitting time.

\paragraph{Infinite-Horizon Average-Reward Setting with General Function Approximation}

\textcite{he2024sample} study infinite-horizon average reward with general function approximation.
They propose an algorithm called LOOP which is a modified version of the fitted Q-iteration with optimistic planning and lazy policy updates.
Although their algorithm when adapted to the linear MDP set up achieves $\mathcal{O}(\sqrt{\text{sp}(v^\ast)^3 d^3 T})$, which is comparable to our work, their algorithm is computationally inefficient.

\paragraph{Infinite-Horizon Average-Reward Setting with Linear MDPs}
There is another work by \textcite{ghosh2023achieving} on the infinite-horizon average-reward setting with linear MDPs.
They study a more general constrained MDP setting where the goal is to maximize average reward while minimizing the average cost.
They achieve $\widetilde{\mathcal{O}}(\text{sp}(v^\ast) \sqrt{d^3 T})$ regret, same as our work, but they make an additional assumption that the optimal policy is in a smooth softmax policy class.
Also, their algorithm requires solving an intractable optimization problem.

\paragraph{Reduction of Average-Reward to Finite-Horizon Episodic Setting}
There are works that reduce the average-reward setting to the finite-horizon episodic setting.
However, in general, this reduction can only give regret bound of $\mathcal{O}(T^{2/3})$.
\textcite{chen2022learning} study the constrained tabular MDP setting and propose an algorithm that uses the finite-horizon reduction. Their algorithm gives regret bound of $\mathcal{O}(T^{2/3})$.
\textcite{wei2021learning} study the linear MDP setting and propose a finite-horizon reduction that uses the LSVI-UCB \parencite{jin2020provably}.
Their reduction gives regret bound of $\mathcal{O}(T^{2/3})$.

\paragraph{Online RL in Infinite-Horizon Discounted Setting}
The literature on online RL in the infinite-horizon discounted setting is sparse because there is no natural notion of regret in this setting without additional assumption on the structure of the MDP.
The seminal paper by \textcite{liu2020regret} introduce a notion of regret in the discounted setting and propose a Q-learning-based algorithm for the tabular setting and provides a regret bound. 
\textcite{he2021nearly} propose a model-based algorithm that adapts UCBVI \parencite{azar2017minimax} to the discounted setting and achieve a nearly minimax optimal regret bound.
\textcite{ji2024regret} propose a model-free algorithm with nearly minimax optimal regret bound.

\end{document}